\newtheorem{mydef}{Definition}
\newtheorem{thm}{Theorem}
\newtheorem{remark}{Remark}
\newtheorem*{remark*}{Remark}
\newtheorem{lemma}{Lemma}
\newtheorem{prop}{Proposition}
\newcommand*\diff{\mathop{}\!\mathrm{d}}
\definecolor{darkred}{RGB}{150,0,0}
\definecolor{darkgreen}{RGB}{0,150,0}
\definecolor{darkblue}{RGB}{0,0,200}
\numberwithin{equation}{section}
\title{{\huge GANs May Have No Nash Equilibria}}
\author{\\ Farzan Farnia, Asuman Ozdaglar\\
\{farnia,asuman\}@mit.edu\\\\
Massachusetts Institute of Technology\\}
\date{}
\begin{document}
\maketitle




\begin{abstract}
Generative adversarial networks (GANs) represent a zero-sum game between two machine players, a generator and a discriminator, designed to learn the distribution of data. While GANs have achieved state-of-the-art performance in several benchmark learning tasks, GAN minimax optimization still poses great theoretical and empirical challenges. GANs trained using first-order optimization methods commonly fail to converge to a stable solution where the players cannot improve their objective, i.e., the Nash equilibrium of the underlying game. Such issues raise the question of the existence of Nash equilibrium solutions in the GAN zero-sum game. In this work, we show through several theoretical and numerical results that indeed GAN zero-sum games may not have any local Nash equilibria. To characterize an equilibrium notion applicable to GANs, we consider the equilibrium of a new zero-sum game with an objective function given by a proximal operator applied to the original objective, a solution we call the \emph{proximal equilibrium}. Unlike the Nash equilibrium, the proximal equilibrium captures the sequential nature of GANs, in which the generator moves first followed by the discriminator. We prove that the optimal generative model in Wasserstein GAN problems provides a proximal equilibrium. Inspired by these results, we propose a new approach, which we call \emph{proximal training}, for solving GAN problems. We discuss several numerical experiments demonstrating the existence of proximal equilibrium solutions in GAN minimax problems.
\end{abstract}

\section{Introduction}

Since their introduction in \cite{goodfellow2014generative}, generative adversarial networks (GANs) have gained great success in many tasks of learning the distribution of observed samples. Unlike the traditional approaches to distribution learning, GANs view the learning problem as a zero-sum game between the following two players: 1) generator $G$ aiming to generate real-like samples from a random noise input, 2) discriminator $D$ trying to distinguish $G$'s generated samples from real training data. This game is commonly formulated through a minimax optimization problem as follows:
\begin{equation}\label{Eq: Intro_GAN_1}
    \min_{G\in \mathcal{G}}\; \max_{D \in \mathcal{D}}\; V(G,D).
\end{equation}
Here, $\mathcal{G}$ and $\mathcal{D}$ are respectively the generator and discriminator function sets, commonly chosen as two deep neural nets, and $V(G,D)$ denotes the minimax objective for generator $G$ and discriminator $D$ capturing how dissimilar the generated samples and training data are. 

GAN optimization problems are commonly solved by alternating gradient methods, which under proper regularization have resulted in state-of-the-art generative models for various benchmark datasets. However, GAN minimax optimization has led to several theoretical and empirical challenges in the machine learning literature. Training GANs is widely known as a challenging optimization task requiring an exhaustive hyper-parameter and architecture search and demonstrating an unstable behavior. While a few regularization schemes have achieved empirical success in training GANs \cite{salimans2016improved,arjovsky2017wasserstein,gulrajani2017improved,miyato2018spectral}, still little is known about the conditions under which GAN minimax optimization can be successfully solved by first-order optimization methods. 

To understand the minimax optimization in GANs, one needs to first answer the following question: What is the proper notion of equilibrium in the GAN zero-sum game? In other words, what are the optimality criteria in the GAN's minimax optimization problem?  A classical notion of equilibrium in the game theory literature is the \emph{Nash equilibrium}, a state in which no player can raise its individual gain by choosing a different strategy. According to this definition, a Nash equilibrium $(G^*,D^*)$ for the GAN  minimax problem \eqref{Eq: Intro_GAN_1} must satisfy the following for every $G\in\mathcal{G}$ and $D\in\mathcal{D}$:
\begin{equation}
 V(G^*,D)\, \le  \, V(G^*,D^*) \,  \le \,  V(G,D^*).    
\end{equation}
As a well-known result, for a generator $G$ expressive enough to reproduce the distribution of observed samples, Nash equilibrium exists for the generator producing the data distribution \cite{goodfellow2016deep}. However, such a Nash equilibrium would be of little interest from a learning perspective, since the trained generator merely overfits the empirical distribution of training samples \cite{arora2017generalization}. More importantly, state-of-the-art GAN architectures \cite{gulrajani2017improved, miyato2018spectral,zhang2018self,brock2018large} commonly restrict the generator function through various means of regularization such as batch or spectral normalization. Such regularization mechanisms do not allow the generator to produce the empirical distribution of observed data-points. Since the realizability assumption does not apply to such regularized GANs, the existence of Nash equilibria will not be guaranteed in their minimax problems.  

The above discussion motivates studying the equilibrium of GAN zero-sum games in the \textit{non-realizable settings} where the generator cannot express the empirical distribution of training data. Here, a natural question is whether a Nash equilibrium still exists for the GAN minimax problem. In this work, we focus on this question and demonstrate through several theoretical and numerical results that: 
\begin{itemize}[leftmargin=*]
\centering
  \item[] \normalsize\emph{Nash equilibrium may not exist in GAN zero-sum games.}
\end{itemize}
We provide theoretical examples of well-known GAN formulations including the vanilla GAN \cite{goodfellow2014generative}, Wasserstein GAN (WGAN) \cite{arjovsky2017wasserstein}, $f$-GAN \cite{nowozin2016f}, and the second-order Wasserstein GAN (W2GAN) \cite{feizi2017understanding} where no local Nash equilibria exist in their minimax optimization problems. We further perform numerical experiments on widely-used GAN architectures which suggest that an empirically successful GAN training may converge to non-Nash equilibrium solutions.

Next, we focus on characterizing a new notion of equilibrium for GAN problems. To achieve this goal, we consider the Nash equilibrium of a new zero-sum game where the objective function is given by the following proximal operator applied to the minimax objective $V(G,D)$ with respect to a norm on discriminator functions:
\begin{equation}\label{Def: intro proximal op}
V^{\operatorname{prox}}(G,D) \coloneqq \max_{\widetilde{D}\in\mathcal{D}}\: V(G,\widetilde{D}) - \bigl\Vert \widetilde{D}- D\bigr\Vert^2.   
\end{equation}
We refer to the Nash equilibrium of the new zero-sum game as the \emph{proximal equilibrium}. Given the inherent sequential nature of GAN problems where the generator moves first followed by the discriminator, we consider a Stackelberg game for its representation and focus on the subgame perfect equilibrium (SPE) of the game as the right notion of equilibrium for such problems \cite{jin2019minmax}. We prove that the proximal equilibrium of Wasserstein GANs provides an SPE for the GAN problem. This result applies to both the first-order and second-order Wasserstein GANs. In these cases, we show a proximal equilibrium exists for the optimal generator minimizing the distance to the data distribution.

Inspired by these theoretical results, we propose a proximal approach for training GANs, which we call \emph{proximal training}, by changing the original minimax objective to the proximal objective in \eqref{Def: intro proximal op}. In addition to preserving the optimal solution to the GAN minimax problem, proximal training can further enjoy the existence of Nash equilibrium solutions in the new minimax objective. We discuss numerical results supporting the proximal training approach and the role of proximal equilibrium solutions in various GAN problems. 

\section{Related Work}

Understanding the minimax optimization in modern machine learning applications including GANs has been a subject of great interest in the machine learning literature. A large body of recent works \cite{daskalakis2017training,nouiehed2019solving,mokhtari2019unified,thekumparampil2019efficient,zhang2019policy,mazumdar2019finding,fiez2019convergence,wang2019convergence,lin2019gradient} have analyzed the convergence properties of first-order optimization methods in solving different classes of minimax games. 

In a related work, \cite{jin2019minmax} proposes a new notion of local optimality, called \emph{local minimax}, designed for general sequential machine learning games. Compared to the notion of local minimax, the proximal equilibrium proposed in our work gives a notion of global optimality, which as we show directly applies to Wasserstein GANs. \cite{jin2019minmax} also provides examples of minimax problems where Nash equilibria do not exist; however, the examples do not represent GAN minimax problems. Some recent works \cite{lin2019gradient,lei2019sgd,wang2020on} have analyzed the convergence of different optimization methods to local minimax solutions. 

In another related work, \cite{daskalakis2018limit} analyzes the stable points of the gradient descent ascent (GDA) and optimistic GDA \cite{daskalakis2017training} algorithms, proving that they will give strict supersets of the local saddle points. Regarding the stability of GAN algorithms, \cite{nagarajan2017gradient} proves that the GDA algorithm will be locally stable for the vanilla and regularized Wasserstein GAN problems. \cite{feizi2017understanding} shows the GDA algorithm is globally stable for W2GANs with linear generator and quadratic discriminator functions. 

Regarding the equilibrium in GANs, \cite{arora2017generalization} studies the Nash equilibrium of GAN minimax games in realizable settings.
Also, \cite{arora2017generalization,hsieh2018finding} develop methods for finding mixed strategy Nash equilibria. On the other hand, our results focus on the pure strategies in non-realizable settings. \cite{fedus2017many} empirically studies the equilibrium of GAN problems regularized via the gradient penalty, reporting positive results on the stability of regularized GANs. However, our focus is on the existence of pure Nash equilibrium solutions. \cite{mroueh2017sobolev} suggests a moment matching GAN formulation using the Sobolev norm. As a different direction, we use the Sobolev norm to analyze equilibrium in GANs.   
Finally, developing GAN architectures with improved equilibrium and stability properties has been studied in several recent works \cite{salimans2016improved, berthelot2017began,heusel2017gans,mescheder2017numerics,roth2017stabilizing,kodali2017convergence, mescheder2018training,zhou2019lipschitz}.

\section{An Initial Experiment on Equilibrium in GANs}\label{Starting Exp on GANs}

To examine whether the Nash equilibrium exists in GAN problems empirically, we performed a simple numerical experiment. In this experiment, we applied three standard GAN implementations including the Wasserstein GAN with weight-clipping (WGAN-WC) \cite{arjovsky2017wasserstein}, the improved Wasserstein GAN with gradient penalty (WGAN-GP) \cite{gulrajani2017improved}, and the spectrally-normalized vanilla GAN (SN-GAN) \cite{miyato2018spectral}, to the two benchmark MNIST \cite{lecun1998mnist} and CelebA \cite{liu2015faceattributes} databases. We used the convolutional architecture of the DC-GAN \cite{radford2015unsupervised} optimized with the Adam \cite{kingma2014adam} or RMSprop \cite{hinton2012neural} (only for WGAN-WC) optimizers. 

We performed each of the GAN experiments for 200,000 generator iterations to reach $(G_{{\boldsymbol{\theta}}_{\operatorname{final}}},$ $D_{{\mathbf{w}}_{\operatorname{final}}})$ with ${\boldsymbol{\theta}}_{\operatorname{final}}$ and ${\mathbf{w}}_{\operatorname{final}}$ denoting the trained generator and discriminator parameters at the end of the 200,000 iterations. Our goal is to examine whether the solution pair $(G_{{\boldsymbol{\theta}}_{\operatorname{final}}},D_{{\mathbf{w}}_{\operatorname{final}}})$ represents a Nash equilibrium or not. To do this, we fixed the trained discriminator and kept optimizing the generator, i.e. continuing optimizing the generator $G_{\boldsymbol{\theta}}$ without changing the discriminator $D_{{\mathbf{w}}_{\operatorname{final}}}$. Here we solved the following optimization problem initialized at $\boldsymbol{\theta}^{(0)}={\boldsymbol{\theta}}_{\operatorname{final}}$ using the default first-order optimizer for the generator function for 10,000 iterations:
\begin{equation}
    \min_{\boldsymbol{\theta}}\: V(G_{\boldsymbol{\theta}} , D_{{\mathbf{w}}_{\operatorname{final}}}).
\end{equation}
If the pair $(G_{{\boldsymbol{\theta}}_{\operatorname{final}}},D_{{\mathbf{w}}_{\operatorname{final}}})$ was in fact a Nash equilibrium, it would give a local saddle point to the minimax optimization and the above optimization could not make the objective any smaller than its initial value. Also, the image samples generated by the generator $G_{\boldsymbol{\theta}}$ should have improved or at least preserved their initial quality during this optimization, since the discriminator $D_{{\mathbf{w}}_{\operatorname{final}}}$ would be the optimal discriminator against all generator functions.

\begin{figure}
\centering
\begin{subfigure}{0.49\textwidth}
\centering
\includegraphics[width=0.98\linewidth]{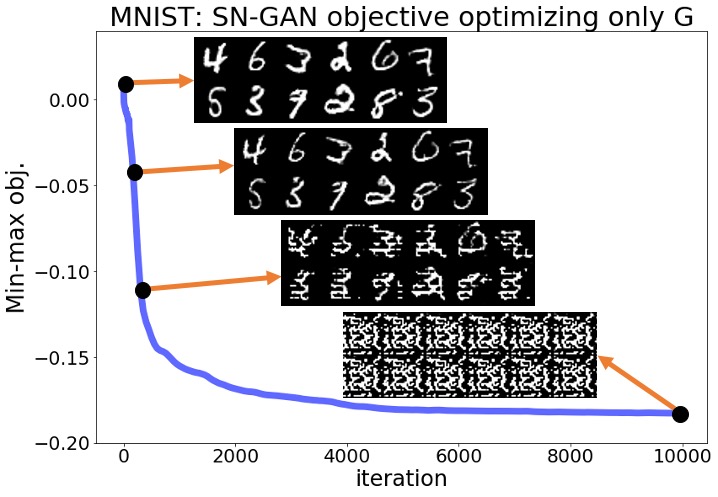}
\caption{Results on the MNIST data} 
\label{fig:mnist_eq}
\end{subfigure}
\begin{subfigure}{.49\textwidth}
\centering
\includegraphics[width=0.98\linewidth]{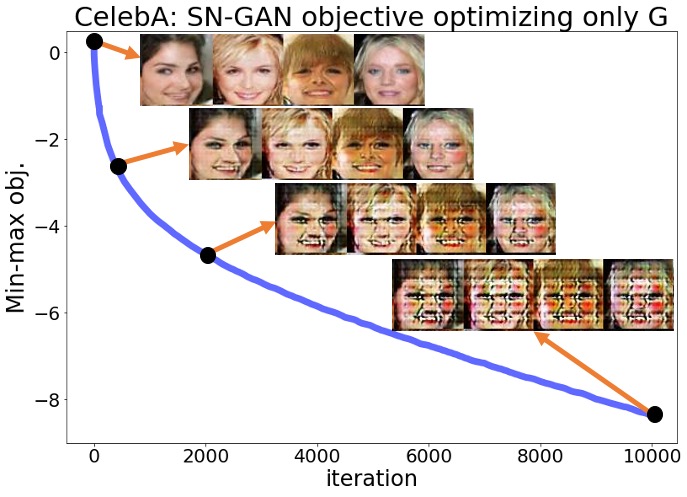}
\caption{Results on the CelebA data}
\label{fig:celeba_eq}
\end{subfigure}
\caption{Optimizing the SN-GAN's generator objective without changing the trained discriminator on the MNIST and CelebA data. Both the SN-GAN objective and samples' quality were decreasing during the optimization.} 
\end{figure}

Despite the above predictions, we observed that none of the mentioned statements hold in reality for any of the six experiments with the three standard GAN implementations and the two datasets. The optimization objective decreased rapidly from the beginning of the optimization, and the pictures sampled from the generator completely lost their quality over this optimization. Figures \ref{fig:mnist_eq}, \ref{fig:celeba_eq} show the objective for the SN-GAN experiments over the 10,000 steps of the above optimization. These figures also demonstrate the SN-GAN generated samples before and during the optimization, which shows the significant drop in the quality of generated pictures. We defer the results for the WGAN-WC and WGAN-GP problems to the Appendix.

The results of the above experiments show that practical GAN experiments may not converge to local Nash equilibrium solutions. After fixing the trained discriminator, the trained generator can be further optimized using a first-order optimization method to reach smaller values of the generator objective. More importantly, this optimization not only  does not improve the quality of the generator's output samples, but also totally disturbs the trained generator. As demonstrated in these experiments, simultaneous optimization of the two players is in fact necessary for the proper convergence and stability behavior in GAN minimax optimization. The above experiments suggest that practical GAN solutions are not local Nash equilibrium. In the upcoming sections, we review some standard GAN formulations and then show that there are examples of GAN minimax problems for which no Nash equilibrium exists. 
Those theoretical results will further support our observations in the above experiments.

\section{Review of GAN Formulations}\label{Section: GAN formulations}
\subsection{Vanilla GAN \& $f$-GAN}
Consider samples $\mathbf{x}_1,\ldots,\mathbf{x}_n$ observed independently from distribution $P_{\mathbf{X}}$. Our goal is to find a generator function $G\in\mathcal{G}$ where $G(\mathbf{Z})$ maps a random noise input $\mathbf{Z}$ from a known $P_{\mathbf{Z}}$ to an output $G(\mathbf{Z})$ distributed as $P_{\mathbf{X}}$, i.e., we aim to match the probability distributions $P_{G(\mathbf{Z})}$ and $P_\mathbf{X}$. To find such a generator function, \cite{goodfellow2014generative} proposes the following minimax problem which is commonly referred to as the \emph{vanilla GAN} problem:
\begin{equation}\label{Vanilla GAN: formulation}
    \min_{G\in\mathcal{G}}\: \max_{D\in\mathcal{D}}\: \mathbb{E}\bigl[\log(D(\mathbf{X}))\bigr] + \mathbb{E}\bigl[\log(1-D(G(\mathbf{Z})))\bigr].
\end{equation}
Here $\mathcal{G}$ and $\mathcal{D}$ represent the set of generator and discriminator functions, respectively. In this formulation, the discriminator is optimized to map real samples from $P_\mathbf{X}$ to larger values than the values assigned to generated samples from $P_{G(\mathbf{Z})}$. 

As shown in \cite{goodfellow2014generative}, the above minimax problem for an unconstrained $\mathcal{D}$ containing all real-valued functions reduces to the following divergence minimization problem:
\begin{equation}\label{GAN_interprettaion: JSD}
    \min_{G\in \mathcal{G}}\: \operatorname{JSD}\bigl(P_{\mathbf{X}},P_{G(\mathbf{Z})}\bigr),
\end{equation}
where $\operatorname{JSD}$ denotes the Jensen-Shannon (JS) divergence defined in terms of KL-divergence as
\begin{equation*}
    \operatorname{JSD}(P,Q) \coloneqq  \frac{1}{2}\operatorname{KL}\bigl(P,\frac{P+Q}{2}\bigr)\,+\,\frac{1}{2}\operatorname{KL}\bigl(Q,\frac{P+Q}{2}\bigr).
\end{equation*}
$f$-GANs extend the vanilla GAN problem by generalizing the JS-divergence to a general $f$-divergence. For a convex function $f:\mathbb{R}\rightarrow \mathbb{R}$ with $f(1)=0$, the $f$-divergence $d_f$ corresponding to $f$ is defined as
\begin{equation}
    d_{f}(P,Q) \coloneqq \int p(\mathbf{x})f\bigl(\frac{q(\mathbf{x})}{p(\mathbf{x})}\bigr)\diff \mathbf{x}.
\end{equation}
Notice that the JS-divergence is a special case of $f$-divergence with $f_{\operatorname{JSD}}(t) = t\log t - (t+1)\log\frac{t+1}{2}$. \cite{nowozin2016f} shows that generalizing the divergence minimization \eqref{GAN_interprettaion: JSD} to minimizing a $f$-divergence results in the following minimax problem called $f$-GAN:
\begin{equation}\label{$f$-GAN: Problem}
    \min_{G\in\mathcal{G}}\: \max_{D\in\mathcal{D}}\: \mathbb{E}\bigl[D(\mathbf{X})] - \mathbb{E}\bigl[f^*\bigl(D(G(\mathbf{Z}))\bigr)\bigr],    
\end{equation}
where $f^*$ denotes the Fenchel-conjugate to $f$ defined as $f^*(u)=\sup_t ut-f(t)$. The space $\mathcal{D}$ implied by the f-divergence minimization will be the set of all functions, but a similar interpretation further applies to a constrained $\mathcal{D}$ \cite{liu2017approximation,farnia2018convex}. Several examples of $f$-GANs have been formulated and discussed in \cite{nowozin2016f}.

\subsection{Wasserstein GANs}
To resolve GAN training issues, \cite{arjovsky2017wasserstein} proposes to formulate a GAN problem by minimizing the optimal transport costs which unlike $f$-divergences change continuously with the input distributions. Given a transportation cost $c(\mathbf{x},\mathbf{x}')$ for transporting $\mathbf{x}$ to $\mathbf{x}'$, the optimal transport cost $W_c$ is defined as
\begin{equation}
    W_c(P,Q) = \inf_{M \in \Pi(P,Q)}\: \mathbb{E}_M\bigl[ c(\mathbf{X},\mathbf{X}') \bigr]
\end{equation}
where $\Pi(P,Q)$ denotes the set of all joint distributions on $(\mathbf{X},\mathbf{X}')$ with $\mathbf{X},\, \mathbf{X}'$ marginally distributed as $P,\, Q$, respectively. An important special case is the first-order Wasserstein distance ($W_1$-distance) corresponding to $c(\mathbf{X},\mathbf{X}')=\Vert \mathbf{X}- \mathbf{X}'\Vert$. In this special case, the Kantorovich-Rubinstein duality shows
\begin{equation}
W_1(P,Q) = \max_{\Vert D\Vert_{\operatorname{Lip}}\le 1}\, \mathbb{E}_P[D(\mathbf{X})] - \mathbb{E}_Q[D(\mathbf{X})].    
\end{equation}
Here $\mathbb{E}_P[\cdot]$ denotes the expected value with respect to distribution $P$ and $\Vert D \Vert_{\operatorname{Lip}}$ denotes the Lipschitz constant of function $D$ which is defined as the smallest $L$ satisfying $\vert D(\mathbf{x}) - D(\mathbf{x}') \vert \le L\,\Vert \mathbf{x} - \mathbf{x}'\Vert $ for every $\mathbf{x},\mathbf{x}'$. Formulating a GAN problem minimizing the $W_1$-distance, \cite{arjovsky2017wasserstein} states the Wasserstein GAN (WGAN) problem as follows:
\begin{equation}\label{WGAN: Arjovsky}
    \min_{G\in\mathcal{G}}\: \max_{\Vert D\Vert_{\operatorname{Lip}} \le 1}\: \mathbb{E}\bigl[D(\mathbf{X})] - \mathbb{E}\bigl[D(G(\mathbf{Z}))\bigr].   
\end{equation}
The above Wasserstein GAN problem can be generalized to a general optimal transport cost with arbitrary cost function $c(\mathbf{x},\mathbf{x}')$. The generalization is as follows:
\begin{equation}\label{WGAN: General cost}
    \min_{G\in\mathcal{G}}\: \max_{D\, \operatorname{c-concave}}\: \mathbb{E}\bigl[D(\mathbf{X})] - \mathbb{E}\bigl[D^c(G(\mathbf{Z}))\bigr],    
\end{equation}
where the c-transform is defined as $D^c(\mathbf{x})=\sup_{\mathbf{x}'}\, D(\mathbf{x}')-c(\mathbf{x},\mathbf{x}')$ and a function $D$ is called c-concave if it is the c-transform of some valid function. In particular, the optimal transport GAN formulation with the quadratic cost $c(\mathbf{x},\mathbf{x}')=\Vert \mathbf{x} - \mathbf{x}' \Vert_2^2$ results in the second-order Wasserstein GAN (W2GAN) problem which has been studied in several recent works \cite{feizi2017understanding,salimans2018improving,sanjabi2018convergence,taghvaei20192}.

\section{Existence of Nash Equilibrium Solutions in GANs}
Consider a general GAN minimax problem \eqref{Eq: Intro_GAN_1} with a minimax objective $V(G,D)$. As discussed in the previous section, the optimal generator $G^*\in\mathcal{G}$ is defined to minimize the GAN's target divergence to the data distribution. The following proposition is a well-known result regarding the Nash equilibrium of the GAN game in realizable settings where there exists a generator $G\in\mathcal{G}$ producing the data distribution.
\begin{prop}\label{Prop: Nash_exists}
Assume that generator $G^*\in\mathcal{G}$ results in the distribution of data, i.e., we have $P_{G^*(\mathbf{Z})} = P_{\mathbf{X}}$. Then, for each of the GAN problems discussed in Section \ref{Section: GAN formulations} there exists a constant discriminator function $D_{\operatorname{constant}}$ which together with $G^*$ results in a Nash equilibrium to the GAN game, and hence satisfies the following for every $G\in\mathcal{G}$ and $D\in\mathcal{D}$:
\begin{equation*}
    V(G^*,D)\, \le \, V(G^*,D_{\operatorname{constant}}) \, \le \, V(G,D_{\operatorname{constant}}).
\end{equation*}
\end{prop}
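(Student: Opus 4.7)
The plan is to verify the two Nash-equilibrium inequalities separately for each of the four formulations by picking, in each case, a carefully chosen \emph{constant} discriminator $D_{\operatorname{constant}}\equiv c$. The guiding observation is that, under the realizability assumption $P_{G^*(\mathbf{Z})}=P_{\mathbf{X}}$, the quantity $V(G^*,D)$ collapses to an expectation with respect to a single distribution $P_{\mathbf{X}}$, so the maximization over $D$ can be carried out pointwise. I will choose $c$ so that simultaneously (i) $V(G,c)$ does not depend on $G$, making the right-hand inequality trivial, and (ii) $c$ is the pointwise maximizer of the integrand in $V(G^*,D)$, giving the left-hand inequality.

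For the vanilla GAN I would take $c=1/2$. Under realizability, $V(G^*,D)=\mathbb{E}_{P_{\mathbf{X}}}[\log D(\mathbf{X})+\log(1-D(\mathbf{X}))]$, and the scalar map $u\mapsto \log u+\log(1-u)$ is maximized uniquely at $u=1/2$, yielding $-2\log 2$. A constant $D\equiv 1/2$ also gives $V(G,1/2)=-2\log 2$ for every $G$, so both inequalities follow at once (with equality on the right). For $f$-GAN I would pick any $c\in\partial f(1)$, whose existence follows from $f$ being a proper closed convex function with $f(1)=0$. The Fenchel--Young identity then gives $c\cdot 1 - f^*(c)=f(1)=0$, so $V(G,c)=c-f^*(c)=0$ independently of $G$. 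At $G^*$, $V(G^*,D)=\mathbb{E}_{P_{\mathbf{X}}}[D(\mathbf{X})-f^*(D(\mathbf{X}))]$, and pointwise
\[
\sup_u\bigl(u-f^*(u)\bigr)=f^{**}(1)=f(1)=0,
\]
so $V(G^*,D)\le 0=V(G^*,c)$.

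For the first-order WGAN and for the general optimal-transport GAN I would take $c=0$. Any constant is $0$-Lipschitz and hence $1$-Lipschitz, and, assuming the standard assumptions $c(\mathbf{x},\mathbf{x}')\ge 0$ and $c(\mathbf{x},\mathbf{x})=0$, the zero function is $c$-concave since $0^c(\mathbf{x})=\sup_{\mathbf{x}'}(0-c(\mathbf{x},\mathbf{x}'))=0$. Then $V(G,0)=0$ for every $G$. At $G^*$, we have $V(G^*,D)=\mathbb{E}_{P_{\mathbf{X}}}[D(\mathbf{X})-D^c(\mathbf{X})]$ in the general case, and taking $\mathbf{x}'=\mathbf{x}$ in the defining supremum of $D^c$ gives $D^c(\mathbf{x})\ge D(\mathbf{x})-c(\mathbf{x},\mathbf{x})=D(\mathbf{x})$, so $V(G^*,D)\le 0=V(G^*,0)$. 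The first-order WGAN case is identical (and even simpler, since $D^c=D$ for the cost $\lVert \mathbf{x}-\mathbf{x}'\rVert$).

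The main obstacle, such as it is, is bookkeeping rather than content: one has to check in each formulation that the chosen constant $c$ lies in the relevant discriminator class $\mathcal{D}$ (trivial for WGAN via Lipschitz constant, needing the mild assumption $c(\mathbf{x},\mathbf{x})=0$ for $c$-concavity in the optimal-transport case, and needing $\partial f(1)\neq\emptyset$ together with $c$ being in the effective domain of $f^*$ for $f$-GAN) and that the pointwise maximum of the integrand in $V(G^*,D)$ matches $V(G^*,c)$. Once these four parallel verifications are assembled, the proposition follows uniformly with the same template.
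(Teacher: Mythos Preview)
Your proposal is correct and follows essentially the same template as the paper: reduce $V(G^*,D)$ under realizability to a single expectation over $P_{\mathbf{X}}$, maximize the integrand pointwise to identify a constant optimal discriminator, and then note that a constant $D$ makes $V(G,D_{\operatorname{constant}})$ independent of $G$. The only cosmetic differences are that the paper subsumes the vanilla GAN under the $f$-GAN case (taking $D_{\operatorname{constant}}\equiv f'(1)$ rather than your subgradient choice $c\in\partial f(1)$) and allows any constant in the Wasserstein case rather than specifically $0$.
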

\begin{proof}
This proposition is well-known for the vanilla GAN \cite{goodfellow2016nips}. In the Appendix, we provide a proof for general $f$-GANs and Wasserstein GANs.
\end{proof}
The above proposition shows that in a realizable setting with a generator function generating the distribution of observed samples, a Nash equilibrium exists for that optimal generator. However, the realizability assumption in this proposition does not always hold in real GAN experiments. For example, in the GAN experiments discussed in Section \ref{Starting Exp on GANs}, we observed that the divergence estimate never reached the zero value because of regularizing the generator function. Therefore, the Nash equilibrium described in Proposition \ref{Prop: Nash_exists} does not apply to the trained generator and discriminator in such GAN experiments. 

Here, we address the question of the existence of Nash equilibrium solutions for non-realizable settings, where no generator $G\in\mathcal{G}$ can produce the data distribution. Do Nash equilibria always exist in non-realizable GAN zero-sum games? The following theorem shows that the answer is in general no. Note that $\sigma_{\max}(\cdot)$ in this theorem denotes the maximum singular value, i.e., the spectral norm.
\begin{thm}\label{Thm: no Nash}
Consider a GAN minimax problem for learning a normally distributed $\mathbf{X}\sim \mathcal{N}(\mathbf{0},\sigma^2 I)$ with zero mean and scalar covariance matrix where $\sigma > 1$. In the GAN formulation, we use a linear generator function $G(\mathbf{z})=\mathbf{W}\mathbf{z}+\mathbf{u}$ where the weight matrix $\mathbf{W}$ is spectrally-regularized to satisfy $\sigma_{\max}( \mathbf{W})\le 1$. Suppose that the Gaussian latent vector is normally distributed as $\mathbf{Z}\sim \mathcal{N}(\mathbf{0},I)$ with zero mean and identity covariance matrix. Then,
\begin{itemize}[leftmargin=*]
    \item For the $f$-GAN problem corresponding to an $f$ with non-decreasing $t^2f''(t)$ over $t\in (0,+\infty)$ and an unconstrained discriminator $D$ where the dimensions of data $\mathbf{X}$ and latent $\mathbf{Z}$ match, the f-GAN minimax problem has no Nash equilibrium solutions.
    \item For the W2GAN problem with discriminator $D$ trained over $c$-concave functions, where $c$ is the quadratic cost, the W2GAN minimax problem has no Nash equilibrium solutions. Also, given a quadratic discriminator $ D(\mathbf{x})=\mathbf{x}^TA\mathbf{x}+\mathbf{b}^T\mathbf{x}$ parameterized by $A,\mathbf{b}$, the W2GAN problem has no \textbf{local} Nash equilibria.
    \item For the WGAN problem with $1$-dimensional $X,Z$ and a discriminator $D$ trained over 1-Lipschitz functions, the WGAN minimax problem has no Nash equilibria.
\end{itemize}
\end{thm}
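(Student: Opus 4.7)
The overall strategy is contradiction: suppose $(G^*, D^*)$ is a Nash equilibrium. The first Nash inequality $V(G^*, D) \le V(G^*, D^*)$ for every $D$ forces $D^*$ to be a best response to $G^*$, which in each of the three formulations admits a closed-form description in terms of the densities $p_{\mathbf{X}}$ and $p_{G^*(\mathbf{Z})}$. Substituting this $D^*$ back, the second Nash inequality requires $G^*$ to be a (local) minimizer of $V(\cdot, D^*)$; the plan in each case is to exhibit either a descent direction or a Hessian of the wrong sign in the translation parameter $\mathbf{u}$, contradicting this minimality.

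For the $f$-GAN claim, the unconstrained inner maximum yields $D^*(\mathbf{x}) = f'(r^*(\mathbf{x}))$ with $r^*(\mathbf{x}) := p_{\mathbf{X}}(\mathbf{x})/p_{G^*(\mathbf{Z})}(\mathbf{x})$, and hence $f^* \circ D^* = h \circ r^*$ with $h(t) := t f'(t) - f(t)$. A direct differentiation gives $h'(t) = t f''(t) \ge 0$, and the assumption that $k(t) := t^2 f''(t) = t h'(t)$ is non-decreasing is exactly $k'(t) \ge 0$. Because $p_{\mathbf{X}}$ and $p_{G^*(\mathbf{Z})}$ are non-degenerate Gaussians with $\sigma_{\max}(\mathbf{W}^*) \le 1 < \sigma$, the function $Q := \log r^*$ is a strictly convex quadratic, and one then verifies
\begin{equation*}
\nabla^2 (h \circ r^*)(\mathbf{x}) \;=\; r^*(\mathbf{x})\, k'(r^*(\mathbf{x}))\, \nabla Q(\mathbf{x}) \nabla Q(\mathbf{x})^T \;+\; k(r^*(\mathbf{x}))\, \nabla^2 Q(\mathbf{x}) \;\succ\; 0,
\end{equation*}
since both scalar coefficients are non-negative and $\nabla^2 Q \succ 0$. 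Averaging over $G^*(\mathbf{Z})$ then yields $\nabla_{\mathbf{u}}^2 V(G, D^*)\bigl|_{G=G^*} \prec 0$, so $G^*$ is a strict local \emph{maximum} of $V(\cdot, D^*)$ in $\mathbf{u}$, contradicting Nash.

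For the 1-D WGAN claim, integration by parts shows that the optimal $1$-Lipschitz $D^*$ against $G^*$ satisfies $D^{*\prime}(x) = \operatorname{sign}(F_{G^*(Z)}(x) - F_X(x))$ almost everywhere. Since $X \sim \mathcal{N}(0, \sigma^2)$ has strictly heavier tails than $G^*(Z) \sim \mathcal{N}(u^*, (W^*)^2)$ whenever $|W^*| \le 1 < \sigma$, this sign is eventually $+1$ as $x \to +\infty$ and eventually $-1$ as $x \to -\infty$; hence $D^*(x)/|x| \to 1$. For any fixed $W$, the map $u \mapsto V(G, D^*) = \mathbb{E}[D^*(X)] - \mathbb{E}[D^*(WZ + u)]$ therefore tends to $-\infty$ as $|u| \to \infty$ because $\mathbb{E}[D^*(WZ + u)]$ grows linearly in $|u|$, so no $u^*$ can minimize $V(\cdot, D^*)$, contradicting the Nash minimality requirement.

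For the W2GAN claim the approach is parallel: Brenier's theorem identifies the $c$-concave maximizer as the optimal-transport potential between $P_{\mathbf{X}}$ and $P_{G^*(\mathbf{Z})}$, which for two non-degenerate Gaussians is an explicit quadratic, so $D^*$ is quadratic and $D^{*c}$ can be computed in closed form. Inserting $D^*$ into $V(G, D^*)$ and redoing the $\mathbf{u}$-Hessian calculation as in the $f$-GAN case yields the same strict-negative-definite contradiction. For the quadratic-discriminator sub-statement, $V$ reduces to a polynomial in $(A, \mathbf{b}, \mathbf{W}, \mathbf{u})$, and an elementary computation of the block gradients and Hessians shows that the Nash first- and second-order conditions cannot be simultaneously satisfied (the $\mathbf{u}$-Hessian ends up of the wrong sign). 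The main obstacle I anticipate is the $c$-concave W2GAN case: extracting a workable closed form of $D^*$ from Brenier's theorem and pushing it through the Hessian calculation, since the $c$-transform couples the two parts of $V$. The $f$-GAN case is the cleanest because $D^*$ is fully explicit and the $t^2 f''(t)$ condition is exactly what the Hessian sign needs, and the WGAN case is essentially a tail-behavior argument.
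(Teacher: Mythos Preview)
Your overall architecture --- pin down $D^*$ as the best response to $G^*$ and then contradict the minimality of $G^*$ by exhibiting the wrong curvature in the translation parameter $\mathbf{u}$ --- is exactly the paper's strategy, and for the $f$-GAN case your argument is essentially a reparameterization of theirs. The paper writes the composite as $y\mapsto f^*\!\bigl(f'(ce^y)\bigr)$ applied to the quadratic $Q=\log r^*$, while you write it as $t\mapsto tf'(t)-f(t)=f^*(f'(t))$ applied directly to $r^*=e^Q$; the two differ by the change of variable $t=ce^y$, and your Hessian identity $\nabla^2(h\circ r^*)=r^*k'(r^*)\,\nabla Q\nabla Q^T+k(r^*)\,\nabla^2 Q$ is just the chain-rule version of their ``increasing convex composed with strongly convex'' observation. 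One step you elide is the paper's justification that $\mathbf{W}^*$ must be full rank at any Nash equilibrium (a rank-deficient $\mathbf{W}$ makes the inner supremum $+\infty$), without which ``$p_{G^*(\mathbf{Z})}$ non-degenerate'' is unsupported.

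For the one-dimensional WGAN you take a genuinely different and arguably more elementary route. The paper shows the optimal $D^*$ is globally \emph{convex}: slope $-1$ on $(-\infty,a_1]$, slope $+1$ on $[a_2,\infty)$, and convex on $(a_1,a_2)$ via a double-Fenchel-conjugate replacement, so $u\mapsto -\mathbb{E}[D^*(wZ+u)]$ is concave and admits no local minimum. You instead integrate by parts to get $D^{*\prime}=\operatorname{sign}(F_{G}-F_X)$, read off the eventual tail slopes from the heavier tails of $P_X$, and use $D^*(x)/|x|\to 1$ to force $V(\cdot,D^*)\to -\infty$ as $|u|\to\infty$. Your argument is correct and dispatches the global Nash claim; the paper's convexity argument additionally precludes \emph{local} Nash equilibria, which is slightly stronger than what the theorem asserts for WGAN.

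For the W2GAN case your plan is right but two points need tightening, and the paper's treatment is cleaner there. First, in the objective it is $D^{*c}$, not $D^*$, whose Hessian in $\mathbf{x}$ controls $\nabla^2_{\mathbf{u}}V$; the paper computes $\nabla D^{*c}_{\mathbf{W},\mathbf{u}}(\mathbf{x})=\bigl(\sigma(\mathbf{W}\mathbf{W}^T)^{1/2}\bigr)^\dagger\!-I)\mathbf{x}+\cdots$ explicitly via the Gaussian-to-Gaussian Brenier map and notes that when $\dim\mathbf{Z}<\dim\mathbf{X}$ one only gets at least one strictly positive Hessian eigenvalue (via the pseudo-inverse), not full positive-definiteness --- still enough to kill any local minimum in $\mathbf{u}$, but your ``non-degenerate Gaussians / strict-negative-definite'' phrasing overshoots. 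Second, for the \emph{local} Nash sub-claim with quadratic $D_{A,\mathbf{b}}(\mathbf{x})=\mathbf{x}^T A\mathbf{x}+\mathbf{b}^T\mathbf{x}$, rather than a block-Hessian computation the paper uses a structural shortcut you may find useful: $D_{A,\mathbf{b}}$ is affine in $(A,\mathbf{b})$ and $D^c_{A,\mathbf{b}}$ is a pointwise supremum of affine functions of $(A,\mathbf{b})$, so the discriminator objective is \emph{concave} in $(A,\mathbf{b})$ over the convex $c$-concavity constraint $\{A:\, I-A\succeq 0\}$. Hence any local maximizer in $(A,\mathbf{b})$ is automatically global, i.e.\ equals $D^*_{\mathbf{W},\mathbf{u}}$, and the local Nash question reduces to the global case already settled.
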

\begin{proof}
We defer the proof to the Appendix. Note that the condition on the $f$-GAN holds for all $f$-GAN examples in \cite{nowozin2016f} including the vanilla GAN.
\end{proof} 
The above theorem shows that under the stated assumptions the GAN zero-sum game does not have Nash equilibrium solutions. Consequently, the optimal divergence-minimizing generative model does not result in a Nash equilibrium. In contrast to Theorem \ref{Thm: no Nash}, the following remark shows that the GAN zero-sum game in a non-realizable case may have Nash equilibrium solutions, of course if Theorem \ref{Thm: no Nash}'s assumptions do not hold.
\begin{remark}
Consider the same setting as in Theorem \ref{Thm: no Nash}. However, unlike Theorem \ref{Thm: no Nash} suppose that $\sigma < 1$ and $\sigma_{\min}( \mathbf{W} ) \ge 1$ where $\sigma_{\min}$ stands for the minimum singular value. Then, for the WGAN and W2GAN problems described in Theorem \ref{Thm: no Nash}, the Wasserstein distance-minimizing generator results in a Nash equilibrium.
\end{remark}
\begin{proof}
We defer the proof to the Appendix.
\end{proof}
The above remark explains that the phenomenon shown in Theorem \ref{Thm: no Nash} does not always hold in non-realizable GAN settings. As a result, we need other notions of equilibrium which consistently explain optimality in GAN games.    

\section{Proximal Equilibrium: A Relaxation of Nash Equilibrium}
To define a proper notion of equilibrium for GANs, note that due to the sequential nature of GAN games the equilibrium notion should be flexible to allow to some extent the optimization of the discriminator around the equilibrium solution. This property is in fact consistent with the stability feature observed for the first-order GAN training methods where the alternating first-order method stabilizes around a certain solution. To this end, we consider the following objective for a GAN problem with minimax objective $V(G,D)$: 
\begin{equation}\label{Definition: Proximal func}
    V_{\lambda}^{\operatorname{prox}}(G,D) \, \coloneqq \, \max_{ \widetilde{D} \in \mathcal{D}}\, V(G,\widetilde{D})-\frac{\lambda}{2}\bigl\Vert \widetilde{D} - D\bigr\Vert^2. 
\end{equation}
The above definition represents the application of a proximal operator to $V(G,D)$, which further optimizes the original objective in the proximity of discriminator $D$. To keep the $\widetilde{D}$ function variable close to $D$, we penalize the distance among the two functions in the proximal optimization. Here the distance is measured using a norm $\Vert\cdot\Vert$ on the discriminator function space. 

To extend the notion of Nash equilibrium to general minimax problems, we propose considering the Nash equilibria of the defined $V_{\lambda}^{\operatorname{prox}}(G,D)$.
\begin{mydef}
We call $(G^*,D^*)$ a $\lambda$-proximal equilibrium for $V(G,D)$ if it represents a Nash equilibrium for
$V_{\lambda}^{\operatorname{prox}}(G,D)$ 
, i.e. for every $G\in\mathcal{G}$ and $D\in\mathcal{D}$
\begin{equation}
  V_{\lambda}^{\operatorname{prox}}(G^*,D) \le V_{\lambda}^{\operatorname{prox}}(G^*,D^*) \le V_{\lambda}^{\operatorname{prox}}(G,D^*).  
\end{equation}
\end{mydef}
The next proposition provides necessary and sufficient conditions in terms of the original objective $V(G,D)$ for the proximal equilibrium solutions.
\begin{prop}\label{Prop: 2}
$(G^*,D^*)$ is a $\lambda$-proximal equilibrium if and only if for every $G\in\mathcal{G}$ and $D\in\mathcal{D}$ we have
    \begin{align*}
        V(G^*,D)\le V(G^*,D^*)  \le  \max_{\widetilde{D}\in \mathcal{D}} V(G,\widetilde{D}) - \frac{\lambda}{2}\bigl\Vert \widetilde{D} - D^* \bigr\Vert^2
    \end{align*}
Therefore, if $(G^*,D^*)$ is a $\lambda$-proximal equilibrium it will give a global minimax solution, i.e., $G^*\in\mathcal{G}$ minimizes the worst-case objective, $\max_{D\in\mathcal{D}}V(G,D)$, with $D^*$ being its optimal solution. 
\end{prop}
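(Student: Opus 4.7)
The plan is to prove each direction of the equivalence by translating the two proximal Nash inequalities for $V_{\lambda}^{\operatorname{prox}}$ at $(G^*,D^*)$ separately into the two conditions on $V$, using the trivial bound $V_{\lambda}^{\operatorname{prox}}(G,D)\ge V(G,D)$, obtained by plugging $\widetilde D = D$ into the definition \eqref{Definition: Proximal func}, as the main bridge between the two objectives.

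The right inequality is mostly bookkeeping. In one direction, the Nash condition $V_{\lambda}^{\operatorname{prox}}(G^*,D^*)\le V_{\lambda}^{\operatorname{prox}}(G,D^*)$ combined with the trivial bound $V(G^*,D^*)\le V_{\lambda}^{\operatorname{prox}}(G^*,D^*)$ yields the claimed $V(G^*,D^*)\le V_{\lambda}^{\operatorname{prox}}(G,D^*)$. Conversely, once the first inequality $V(G^*,D)\le V(G^*,D^*)$ is available, $\widetilde D = D^*$ simultaneously maximizes $V(G^*,\widetilde D)$ and minimizes $\tfrac{\lambda}{2}\bigl\Vert\widetilde D - D^*\bigr\Vert^2$, so $V_{\lambda}^{\operatorname{prox}}(G^*,D^*) = V(G^*,D^*)$, and the stated second inequality becomes exactly the right Nash condition.

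The first inequality is the substantive step. The direction $(\Leftarrow)$ is immediate: assuming $V(G^*,D)\le V(G^*,D^*)$ for all $D$ gives $V_{\lambda}^{\operatorname{prox}}(G^*,D)\le \max_{\widetilde D}V(G^*,\widetilde D)=V(G^*,D^*)=V_{\lambda}^{\operatorname{prox}}(G^*,D^*)$. For the harder direction $(\Rightarrow)$, applying $V_{\lambda}^{\operatorname{prox}}(G^*,D)\ge V(G^*,D)$ together with the left Nash inequality at every $D$ yields $\sup_{D}V(G^*,D)\le V_{\lambda}^{\operatorname{prox}}(G^*,D^*)$, while $V_{\lambda}^{\operatorname{prox}}(G^*,D^*)\le \sup_{\widetilde D}V(G^*,\widetilde D)$ is immediate from the definition, so $V_{\lambda}^{\operatorname{prox}}(G^*,D^*)=\sup_{D}V(G^*,D)$. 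Denoting by $\widetilde D^{\dagger}$ an argmax in the proximal problem at $D^*$, this equality becomes $V(G^*,\widetilde D^{\dagger})-\tfrac{\lambda}{2}\bigl\Vert\widetilde D^{\dagger}-D^*\bigr\Vert^2=\sup_{D}V(G^*,D)$, which together with $V(G^*,\widetilde D^{\dagger})\le \sup_{D}V(G^*,D)$ forces $\widetilde D^{\dagger}=D^*$ and $V(G^*,D^*)=\sup_{D}V(G^*,D)$, i.e., the first inequality.

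The minimax corollary then follows by assembling the two: the first inequality gives $\max_{D}V(G^*,D)=V(G^*,D^*)$, and the second, combined with $V_{\lambda}^{\operatorname{prox}}(G,D^*)\le \max_{\widetilde D}V(G,\widetilde D)$, yields $V(G^*,D^*)\le \max_{D}V(G,D)$ for every $G$, so $G^*\in\arg\min_{G}\max_{D}V(G,D)$ with $D^*$ as an optimal discriminator. The main delicate point is the attainment of the maximum in the proximal operator at $D^*$, used to identify $\widetilde D^{\dagger}$; this is a standard mild compactness/continuity assumption on $\mathcal{D}$ and $V(G^*,\cdot)$, and in its absence the identical conclusion can be reached by working with a maximizing sequence whose norm deviations from $D^*$ must tend to zero, provided $V(G^*,\cdot)$ is upper semicontinuous.
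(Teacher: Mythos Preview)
Your proposal is correct and follows essentially the same approach as the paper. Both arguments hinge on the identity $V_{\lambda}^{\operatorname{prox}}(G^*,D^*)=V(G^*,D^*)$, the trivial bound $V\le V_{\lambda}^{\operatorname{prox}}$, and attainment of the proximal maximizer at $D^*$; the paper establishes the identity by a direct contradiction (if the proximal argmax $\widetilde D\neq D^*$ then $V_{\lambda}^{\operatorname{prox}}(G^*,D^*)<V_{\lambda}^{\operatorname{prox}}(G^*,\widetilde D)$), whereas you first show $V_{\lambda}^{\operatorname{prox}}(G^*,D^*)=\sup_D V(G^*,D)$ and then force the penalty to vanish, but these are equivalent manipulations. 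Your explicit remark on attainment and the maximizing-sequence workaround is a bit more careful than the paper, which tacitly assumes the proximal maximum is achieved.
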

\begin{proof}
We defer the proof to the Appendix.
\end{proof}
The following result shows the proximal equilibria provide a hierarchy of equilibrium solutions for different $\lambda$ values.
\begin{prop} 
 Define $\operatorname{PE}_{\lambda}(V)$ to be the set of the $\lambda$-proximal equilibria for $V(G,D)$. Then, if $\lambda_1 \le \lambda_2$,
\begin{equation}
 \operatorname{PE}_{\lambda_2}(V) \subseteq \operatorname{PE}_{\lambda_1}(V).
\end{equation}
\end{prop}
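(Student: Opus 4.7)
The plan is to reduce the claim to the characterization of proximal equilibria provided by Proposition~\ref{Prop: 2} and then exploit the elementary fact that $V_{\lambda}^{\operatorname{prox}}(G,D)$ is monotonically nonincreasing in $\lambda$. Concretely, for any fixed $(G,D)$ and $\lambda_1 \le \lambda_2$, enlarging $\lambda$ puts a heavier quadratic penalty on every $\widetilde{D}\neq D$ in the inner maximization, so termwise
\begin{align*}
V(G,\widetilde{D}) - \tfrac{\lambda_1}{2}\bigl\Vert \widetilde{D}-D\bigr\Vert^2 \;\ge\; V(G,\widetilde{D}) - \tfrac{\lambda_2}{2}\bigl\Vert \widetilde{D}-D\bigr\Vert^2,
\end{align*}
which upon taking the supremum over $\widetilde{D}\in\mathcal{D}$ yields $V_{\lambda_1}^{\operatorname{prox}}(G,D)\ge V_{\lambda_2}^{\operatorname{prox}}(G,D)$.

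Given this monotonicity, the main step is straightforward. Let $(G^*,D^*)\in\operatorname{PE}_{\lambda_2}(V)$. Applying the ``only if'' direction of Proposition~\ref{Prop: 2} at the parameter $\lambda_2$, for every $G\in\mathcal{G}$ and $D\in\mathcal{D}$,
\begin{align*}
V(G^*,D)\;\le\;V(G^*,D^*)\;\le\;\max_{\widetilde{D}\in\mathcal{D}}\, V(G,\widetilde{D}) - \tfrac{\lambda_2}{2}\bigl\Vert \widetilde{D}-D^*\bigr\Vert^2.
\end{align*}
The leftmost inequality is independent of $\lambda$, so it carries over verbatim. For the rightmost, I would chain it with the monotonicity observation applied at $D=D^*$ to obtain
\begin{align*}
V(G^*,D^*)\;\le\;V_{\lambda_2}^{\operatorname{prox}}(G,D^*)\;\le\;V_{\lambda_1}^{\operatorname{prox}}(G,D^*).
\end{align*}
Both conditions of Proposition~\ref{Prop: 2} then hold at parameter $\lambda_1$, and its ``if'' direction gives $(G^*,D^*)\in\operatorname{PE}_{\lambda_1}(V)$, which is the desired inclusion.

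There is essentially no hard step; the only thing to be careful about is to route the argument through Proposition~\ref{Prop: 2} rather than trying to transfer the two Nash-type inequalities for $V_{\lambda_2}^{\operatorname{prox}}$ directly to $V_{\lambda_1}^{\operatorname{prox}}$. The latter approach runs into trouble on the discriminator side, because $V_{\lambda}^{\operatorname{prox}}(G^*,D)$ and $V_{\lambda}^{\operatorname{prox}}(G^*,D^*)$ move in the same direction under changes in $\lambda$, so a monotonicity bound does not by itself preserve the inequality $V_{\lambda}^{\operatorname{prox}}(G^*,D)\le V_{\lambda}^{\operatorname{prox}}(G^*,D^*)$. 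Factoring out the $\lambda$-independent discriminator-side condition via Proposition~\ref{Prop: 2} sidesteps this issue cleanly and leaves only the one-sided penalty comparison to verify.
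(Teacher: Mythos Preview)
Your proposal is correct and follows essentially the same route as the paper's proof: invoke Proposition~\ref{Prop: 2} to extract the $\lambda$-independent discriminator-side inequality and the proximal upper bound at $\lambda_2$, use the pointwise monotonicity of the penalized objective in $\lambda$ to pass to $\lambda_1$, and then apply the converse direction of Proposition~\ref{Prop: 2}. The paper's argument is terser but structurally identical.
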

\begin{proof}
We defer the proof to the Appendix.
\end{proof}
Note that as $\lambda$ approaches infinity, $V_{\lambda}^{\operatorname{prox}}(G,D)$ tends to the original $V(G,D)$, implying that $\operatorname{PE}_{\lambda=+\infty}(V) $ is the set of $V(G,D)$'s Nash equilibria. In contrast, for $\lambda=0$ the proximal objective becomes the worst-case objective $\max_{D\in\mathcal{D}}V(G,D)$. As a result, $\operatorname{PE}_{\lambda=0}(V)$ is the set of global minimax solutions described in Proposition \ref{Prop: 2}. 

Concerning the proximal optimization problem in \eqref{Definition: Proximal func}, the following proposition shows that if the original minimax objective is a smooth function of the discriminator parameters, the proximal optimization can be solved efficiently and therefore one can efficiently compute the gradient of the proximal objective. 
\begin{prop}\label{Proposition: Gradient of Prox}
Consider the maximization problem in the definition of proximal objective \eqref{Definition: Proximal func} where generator $G_{\boldsymbol{\theta}}$ and discriminator $D_{\mathbf{w}}$ are parameterized by vectors $\boldsymbol{\theta},\,\mathbf{w}$, respectively. Suppose that
\begin{itemize}[leftmargin=*]
    \item For the considered discriminator norm $\Vert \cdot \Vert$, $\Vert D_{\mathbf{w}} -D \Vert^2$ is $\eta_1$-strongly convex in $\mathbf{w}$ for any function $D$, i.e. for any $\mathbf{w},\mathbf{w}', D$:
    \begin{equation*}
        \bigl\Vert \nabla_{\mathbf{w}} \Vert D_{\mathbf{w}} - D \Vert^2 \, - \, \nabla_{\mathbf{w}} \Vert D_{\mathbf{w}'} - D \Vert^2  \bigr\Vert_2 \, \ge \, \eta_1 \bigl\Vert \mathbf{w} - \mathbf{w}' \bigr\Vert_2,    
    \end{equation*}
    \item For every $G_{\boldsymbol{\theta}}$, The GAN minimax objective $V(G_{\boldsymbol{\theta}},D_{\mathbf{w}})$ is $\eta_2$-smooth in $\mathbf{w}$, i.e. i.e. for any $\mathbf{w},\mathbf{w}', \boldsymbol{\theta}$:
    \begin{equation*}
        \bigl\Vert \nabla_{\mathbf{w}} V(G_{\boldsymbol{\theta}},D_{\mathbf{w}}) \, - \, \nabla_{\mathbf{w}} V(G_{\boldsymbol{\theta}},D_{\mathbf{w}'})  \bigr\Vert_2 \, \le \, \eta_2 \Vert \mathbf{w} - \mathbf{w}' \Vert_2.    
    \end{equation*}
\end{itemize}
Under the above assumptions, if $\eta_2 < \frac{\lambda\eta_1}{2}$, the maximization objective in \eqref{Definition: Proximal func} is $\frac{\lambda\eta_1}{2} - \eta_2$-strongly concave. Then, the maximization problem has a unique solution $\mathbf{w}^*$ and if $V(G_{\boldsymbol{\theta}},D_{\mathbf{w}})$ is differentiable with respect to $\theta$ we have
\begin{equation}
    \nabla_{\theta} V_{\lambda}^{\operatorname{prox}}(G_{\boldsymbol{\theta}},D_{\mathbf{w}})  = \nabla_{\theta} V(G_{\boldsymbol{\theta}},D_{\mathbf{w}^*}).
\end{equation}
\end{prop}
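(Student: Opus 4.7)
The plan is to interpret the inner maximization in the proximal definition as an optimization over the same parametric family $\{D_{\widetilde{\mathbf{w}}}\}$, establish that the resulting objective is strongly concave in $\widetilde{\mathbf{w}}$, and then invoke Danskin's envelope theorem to differentiate the value function with respect to $\boldsymbol{\theta}$. Concretely, I would write the inner objective as
\[ F(\widetilde{\mathbf{w}}) \coloneqq V(G_{\boldsymbol{\theta}},D_{\widetilde{\mathbf{w}}}) - \tfrac{\lambda}{2}\bigl\Vert D_{\widetilde{\mathbf{w}}} - D_{\mathbf{w}} \bigr\Vert^2, \]
so that $V_{\lambda}^{\operatorname{prox}}(G_{\boldsymbol{\theta}},D_{\mathbf{w}}) = \max_{\widetilde{\mathbf{w}}} F(\widetilde{\mathbf{w}})$, and the goal reduces to showing $F$ has a unique maximizer $\mathbf{w}^*$ and that the gradient of the envelope in $\boldsymbol{\theta}$ equals $\nabla_{\boldsymbol{\theta}} V(G_{\boldsymbol{\theta}},D_{\mathbf{w}^*})$.

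For strong concavity, I would combine the quadratic lower bounds coming from each assumption. Scaling the first assumption gives $\tfrac{\lambda}{2}\Vert D_{\widetilde{\mathbf{w}}} - D_{\mathbf{w}}\Vert^2$ a strong-convexity modulus of $\tfrac{\lambda\eta_1}{2}$ in $\widetilde{\mathbf{w}}$, contributing a quadratic excess of $\tfrac{\lambda\eta_1}{4}\Vert\widetilde{\mathbf{w}}'-\widetilde{\mathbf{w}}\Vert_2^2$ beyond the tangent. The $\eta_2$-smoothness of $V$ in $\widetilde{\mathbf{w}}$ yields the two-sided descent sandwich, whose lower side gives $-V(\widetilde{\mathbf{w}}') \ge -V(\widetilde{\mathbf{w}}) + \langle \nabla_{\widetilde{\mathbf{w}}}(-V),\,\widetilde{\mathbf{w}}'-\widetilde{\mathbf{w}}\rangle - \tfrac{\eta_2}{2}\Vert\widetilde{\mathbf{w}}'-\widetilde{\mathbf{w}}\Vert_2^2$. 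Adding the two contributions gives $-F$ a strong-convexity modulus of $\tfrac{\lambda\eta_1}{2}-\eta_2$, which is positive by the assumption $\eta_2 < \tfrac{\lambda\eta_1}{2}$. Strong concavity of $F$ then guarantees the unique maximizer $\mathbf{w}^*$ claimed in the statement.

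The gradient identity then follows from Danskin's theorem. Because $F$ is strongly concave with a unique maximizer that varies continuously in $\boldsymbol{\theta}$, and because $V(G_{\boldsymbol{\theta}},D_{\widetilde{\mathbf{w}}})$ is differentiable in $\boldsymbol{\theta}$ by assumption while the penalty term $\tfrac{\lambda}{2}\Vert D_{\widetilde{\mathbf{w}}} - D_{\mathbf{w}}\Vert^2$ is free of $\boldsymbol{\theta}$, differentiating the envelope at the optimum $\widetilde{\mathbf{w}}=\mathbf{w}^*$ leaves only $\nabla_{\boldsymbol{\theta}} V(G_{\boldsymbol{\theta}},D_{\mathbf{w}^*})$. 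The main obstacle I anticipate is the rigorous application of Danskin under the stated hypotheses: the penalty's ``strong convexity'' is phrased through a gradient-norm lower bound rather than the textbook quadratic inequality, so I would first reconcile the two to secure coercivity and therefore existence of $\mathbf{w}^*$, and then invoke continuity of the argmap $\boldsymbol{\theta} \mapsto \mathbf{w}^*(\boldsymbol{\theta})$ (from strong concavity via the Berge maximum theorem or the implicit function theorem) to justify interchanging the $\boldsymbol{\theta}$-gradient with the maximum.
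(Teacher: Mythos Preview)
Your proposal is correct and follows essentially the same approach as the paper: show that the inner objective is $\bigl(\tfrac{\lambda\eta_1}{2}-\eta_2\bigr)$-strongly concave by combining the $\tfrac{\lambda\eta_1}{2}$-strong convexity of the scaled penalty with the $\eta_2$-smoothness of $V$, deduce a unique maximizer, and then apply Danskin's theorem to obtain the gradient identity. The paper's proof is more terse and does not spell out the reconciliation between the gradient-norm formulation of strong convexity and the quadratic inequality, nor the regularity checks for Danskin you flag, but the argument is otherwise the same.
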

\begin{proof}
We defer the proof to the Appendix.
\end{proof}
The above proposition suggests that under the mentioned assumptions, one can efficiently compute the optimal solution to the proximal maximization through a first-order optimization method. The assumptions require the smoothness of the GAN minimax objective with respect to the discriminator parameters, which can be imposed by applying norm-based regularization tools to neural network discriminators.

\section{Proximal Equilibrium in Wasserstein GANs}

As shown earlier, GAN minimax games may not have any Nash equilibria in non-realizable settings. As a result, we seek for a different notion of equilibrium which remains applicable to GAN problems. Here, we show the proposed proximal equilibrium provides such an equilibrium notion for Wasserstein GAN problems.

To define a proper proximal operator for defining proximal equilibria in Wasserstein GAN problems, we use the second-order Sobolev semi-norm averaged over the underlying distribution of data. Given the underlying distribution $P_\mathbf{X}$, we define the Sobolev semi-norm as
\begin{align}\label{Definition: Sobolev Norm}
    \big\Vert D \big\Vert_{\dot{H}^1}\, \coloneqq \, \sqrt{\, \mathbb{E}_{P_{\mathbf{X}}}\left[ \big\Vert \nabla_{\mathbf{x}} D(\mathbf{X}) \big\Vert^2_2\, \right]}\,.
\end{align}
The above semi-norm is induced by the following semi-inner product and therefore leads to a semi-Hilbert space of functions:
\begin{equation}
    \langle D_1,D_2{\rangle}_{\dot{H}^1} \coloneqq \mathbb{E}_{P_{\mathbf{X}}}\left[\,   \nabla D_1(\mathbf{X})^T \nabla D_2(\mathbf{X}) \, \right].
\end{equation}

Throughout our discussion, we consider a parameterized set of generators $\mathcal{G}=\{G_{\boldsymbol{\theta}}:\, \boldsymbol{\theta}\in\Theta  \}$. For a GAN minimax objective $V(G,D)$, we define $D^{\boldsymbol{\theta}}$ to be the optimal discriminator function for the parameterized generator $G_{\boldsymbol{\theta}}$:
\begin{equation}
   D^{\boldsymbol{\theta}} \coloneqq  \underset{D\in \mathcal{D}}{\arg\!\max}\: V(G_{\boldsymbol{\theta}},D). 
\end{equation}
The following theorem shows that the Wasserstein distance-minimizing generator function in the second-order Wasserstein GAN problem satisfies the conditions of a proximal equilibrium based on the Sobolev semi-norm defined in \eqref{Definition: Sobolev Norm}.
\begin{thm}\label{Thm: Prox for 2nd}
Consider the second-order Wasserstein GAN problem \eqref{WGAN: General cost} with a quadratic cost $c(\mathbf{x},\mathbf{x}')=\frac{\eta}{2}\Vert \mathbf{x} - \mathbf{x}'\Vert_2^2 $. Suppose that the set of optimal discriminators $\{D^{\boldsymbol{\theta}}:\, \boldsymbol{\theta}\in\Theta  \}$ is convex.  
Then, $(G_{\boldsymbol{\theta}^*},D^{\boldsymbol{\theta}^*})$ for the Wasserstein distance-minimizing generator $G_{\boldsymbol{\theta}^*}\in \mathcal{G}$ will provide a $\frac{1}{\eta}$-proximal equilibrium with respect to the Sobolev norm in \eqref{Definition: Sobolev Norm}. 
\end{thm}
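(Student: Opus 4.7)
My plan is to verify the two-sided characterization in Proposition~\ref{Prop: 2}. The first inequality, $V(G_{\boldsymbol{\theta}^*}, D) \le V(G_{\boldsymbol{\theta}^*}, D^{\boldsymbol{\theta}^*})$ for every $D \in \mathcal{D}$, is immediate from the definition of $D^{\boldsymbol{\theta}^*}$ as the best response to $G_{\boldsymbol{\theta}^*}$. For the second inequality, I would lower-bound the maximum over $\widetilde{D}$ using the feasible test point $\widetilde{D} = D^{\boldsymbol{\theta}}$, thereby reducing the claim to
\[
V(G_{\boldsymbol{\theta}^*}, D^{\boldsymbol{\theta}^*}) \,+\, \frac{1}{2\eta}\bigl\Vert D^{\boldsymbol{\theta}} - D^{\boldsymbol{\theta}^*}\bigr\Vert^2_{\dot{H}^1} \;\le\; V(G_{\boldsymbol{\theta}}, D^{\boldsymbol{\theta}}) \qquad \forall \, \boldsymbol{\theta} \in \Theta.
\]

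The crux of the argument is a Sobolev-norm identity specific to the quadratic cost $c(\mathbf{x},\mathbf{x}') = \tfrac{\eta}{2}\Vert \mathbf{x} - \mathbf{x}'\Vert_2^2$. By Brenier's theorem, and under mild regularity on $P_{\mathbf{X}}$, the optimal $c$-concave discriminator $D^{\boldsymbol{\theta}}$ induces the optimal transport map $T_{\boldsymbol{\theta}}(\mathbf{x}) = \mathbf{x} - \tfrac{1}{\eta}\nabla D^{\boldsymbol{\theta}}(\mathbf{x})$ pushing $P_{\mathbf{X}}$ onto $P_{G_{\boldsymbol{\theta}}(\mathbf{Z})}$. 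Combining Kantorovich duality $V(G_{\boldsymbol{\theta}}, D^{\boldsymbol{\theta}}) = W_c(P_{\mathbf{X}}, P_{G_{\boldsymbol{\theta}}(\mathbf{Z})})$ with this representation produces the clean identity
\[
V(G_{\boldsymbol{\theta}}, D^{\boldsymbol{\theta}}) \;=\; \mathbb{E}\!\left[\frac{\eta}{2}\bigl\Vert \mathbf{X} - T_{\boldsymbol{\theta}}(\mathbf{X})\bigr\Vert_2^2\right] \;=\; \frac{1}{2\eta}\bigl\Vert D^{\boldsymbol{\theta}} \bigr\Vert^2_{\dot{H}^1}.
\]
Substituting this identity on both sides of the needed inequality and expanding $\Vert D^{\boldsymbol{\theta}} - D^{\boldsymbol{\theta}^*}\Vert^2_{\dot{H}^1}$ via the Sobolev inner product collapses the whole statement to the variational inequality
\[
\bigl\langle D^{\boldsymbol{\theta}} - D^{\boldsymbol{\theta}^*},\, D^{\boldsymbol{\theta}^*}\bigr\rangle_{\dot{H}^1} \;\ge\; 0 \qquad \forall\, \boldsymbol{\theta}\in\Theta.
\]

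To close the argument, I would note that minimizing $V(G_{\boldsymbol{\theta}}, D^{\boldsymbol{\theta}})$ over $\boldsymbol{\theta}$ is, via the identity above, equivalent to minimizing the convex quadratic $D \mapsto \tfrac{1}{2\eta}\Vert D\Vert^2_{\dot{H}^1}$ over the set $\{D^{\boldsymbol{\theta}} : \boldsymbol{\theta} \in \Theta\}$, which is convex by hypothesis. Since $\boldsymbol{\theta}^*$ is by assumption the Wasserstein-distance-minimizing generator, $D^{\boldsymbol{\theta}^*}$ is precisely the projection of the origin onto this convex set in the semi-Hilbert space induced by $\langle \cdot,\cdot\rangle_{\dot{H}^1}$, and the standard first-order projection inequality delivers exactly the variational condition above. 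The main obstacle I expect is justifying the Sobolev identity $V(G_{\boldsymbol{\theta}}, D^{\boldsymbol{\theta}}) = \tfrac{1}{2\eta}\Vert D^{\boldsymbol{\theta}}\Vert^2_{\dot{H}^1}$: this requires invoking Brenier's theorem together with the explicit form $T_{\boldsymbol{\theta}} = \operatorname{id} - \tfrac{1}{\eta}\nabla D^{\boldsymbol{\theta}}$ of the optimal map, and handling the attendant regularity conditions (e.g., absolute continuity of $P_{\mathbf{X}}$). Once this identity is in place, the rest of the argument is a textbook convex projection.
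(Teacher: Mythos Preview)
Your proposal is correct and follows essentially the same route as the paper: both reduce via Proposition~\ref{Prop: 2} to the inequality $V(G_{\boldsymbol{\theta}^*},D^{\boldsymbol{\theta}^*}) + \tfrac{1}{2\eta}\Vert D^{\boldsymbol{\theta}} - D^{\boldsymbol{\theta}^*}\Vert^2_{\dot{H}^1} \le V(G_{\boldsymbol{\theta}},D^{\boldsymbol{\theta}})$, establish the Brenier identity $V(G_{\boldsymbol{\theta}},D^{\boldsymbol{\theta}}) = \tfrac{1}{2\eta}\Vert D^{\boldsymbol{\theta}}\Vert^2_{\dot{H}^1}$, and then exploit convexity of $\{D^{\boldsymbol{\theta}}:\boldsymbol{\theta}\in\Theta\}$. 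The only cosmetic difference is the final step: the paper packages it as a strong-convexity lemma (for $\gamma$-strongly convex $f$ minimized over a convex set, $f(x)-f(x^*)\ge \tfrac{\gamma}{2}\Vert x-x^*\Vert^2$), whereas you phrase it as the first-order projection inequality $\langle D^{\boldsymbol{\theta}}-D^{\boldsymbol{\theta}^*},D^{\boldsymbol{\theta}^*}\rangle_{\dot{H}^1}\ge 0$; for the quadratic $\tfrac{1}{2}\Vert\cdot\Vert^2_{\dot{H}^1}$ these are algebraically identical.
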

\begin{proof}
We defer the proof to the Appendix.
\end{proof}
The above theorem shows that while, as demonstrated in Theorem  \ref{Thm: no Nash}, the W2GAN problem may have no local  Nash equilibrium solutions, the proximal equilibrium exists for the W2GAN problem and holds at the Wasserstein-distance minimizing generator $G_{\boldsymbol{\theta}^*}$. The next theorem extends this result to the first-order Wasserstein GAN (WGAN) problem.
\begin{thm}\label{Thm: Prox for 1st}
Consider the WGAN problem \eqref{WGAN: Arjovsky} minimizing the first-order Wasserstein distance. For each $G_{\boldsymbol{\theta}}$, define $\alpha_{\boldsymbol{\theta}}:\mathbb{R}^d\rightarrow \mathbb{R}^{\ge 0}$ to be the magnitude of the resulted optimal transport map from $\mathbf{X}$ to $G_{\boldsymbol{\theta}}(\mathbf{Z})$, i.e. $\mathbf{X} - {\alpha}^2_{\boldsymbol{\theta}}(\mathbf{X})\nabla D^{\boldsymbol{\theta}}(\mathbf{X})$ shares the same distribution with $G_{\boldsymbol{\theta}}(\mathbf{Z})$.\footnote{Note that as shown in the proof such a mapping $\alpha_{\boldsymbol{\theta}}$ exists under mild regularity assumptions.} Given these definitions, assume that
\begin{itemize}[leftmargin=*]
    \item $\{ {\alpha}_{\boldsymbol{\theta}}(\cdot)\nabla D^{\boldsymbol{\theta}}(\cdot):\, \boldsymbol{\theta}\in\Theta \}$ is a convex set,
    \item for every $\mathbf{x}$ and $\boldsymbol{\theta}$, $\frac{\eta}{2}\le \alpha_{\boldsymbol{\theta}}^2(\mathbf{x})$ holds for  constant $\eta$.
\end{itemize}
Then, $(G_{\boldsymbol{\theta}^*},D^{\boldsymbol{\theta}^*})$ for the Wasserstein distance-minimizing generator function $G_{\boldsymbol{\theta}^*}$ provides an $\eta$-proximal equilibrium with respect to the Sobolev norm in \eqref{Definition: Sobolev Norm}. 
\end{thm}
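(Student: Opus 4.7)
The plan is to mirror the argument for Theorem~\ref{Thm: Prox for 2nd}, adapted to the first-order Wasserstein setting. By Proposition~\ref{Prop: 2}, verifying that $(G_{\boldsymbol{\theta}^*}, D^{\boldsymbol{\theta}^*})$ is an $\eta$-proximal equilibrium reduces to two inequalities. The first, $V(G_{\boldsymbol{\theta}^*}, D) \le V(G_{\boldsymbol{\theta}^*}, D^{\boldsymbol{\theta}^*})$ for every $1$-Lipschitz $D$, is immediate from the definition of $D^{\boldsymbol{\theta}^*}$ as the inner maximizer. For the second inequality, I would take the witness $\widetilde D = D^{\boldsymbol{\theta}}$ inside the inner supremum defining $V_\eta^{\operatorname{prox}}(G_{\boldsymbol{\theta}},D^{\boldsymbol{\theta}^*})$, which reduces the task to proving
\begin{equation*}
W_1\bigl(P_\mathbf{X}, P_{G_{\boldsymbol{\theta}}(\mathbf{Z})}\bigr) - W_1\bigl(P_\mathbf{X}, P_{G_{\boldsymbol{\theta}^*}(\mathbf{Z})}\bigr) \;\ge\; \frac{\eta}{2}\,\bigl\Vert D^{\boldsymbol{\theta}} - D^{\boldsymbol{\theta}^*}\bigr\Vert_{\dot H^1}^2 \quad\text{for every } \boldsymbol{\theta}\in\Theta.
\end{equation*}

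Next I would rewrite the WGAN value in terms of the transport vector field $v_{\boldsymbol{\theta}}(\mathbf{x}) \coloneqq \alpha_{\boldsymbol{\theta}}(\mathbf{x})\nabla D^{\boldsymbol{\theta}}(\mathbf{x})$. The assumed form of the transport map $T_{\boldsymbol{\theta}}(\mathbf{x}) = \mathbf{x} - \alpha_{\boldsymbol{\theta}}^2(\mathbf{x})\nabla D^{\boldsymbol{\theta}}(\mathbf{x})$, together with $c$-cyclic monotonicity of an optimal Kantorovich pair, forces $\Vert \nabla D^{\boldsymbol{\theta}}(\mathbf{x})\Vert_2 = 1$ for $P_\mathbf{X}$-a.e.\ $\mathbf{x}$ in $\{\alpha_{\boldsymbol{\theta}}>0\}$, whence $W_1(P_\mathbf{X}, P_{G_{\boldsymbol{\theta}}(\mathbf{Z})}) = \mathbb{E}[\alpha_{\boldsymbol{\theta}}^2(\mathbf{X})] = \Vert v_{\boldsymbol{\theta}}\Vert_{L^2(P_\mathbf{X})}^2$. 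Hence $v_{\boldsymbol{\theta}^*}$ is the minimum-norm element of the convex set $\{v_{\boldsymbol{\theta}}:\boldsymbol{\theta}\in\Theta\}$ in $L^2(P_\mathbf{X})$, and the Pythagorean inequality for Euclidean projection onto a convex set gives
\begin{equation*}
\Vert v_{\boldsymbol{\theta}}\Vert_{L^2(P_\mathbf{X})}^2 - \Vert v_{\boldsymbol{\theta}^*}\Vert_{L^2(P_\mathbf{X})}^2 \;\ge\; \Vert v_{\boldsymbol{\theta}} - v_{\boldsymbol{\theta}^*}\Vert_{L^2(P_\mathbf{X})}^2 .
\end{equation*}
To convert this into a Sobolev bound on the potentials I would use the unit-norm property of the gradients to obtain the pointwise identity
\begin{equation*}
\bigl\Vert \alpha_{\boldsymbol{\theta}}\nabla D^{\boldsymbol{\theta}} - \alpha_{\boldsymbol{\theta}^*}\nabla D^{\boldsymbol{\theta}^*}\bigr\Vert_2^2 = (\alpha_{\boldsymbol{\theta}} - \alpha_{\boldsymbol{\theta}^*})^2 + \alpha_{\boldsymbol{\theta}}\alpha_{\boldsymbol{\theta}^*}\bigl\Vert \nabla D^{\boldsymbol{\theta}} - \nabla D^{\boldsymbol{\theta}^*}\bigr\Vert_2^2 ,
\end{equation*}
then discard the nonnegative first term, invoke $\alpha_{\boldsymbol{\theta}}\alpha_{\boldsymbol{\theta}^*}\ge \eta/2$ (which follows from $\alpha_{\boldsymbol{\theta}}^2\ge \eta/2$), and take expectations to arrive at $\Vert v_{\boldsymbol{\theta}} - v_{\boldsymbol{\theta}^*}\Vert_{L^2(P_\mathbf{X})}^2 \ge \frac{\eta}{2}\Vert D^{\boldsymbol{\theta}} - D^{\boldsymbol{\theta}^*}\Vert_{\dot H^1}^2$, which combined with the projection inequality closes the argument.

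The main obstacle I expect is establishing the representation $W_1 = \Vert v_{\boldsymbol{\theta}}\Vert_{L^2(P_\mathbf{X})}^2$, which is substantially more delicate than the Brenier identity used in the W2 analog: first-order Kantorovich potentials are non-unique and optimal plans need not be induced by a Monge map, so one must lean on the regularity conditions alluded to in the footnote (e.g.\ absolute continuity of $P_\mathbf{X}$ and suitable support geometry) to guarantee both the existence of a measurable $\alpha_{\boldsymbol{\theta}}\ge 0$ of the stated form and the unit-gradient identity on $\{\alpha_{\boldsymbol{\theta}}>0\}$. Once this structural claim is in hand, the convexity and algebraic steps transfer essentially verbatim from the proof of Theorem~\ref{Thm: Prox for 2nd}.
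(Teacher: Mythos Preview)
Your proposal is correct and follows essentially the same route as the paper's proof: both reduce via Proposition~\ref{Prop: 2} to the gap inequality, represent $W_1(P_{\mathbf X},P_{G_{\boldsymbol\theta}(\mathbf Z)})=\|v_{\boldsymbol{\theta}}\|^2_{L^2(P_{\mathbf{X}})}$ using the unit-gradient property of the optimal Kantorovich potential along transport rays, apply the projection/strong-convexity inequality over the assumed convex set $\{v_{\boldsymbol{\theta}}:\boldsymbol\theta\in\Theta\}$, and then convert the $L^2$ gap on $v_{\boldsymbol{\theta}}$ into the Sobolev gap on $D^{\boldsymbol{\theta}}$ via the lower bound on $\alpha_{\boldsymbol{\theta}}$. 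The only difference is in that last conversion: the paper packages it as a separate lemma (for unit vectors $\mathbf{x},\mathbf{y}$ and $0\le a\le b$, $a\|\mathbf{x}-\mathbf{y}\|_2\le\|a\mathbf{x}-b\mathbf{y}\|_2$), whereas your exact pointwise identity $\|a\mathbf{x}-b\mathbf{y}\|_2^2=(a-b)^2+ab\|\mathbf{x}-\mathbf{y}\|_2^2$ is slightly sharper and arguably cleaner, but yields the same $\eta/2$ constant.
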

\begin{proof}
We defer the proof to the Appendix.
\end{proof}
The above theorem shows that if the magnitude of optimal transport map is everywhere lower-bounded by $\frac{\lambda}{2}$, then the Wasserstein distance-minimizing generator in the WGAN problem yields a $\lambda$-proximal equilibrium. 

\section{Proximal Training}
As shown for Wasserstein GAN problems, given the defined Sobolev norm and a small enough $\lambda$ the proximal objective $V_{\lambda}^{\operatorname{prox}}(G,D)$ will possess a Nash equilibrium solution. This result motivates performing the minimax optimization for the proximal objective $V_{\lambda}^{\operatorname{prox}}(G,D)$ instead of the original objective $V(G,D)$. Therefore, we propose proximal training in which we solve the following minimax optimization problem:
\begin{equation}
    \min_{G_{\boldsymbol{\theta}}\in\mathcal{G}}\; \max_{D_{\mathbf{w}}\in\mathcal{D}}\;
    V_{\lambda}^{\operatorname{prox}}(G_{\boldsymbol{\theta}},D_{\mathbf{w}}),
\end{equation}
with the proximal operator defined according to the Sobolev norm in \eqref{Definition: Sobolev Norm}. 

In order to take the gradient of $V_{\lambda}^{\operatorname{prox}}(G_{\boldsymbol{\theta}},D_{\mathbf{w}})$ with respect to $\boldsymbol{\theta}$, Proposition \ref{Proposition: Gradient of Prox} suggests solving the proximal optimization followed by computing the gradient of the original objective $V(G_{\boldsymbol{\theta}},D_{\mathbf{w}^*})$ where the discriminator is parameterized with the optimal solution $\mathbf{w}^*$ to the proximal optimization. 

\begin{algorithm}[h]
   \caption{GAN Proximal Training}
   \label{alg:Proximal_Training}
\begin{algorithmic}
  \STATE {\bfseries Input:} data $\mathbf{x}_i$, size $n$
  \vspace{1.5mm}
  \STATE Initialize the parameters $\mathbf{w}^{(0)} , \boldsymbol{\theta}^{(0)}$
  \vspace{1.5mm}
  \FOR{$\text{\rm k}=0$ {\bfseries to} $\text{\rm MAX\_ITER}$}
  \vspace{2mm}
   \STATE $\gg$ Update $\mathbf{w}^{(k+1)}\, = \, \underset{\mathbf{w}}{\arg\!\max}\: V(G_{\boldsymbol{\theta}^{(k)}},D_{\mathbf{w}}) -  \frac{\lambda}{2n}\sum_{i=1}^n \Vert \nabla_{\mathbf{x}}D_{\mathbf{w}}(\mathbf{x}_i) - \nabla_{\mathbf{x}}D_{\mathbf{w}^{(k)}}(\mathbf{x}_i)  \Vert^2_{2}$. 
   \vspace{2mm}
   \STATE $\gg$ Update ${\boldsymbol{\theta}}^{(k+1)} = {\boldsymbol{\theta}}^{(k)} - \gamma_k \nabla_{\boldsymbol{\theta}} V(G_{\boldsymbol{\theta}^{(k)}},D_{\mathbf{w}^{(k+1)}}).$
   \vspace{1.5mm}
  \ENDFOR
\end{algorithmic}
\end{algorithm}

Algorithm~\ref{alg:Proximal_Training} summarizes the main two steps of proximal training. At every iteration, the discriminator is optimized with an additive Sobolev norm penalty forcing the discriminator to remain in the proximity of the current discriminator. Next, the generator is optimized using a gradient descent method with the gradient evaluated at the optimal discriminator solving the proximal optimization. The stepsize parameter $\gamma_k$ can be adaptively selected at every iteration $k$. In practice, we can solve the proximal maximization problem via a first-order optimization method for a certain number of iterations. Assuming the conditions of Proposition \ref{Proposition: Gradient of Prox} hold, the proximal optimization leads to the maximization of a strongly-concave objective which can be solved linearly fast through first-order optimization methods.

\section{Numerical Experiments}
To experiment the theoretical results of this work, we performed several experiments using the \cite{gulrajani2017improved}'s implementation of Wasserstein GANs with the code available at the paper's Github repository. In addition, we used the implementations of \cite{miyato2018spectral,farnia2018generalizable} for applying spectral regularization to the discriminator network. In the experiments, we used the DC-GAN 4-layer CNN architecture for both the discriminator and generator functions \cite{radford2015unsupervised} and ran each experiment for 200,000 generator iterations with 5 discriminator updates per generator update. We used the RMSprop optimzier \cite{hinton2012neural} for WGAN experiments with weight clipping or spectral normalization 
and the Adam optimizer \cite{kingma2014adam} 
for the other experiments.

\subsection{Proximal Equilibrium in Wasserstein and Lipschitz GANs}

\begin{figure}
\centering
\begin{subfigure}{0.49\textwidth}
\centering
\includegraphics[width=0.95\linewidth]{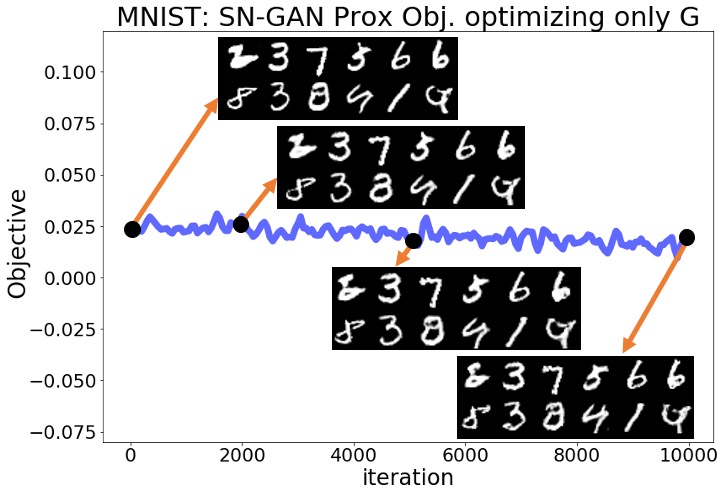}
\caption{Results on the MNIST data}
\label{fig:mnist_eq_pt}
\end{subfigure}
\begin{subfigure}{.49\textwidth}
\centering
\includegraphics[width=0.95\linewidth]{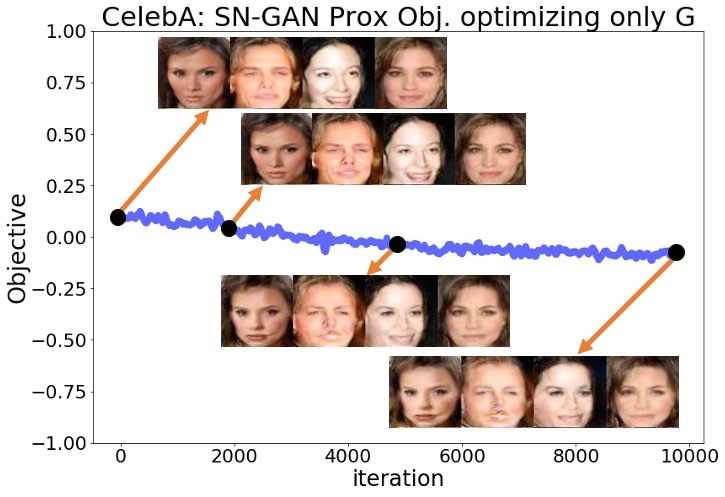}
\caption{Results on the CelebA data} 
\label{fig:celeba_eq_pt}
\end{subfigure}
\caption{Optimizing the proximal objective over the generator with a fixed discriminator on MNIST and CelebA datasets. The SN-GAN's objective and samples' quality were preserved during the optimization.} 
\end{figure}

We examined whether the solutions found by Wasserstein and Lipschitz vanilla GANs represent proximal equilibria. Toward this goal, we performed similar experiments to Section \ref{Starting Exp on GANs}'s experiments for the WGAN-WC \cite{arjovsky2017wasserstein}, WGAN-GP \cite{gulrajani2017improved}, and SN-GAN \cite{miyato2018spectral} problems over the MNIST and CelebA datasets. In Section \ref{Starting Exp on GANs}, we observed that after fixing the trained discriminator $D_{\mathbf{w}_{\operatorname{final}}}$ the GAN's minimax objective $V(G_{\boldsymbol{\theta}},D_{\mathbf{w}_{\operatorname{final}}})$ kept decreasing when we optimized only the generator $G_{\boldsymbol{\theta}}$. In the new experiments, we similarly fixed the trained discriminator $D_{\mathbf{w}_{\operatorname{final}}}$resulted from the 200,000 training iterations, but instead of optimizing the GAN minimax objective we optimized the \textit{proximal} objective defined by the norm \eqref{Definition: Sobolev Norm} with $\lambda=0.1$. Thus, we solved the following optimization problem initialized at ${\boldsymbol{\theta}}_{\operatorname{final}}$ which denotes the parameters of the trained generator:
\begin{equation}
    \min_{\boldsymbol{\theta}}\: V^{\operatorname{prox}}_{\lambda=0.1}(G_{\boldsymbol{\theta}},D_{\mathbf{w}_{\operatorname{final}}}).
\end{equation}
We computed the gradient of the above proximal objective by applying the Adam optimizer for $50$ steps to approximate the solution to the proximal optimization \eqref{Definition: Proximal func} which at every iteration was initialized at $\mathbf{w}_{\operatorname{final}}$. Figures \ref{fig:mnist_eq_pt} and \ref{fig:celeba_eq_pt} show that in the SN-GAN experiments the original minimax objective had only minor changes, compared to the results in Section \ref{Starting Exp on GANs}, and the quality of generated samples did not change significantly during the optimization. We defer the similar numerical results of the WGAN-WC and WGAN-GP experiments to the Appendix. These numerical results suggest that while Wasserstein and Lipschitz GANs may not converge to local Nash equilibrium solutions as shown in Section \ref{Starting Exp on GANs}, their found solutions can still represent a local proximal equilibrium.

 \subsection{Proximal Training Improves Lipschitz GANs}
 
 \begin{table}
\caption{Inception scores for ordinary vs. proximal training}
\label{table: inception}
\vskip 0.2in
\begin{center}
\begin{small}
\begin{sc}
\begin{tabular}{lcccr}
\toprule
GAN Problem & Ordinary & Proximal  \\
\midrule
WGAN-WC (DIM=64)    & $4.16 \pm 0.15$ & $4.56 \pm 0.19$\\
WGAN-WC (DIM=128) & $2.52 \pm 0.12$ & $4.23 \pm 0.15$ \\
SN-GAN (DIM=64)    & $5.12 \pm 0.25$ & $5.72 \pm 0.22$ \\
SN-GAN (DIM=128)    & $5.62 \pm 0.23$ & $6.12 \pm 0.22$ \\
\bottomrule
\end{tabular}
\end{sc}
\end{small}
\end{center}
\end{table}

We applied the proximal training in Algorithm~\ref{alg:Proximal_Training} to the WGAN-WC and SN-GAN problems. To compute the gradient of the proximal minimax objective, we solved the maximization problem in the Algorithm~\ref{alg:Proximal_Training}'s first step in the for loop by applying $20$ steps of Adam optimization initialized at the discriminator parameters at that iteration. Applying the proximal training to MNIST, CIFAR-10, and CelebA datasets, we qualitatively observed slightly visually better generated pictures. We postpone the generated samples to the Appendix.

To quantitatively compare the proximal and ordinary non-proximal GAN training, we measured the Inception scores of the samples generated in the CIFAR-10 experiments. As shown in Table \ref{table: inception}, proximal training results in an improved inception score. In this table, DIM stands for the dimension parameter of the DC-GAN's CNN networks.

\bibliography{references}

\begin{thebibliography}{10}

\bibitem{goodfellow2014generative}
Ian Goodfellow, Jean Pouget-Abadie, Mehdi Mirza, Bing Xu, David Warde-Farley,
  Sherjil Ozair, Aaron Courville, and Yoshua Bengio.
\newblock Generative adversarial nets.
\newblock In {\em Advances in neural information processing systems}, pages
  2672--2680, 2014.

\bibitem{salimans2016improved}
Tim Salimans, Ian Goodfellow, Wojciech Zaremba, Vicki Cheung, Alec Radford, and
  Xi~Chen.
\newblock Improved techniques for training gans.
\newblock In {\em Advances in neural information processing systems}, pages
  2234--2242, 2016.

\bibitem{arjovsky2017wasserstein}
Martin Arjovsky, Soumith Chintala, and L{\'e}on Bottou.
\newblock Wasserstein gan.
\newblock {\em arXiv preprint arXiv:1701.07875}, 2017.

\bibitem{gulrajani2017improved}
Ishaan Gulrajani, Faruk Ahmed, Martin Arjovsky, Vincent Dumoulin, and Aaron~C
  Courville.
\newblock Improved training of wasserstein gans.
\newblock In {\em Advances in neural information processing systems}, pages
  5767--5777, 2017.

\bibitem{miyato2018spectral}
Takeru Miyato, Toshiki Kataoka, Masanori Koyama, and Yuichi Yoshida.
\newblock Spectral normalization for generative adversarial networks.
\newblock {\em arXiv preprint arXiv:1802.05957}, 2018.

\bibitem{goodfellow2016deep}
Ian Goodfellow, Yoshua Bengio, and Aaron Courville.
\newblock {\em Deep learning}, volume~1.
\newblock MIT Press, 2016.

\bibitem{arora2017generalization}
Sanjeev Arora, Rong Ge, Yingyu Liang, Tengyu Ma, and Yi~Zhang.
\newblock Generalization and equilibrium in generative adversarial nets (gans).
\newblock In {\em Proceedings of the 34th International Conference on Machine
  Learning-Volume 70}, pages 224--232, 2017.

\bibitem{zhang2018self}
Han Zhang, Ian Goodfellow, Dimitris Metaxas, and Augustus Odena.
\newblock Self-attention generative adversarial networks.
\newblock {\em arXiv preprint arXiv:1805.08318}, 2018.

\bibitem{brock2018large}
Andrew Brock, Jeff Donahue, and Karen Simonyan.
\newblock Large scale gan training for high fidelity natural image synthesis.
\newblock {\em arXiv preprint arXiv:1809.11096}, 2018.

\bibitem{nowozin2016f}
Sebastian Nowozin, Botond Cseke, and Ryota Tomioka.
\newblock f-gan: Training generative neural samplers using variational
  divergence minimization.
\newblock In {\em Advances in neural information processing systems}, pages
  271--279, 2016.

\bibitem{feizi2017understanding}
Soheil Feizi, Farzan Farnia, Tony Ginart, and David Tse.
\newblock Understanding gans: the lqg setting.
\newblock {\em arXiv preprint arXiv:1710.10793}, 2017.

\bibitem{jin2019minmax}
Chi Jin, Praneeth Netrapalli, and Michael~I Jordan.
\newblock Minmax optimization: Stable limit points of gradient descent ascent
  are locally optimal.
\newblock {\em arXiv preprint arXiv:1902.00618}, 2019.

\bibitem{daskalakis2017training}
Constantinos Daskalakis, Andrew Ilyas, Vasilis Syrgkanis, and Haoyang Zeng.
\newblock Training gans with optimism.
\newblock {\em arXiv preprint arXiv:1711.00141}, 2017.

\bibitem{nouiehed2019solving}
Maher Nouiehed, Maziar Sanjabi, Tianjian Huang, Jason~D Lee, and Meisam
  Razaviyayn.
\newblock Solving a class of non-convex min-max games using iterative first
  order methods.
\newblock In {\em Advances in Neural Information Processing Systems}, pages
  14905--14916, 2019.

\bibitem{mokhtari2019unified}
Aryan Mokhtari, Asuman Ozdaglar, and Sarath Pattathil.
\newblock A unified analysis of extra-gradient and optimistic gradient methods
  for saddle point problems: Proximal point approach.
\newblock {\em arXiv preprint arXiv:1901.08511}, 2019.

\bibitem{thekumparampil2019efficient}
Kiran~K Thekumparampil, Prateek Jain, Praneeth Netrapalli, and Sewoong Oh.
\newblock Efficient algorithms for smooth minimax optimization.
\newblock In {\em Advances in Neural Information Processing Systems}, pages
  12659--12670, 2019.

\bibitem{zhang2019policy}
Kaiqing Zhang, Zhuoran Yang, and Tamer Basar.
\newblock Policy optimization provably converges to nash equilibria in zero-sum
  linear quadratic games.
\newblock In {\em Advances in Neural Information Processing Systems}, pages
  11598--11610, 2019.

\bibitem{mazumdar2019finding}
Eric~V Mazumdar, Michael~I Jordan, and S~Shankar Sastry.
\newblock On finding local nash equilibria (and only local nash equilibria) in
  zero-sum games.
\newblock {\em arXiv preprint arXiv:1901.00838}, 2019.

\bibitem{fiez2019convergence}
Tanner Fiez, Benjamin Chasnov, and Lillian~J Ratliff.
\newblock Convergence of learning dynamics in stackelberg games.
\newblock {\em arXiv preprint arXiv:1906.01217}, 2019.

\bibitem{wang2019convergence}
Yisen Wang, Xingjun Ma, James Bailey, Jinfeng Yi, Bowen Zhou, and Quanquan Gu.
\newblock On the convergence and robustness of adversarial training.
\newblock In {\em International Conference on Machine Learning}, pages
  6586--6595, 2019.

\bibitem{lin2019gradient}
Tianyi Lin, Chi Jin, and Michael~I Jordan.
\newblock On gradient descent ascent for nonconvex-concave minimax problems.
\newblock {\em arXiv preprint arXiv:1906.00331}, 2019.

\bibitem{lei2019sgd}
Qi~Lei, Jason~D Lee, Alexandros~G Dimakis, and Constantinos Daskalakis.
\newblock Sgd learns one-layer networks in wgans.
\newblock {\em arXiv preprint arXiv:1910.07030}, 2019.

\bibitem{wang2020on}
Yuanhao Wang, Guodong Zhang, and Jimmy Ba.
\newblock On solving minimax optimization locally: A follow-the-ridge approach.
\newblock In {\em International Conference on Learning Representations}, 2020.

\bibitem{daskalakis2018limit}
Constantinos Daskalakis and Ioannis Panageas.
\newblock The limit points of (optimistic) gradient descent in min-max
  optimization.
\newblock In {\em Advances in Neural Information Processing Systems}, pages
  9236--9246, 2018.

\bibitem{nagarajan2017gradient}
Vaishnavh Nagarajan and J~Zico Kolter.
\newblock Gradient descent gan optimization is locally stable.
\newblock In {\em Advances in neural information processing systems}, pages
  5585--5595, 2017.

\bibitem{hsieh2018finding}
Ya-Ping Hsieh, Chen Liu, and Volkan Cevher.
\newblock Finding mixed nash equilibria of generative adversarial networks.
\newblock {\em arXiv preprint arXiv:1811.02002}, 2018.

\bibitem{fedus2017many}
William Fedus, Mihaela Rosca, Balaji Lakshminarayanan, Andrew~M Dai, Shakir
  Mohamed, and Ian Goodfellow.
\newblock Many paths to equilibrium: Gans do not need to decrease a divergence
  at every step.
\newblock {\em arXiv preprint arXiv:1710.08446}, 2017.

\bibitem{mroueh2017sobolev}
Youssef Mroueh, Chun-Liang Li, Tom Sercu, Anant Raj, and Yu~Cheng.
\newblock Sobolev gan.
\newblock {\em arXiv preprint arXiv:1711.04894}, 2017.

\bibitem{berthelot2017began}
David Berthelot, Thomas Schumm, and Luke Metz.
\newblock Began: Boundary equilibrium generative adversarial networks.
\newblock {\em arXiv preprint arXiv:1703.10717}, 2017.

\bibitem{heusel2017gans}
Martin Heusel, Hubert Ramsauer, Thomas Unterthiner, Bernhard Nessler, and Sepp
  Hochreiter.
\newblock Gans trained by a two time-scale update rule converge to a local nash
  equilibrium.
\newblock In {\em Advances in neural information processing systems}, pages
  6626--6637, 2017.

\bibitem{mescheder2017numerics}
Lars Mescheder, Sebastian Nowozin, and Andreas Geiger.
\newblock The numerics of gans.
\newblock In {\em Advances in Neural Information Processing Systems}, pages
  1825--1835, 2017.

\bibitem{roth2017stabilizing}
Kevin Roth, Aurelien Lucchi, Sebastian Nowozin, and Thomas Hofmann.
\newblock Stabilizing training of generative adversarial networks through
  regularization.
\newblock In {\em Advances in neural information processing systems}, pages
  2018--2028, 2017.

\bibitem{kodali2017convergence}
Naveen Kodali, Jacob Abernethy, James Hays, and Zsolt Kira.
\newblock On convergence and stability of gans.
\newblock {\em arXiv preprint arXiv:1705.07215}, 2017.

\bibitem{mescheder2018training}
Lars Mescheder, Andreas Geiger, and Sebastian Nowozin.
\newblock Which training methods for gans do actually converge?
\newblock {\em arXiv preprint arXiv:1801.04406}, 2018.

\bibitem{zhou2019lipschitz}
Zhiming Zhou, Jiadong Liang, Yuxuan Song, Lantao Yu, Hongwei Wang, Weinan
  Zhang, Yong Yu, and Zhihua Zhang.
\newblock Lipschitz generative adversarial nets.
\newblock {\em arXiv preprint arXiv:1902.05687}, 2019.

\bibitem{lecun1998mnist}
Yann LeCun.
\newblock The mnist database of handwritten digits.
\newblock {\em http://yann. lecun. com/exdb/mnist/}, 1998.

\bibitem{liu2015faceattributes}
Ziwei Liu, Ping Luo, Xiaogang Wang, and Xiaoou Tang.
\newblock Deep learning face attributes in the wild.
\newblock In {\em Proceedings of International Conference on Computer Vision
  (ICCV)}, December 2015.

\bibitem{radford2015unsupervised}
Alec Radford, Luke Metz, and Soumith Chintala.
\newblock Unsupervised representation learning with deep convolutional
  generative adversarial networks.
\newblock {\em arXiv preprint arXiv:1511.06434}, 2015.

\bibitem{kingma2014adam}
Diederik~P Kingma and Jimmy Ba.
\newblock Adam: A method for stochastic optimization.
\newblock {\em arXiv preprint arXiv:1412.6980}, 2014.

\bibitem{hinton2012neural}
Geoffrey Hinton, Nitish Srivastava, and Kevin Swersky.
\newblock Neural networks for machine learning lecture 6a overview of
  mini-batch gradient descent.
\newblock 14(8), 2012.

\bibitem{liu2017approximation}
Shuang Liu, Olivier Bousquet, and Kamalika Chaudhuri.
\newblock Approximation and convergence properties of generative adversarial
  learning.
\newblock In {\em Advances in Neural Information Processing Systems}, pages
  5545--5553, 2017.

\bibitem{farnia2018convex}
Farzan Farnia and David Tse.
\newblock A convex duality framework for gans.
\newblock In {\em Advances in Neural Information Processing Systems}, pages
  5248--5258, 2018.

\bibitem{salimans2018improving}
Tim Salimans, Han Zhang, Alec Radford, and Dimitris Metaxas.
\newblock Improving gans using optimal transport.
\newblock {\em arXiv preprint arXiv:1803.05573}, 2018.

\bibitem{sanjabi2018convergence}
Maziar Sanjabi, Jimmy Ba, Meisam Razaviyayn, and Jason~D Lee.
\newblock On the convergence and robustness of training gans with regularized
  optimal transport.
\newblock In {\em Advances in Neural Information Processing Systems}, pages
  7091--7101, 2018.

\bibitem{taghvaei20192}
Amirhossein Taghvaei and Amin Jalali.
\newblock 2-wasserstein approximation via restricted convex potentials with
  application to improved training for gans.
\newblock {\em arXiv preprint arXiv:1902.07197}, 2019.

\bibitem{goodfellow2016nips}
Ian Goodfellow.
\newblock Nips 2016 tutorial: Generative adversarial networks.
\newblock {\em arXiv preprint arXiv:1701.00160}, 2016.

\bibitem{farnia2018generalizable}
Farzan Farnia, Jesse Zhang, and David Tse.
\newblock Generalizable adversarial training via spectral normalization.
\newblock In {\em International Conference on Learning Representations}, 2019.

\bibitem{boyd2004convex}
Stephen Boyd and Lieven Vandenberghe.
\newblock {\em Convex optimization}.
\newblock Cambridge university press, 2004.

\bibitem{villani2008optimal}
C{\'e}dric Villani.
\newblock {\em Optimal transport: old and new}, volume 338.
\newblock Springer Science \& Business Media, 2008.

\bibitem{bertsekas1997nonlinear}
Dimitri~P Bertsekas.
\newblock Nonlinear programming.
\newblock {\em Journal of the Operational Research Society}, 48(3):334--334,
  1997.

\bibitem{ambrosio2013user}
Luigi Ambrosio and Nicola Gigli.
\newblock A user’s guide to optimal transport.
\newblock In {\em Modelling and optimisation of flows on networks}, pages
  1--155. Springer, 2013.

\end{thebibliography}
\bibliographystyle{unsrt}
\newpage


\begin{appendices}

\section{Numerical Results for Section 3}
Here, we provide the complete numerical results for the experiments discussed in Section 3 of the main text. Regarding the plots shown in Section 3 for the SN-GAN implementation, here we present the same plots for the Wasserstein GAN with weight clipping (WGAN-WC) and with gradient penalty (WGAN-GP) problems.
Figures \ref{fig:mnist_eq_wc}-\ref{fig:celeba_eq_gp} repeat the experiments of Figures 1,2 in the main text for the WGAN-WC and WGAN-GP problems. These plots suggest that a similar result also holds for the WGAN-WC and WGAN-GP problems, where the objective and the generated samples' quality were decreasing during the generator optimization. For a larger set of generated samples in the main text's Figures 1,2 and Figures \ref{fig:mnist_eq_wc}-\ref{fig:celeba_eq_gp}, we refer the readers to Figures \ref{fig:mnist_sn}-\ref{fig:celeba_gp}.

\begin{figure}
\centering
\begin{subfigure}{0.49\textwidth}
\centering
\includegraphics[width=0.98\linewidth]{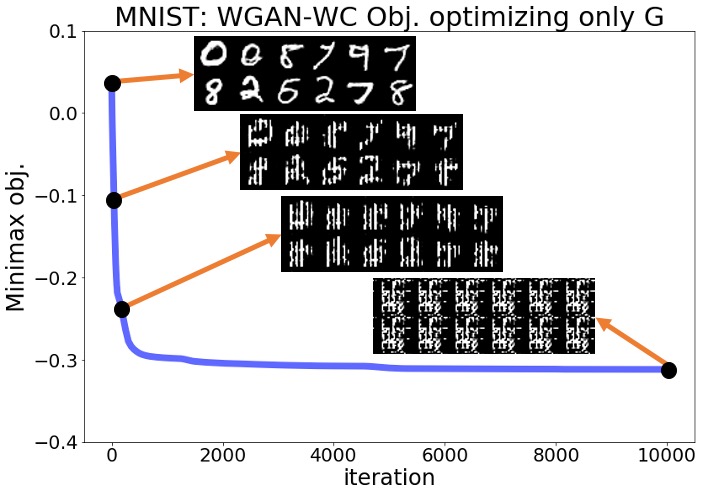}
\caption{Results on the MNIST data} 
\label{fig:mnist_eq_wc}
\end{subfigure}
\begin{subfigure}{.49\textwidth}
\centering
\includegraphics[width=0.98\textwidth]{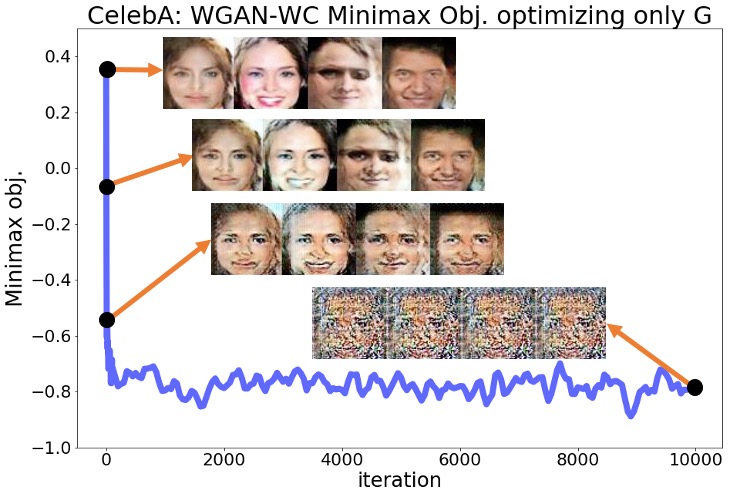}
\caption{Results on the CelebA data} 
\label{fig:celeba_eq_wc}
\end{subfigure}
\caption{Optimizing the trained generator of WGAN-WC with a fixed discriminator on the MNIST and CelebA data. The GAN's objective and samples' quality were decreasing over the optimization.} 
\end{figure}

\begin{figure}
\centering
\begin{subfigure}{0.49\textwidth}
\centering
\includegraphics[width=0.98\textwidth]{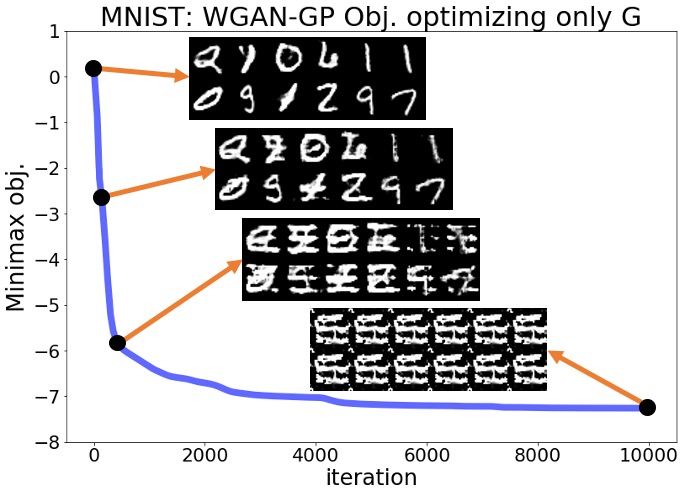}
\caption{Results on the MNIST data} 
\label{fig:mnist_eq_gp}
\end{subfigure}
\begin{subfigure}{.49\textwidth}
\centering
\includegraphics[width=0.98\textwidth]{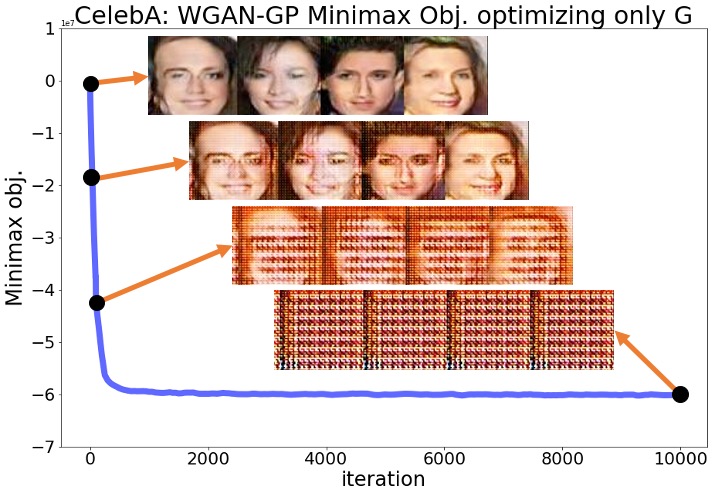}
\caption{Results on the CelebA data} 
\label{fig:celeba_eq_gp}
\end{subfigure}
\caption{Optimizing the trained generator of WGAN-GP with a fixed discriminator on the MNIST and CelebA data. The GAN's objective and samples' quality were decreasing over the optimization.} 
\end{figure}

\begin{figure}
\centering
\begin{subfigure}{0.49\textwidth}
\centering
\includegraphics[height=16cm]{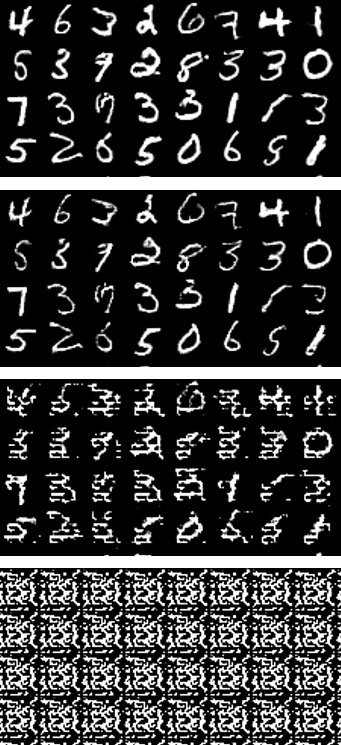}
\caption{Results on the MNIST data} 
\label{fig:mnist_sn}
\end{subfigure}
\begin{subfigure}{.49\textwidth}
\centering
\includegraphics[height=16cm]{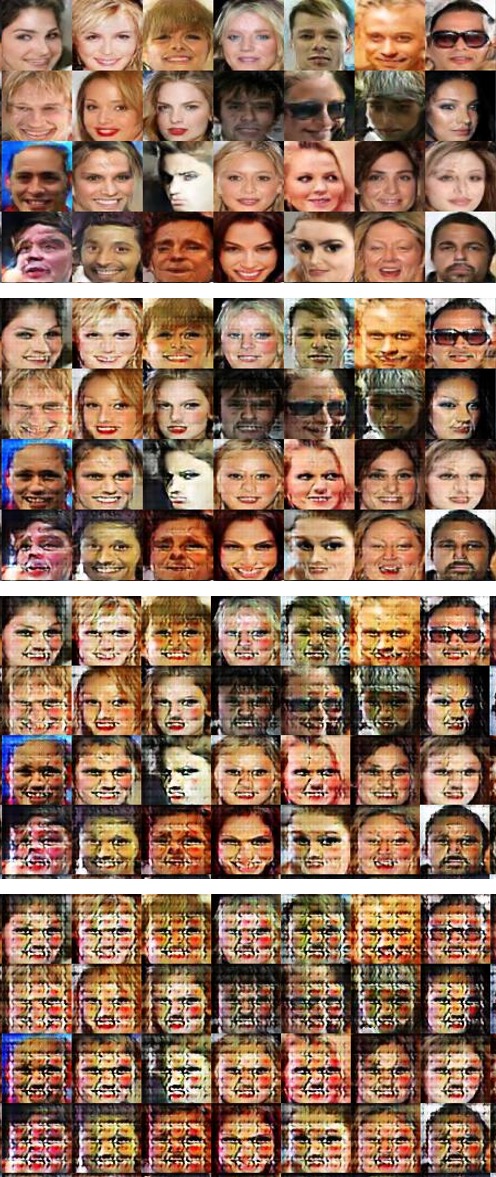}
\caption{Results on the CelebA data} 
\label{fig:celeba_sn}
\end{subfigure}
\caption{SN-GAN's generated samples at the iterations marked in Figures \ref{fig:mnist_eq} \& \ref{fig:celeba_eq}} 
\end{figure}

\begin{figure}
\centering
\begin{subfigure}{0.49\textwidth}
\centering
\includegraphics[height=16cm]{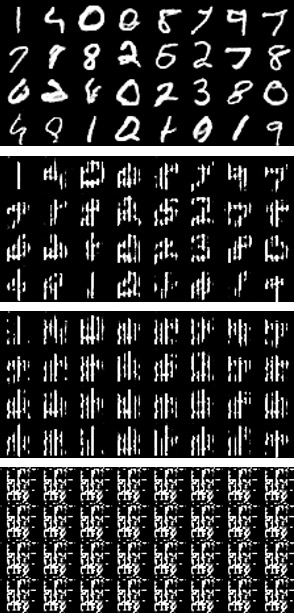}
\caption{Results on the MNIST data} 
\label{fig:mnist_wc}
\end{subfigure}
\begin{subfigure}{.49\textwidth}
\centering
\includegraphics[height=16cm]{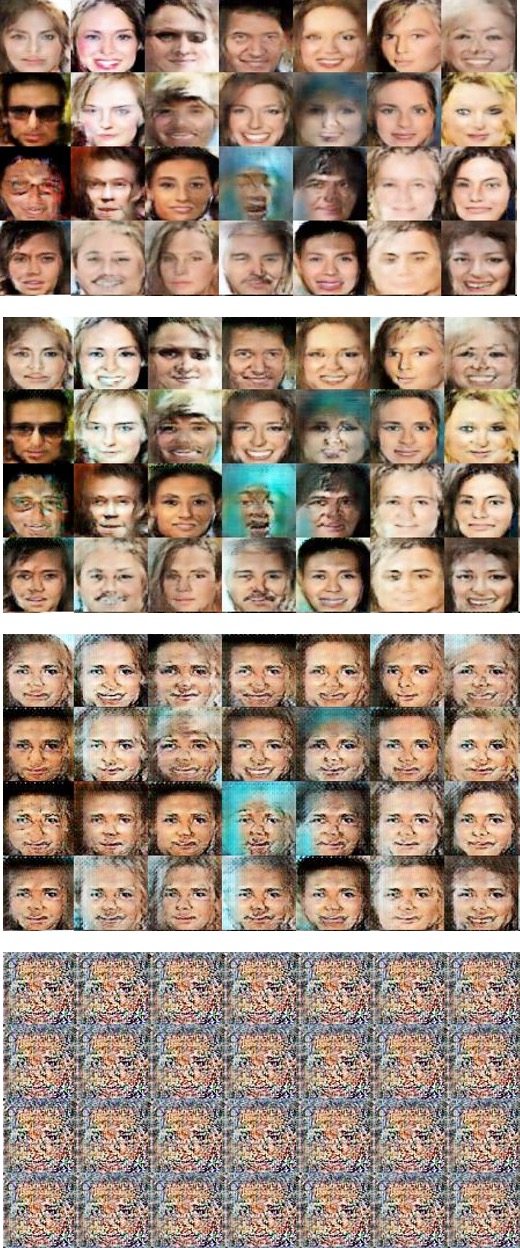}
\caption{Results on the CelebA data} 
\label{fig:celeba_wc}
\end{subfigure}
\caption{WGAN-WC's generated samples at the iterations marked in Figures \ref{fig:mnist_eq_wc} \& \ref{fig:celeba_eq_wc}} 
\end{figure}

\begin{figure}
\centering
\begin{subfigure}{0.49\textwidth}
\centering
\includegraphics[height=16cm]{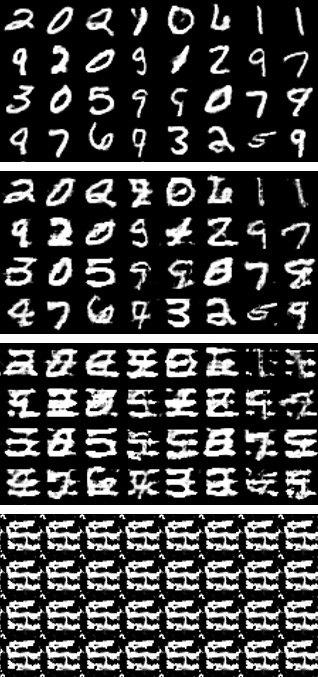}
\caption{Results on the MNIST data} 
\label{fig:mnist_gp}
\end{subfigure}
\begin{subfigure}{.49\textwidth}
\centering
\includegraphics[height=16cm]{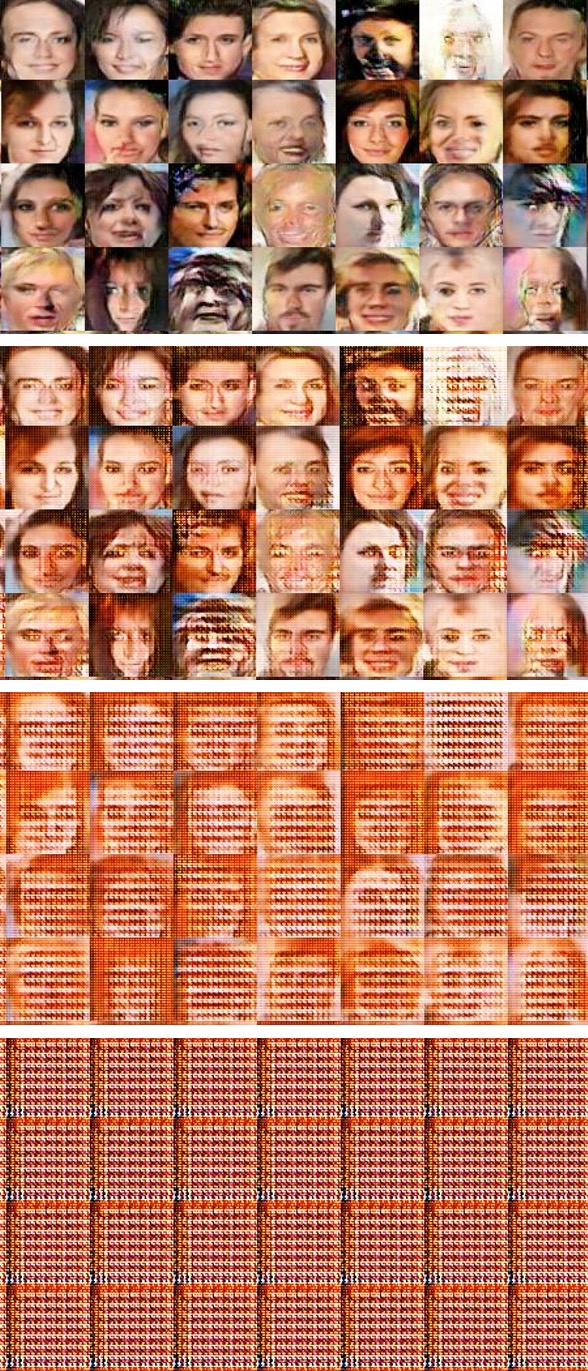}
\caption{Results on the CelebA data} 
\label{fig:celeba_gp}
\end{subfigure}
\caption{WGAN-GP's generated samples at the iterations marked in Figures \ref{fig:mnist_eq_gp} \& \ref{fig:celeba_eq_gp}}
\end{figure}

\section{Numerical Results for Section 9}
Here, we present the complete numerical results for the experiments of Section 9 in the main text. Figures \ref{fig:mnist_eq_pt_wc}-\ref{fig:celeba_eq_pt_gp} demonstrate the results of the main text's Figures 3,4 for the WGAN-WC and WGAN-GP problems. Here, except the WGAN-GP experiment on the CelebA dataset, we observed that the objective and the generated samples' quality did not significantly decrease over the generator optimization. Even for the WGAN-GP experiment on the CelebA data, we observed that the objective value decreased three times less than in minimizing the original objective rather than the proximal objective. These experiments suggest that the Wasserstein and Lipschitz GAN problems can converge to local proximal equilibrium solutions. We also show a larger group of generated samples at the beginning and final iterations of Figures 3,4 in the main text and Figures \ref{fig:mnist_eq_pt_wc}-\ref{fig:celeba_eq_pt_gp} in Figures \ref{fig:mnist_pt_sn}-\ref{fig:celeba_pt_gp}. 

For the proximal training experiments, Figures \ref{fig:pt_cifar_sn}-\ref{fig:pt_celeba_wc} show the samples generated by the SN-GAN and WGAN-WC proximally trained on CIFAR-10 and CelebA data with the results for the baseline regular training on the top of the figure and the results for proximal training on the bottom. We observed a somewhat improved quality achieved by proximal training, which was further supported by the inception scores for the CIFAR-10 experiments reported in the main text.

\begin{figure}[h]
\centering
\begin{subfigure}{0.49\textwidth}
\centering
\includegraphics[width=0.98\textwidth]{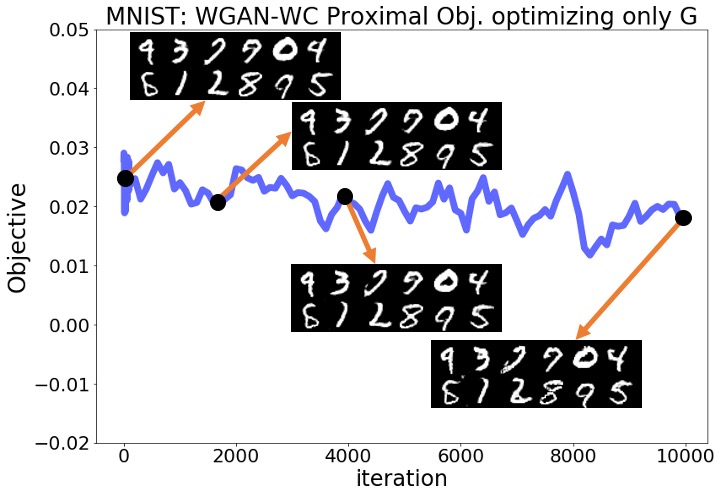}
\caption{Results on the MNIST data} 
\label{fig:mnist_eq_pt_wc}
\end{subfigure}
\begin{subfigure}{.49\textwidth}
\centering
\includegraphics[width=0.98\textwidth]{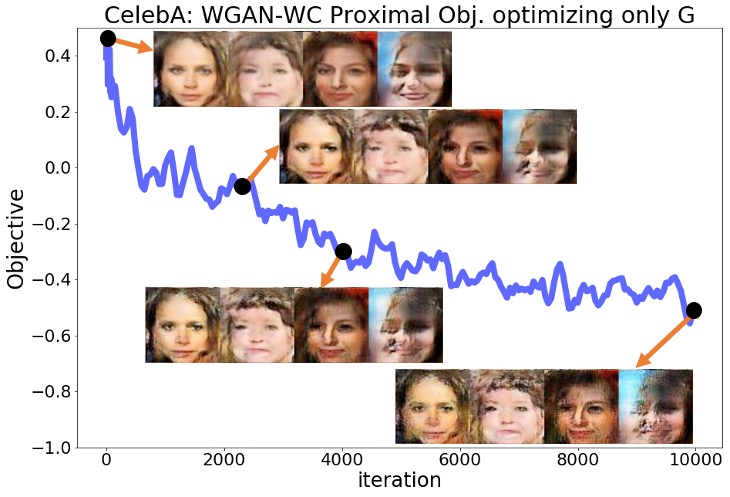}
\caption{Results on the CelebA data} 
\label{fig:celeba_eq_pt_wc}
\end{subfigure}
\caption{Optimizing the proximal objective for the trained generator in WGAN-WC with a fixed discriminator on data. The GAN's objective and samples' quality were preserved over the optimization.}
\end{figure}

\begin{figure}[h]
\centering
\begin{subfigure}{0.49\textwidth}
\centering
\includegraphics[width=0.98\textwidth]{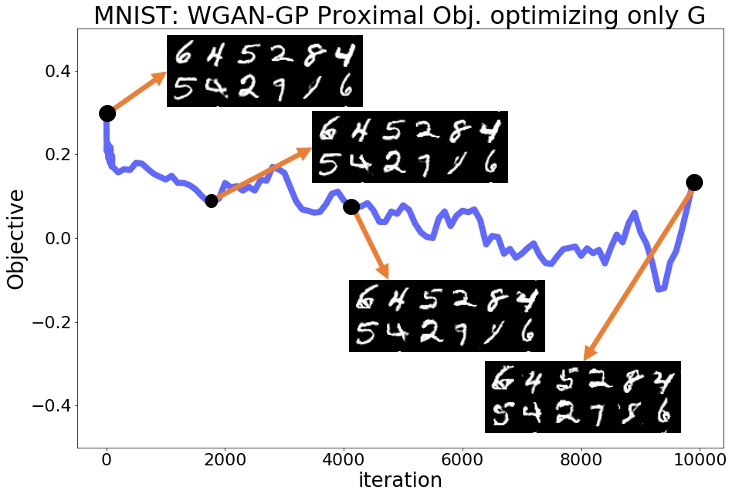}
\caption{Results on the MNIST data} 
\label{fig:mnist_eq_pt_gp}
\end{subfigure}
\begin{subfigure}{.49\textwidth}
\centering
\includegraphics[width=0.98\textwidth]{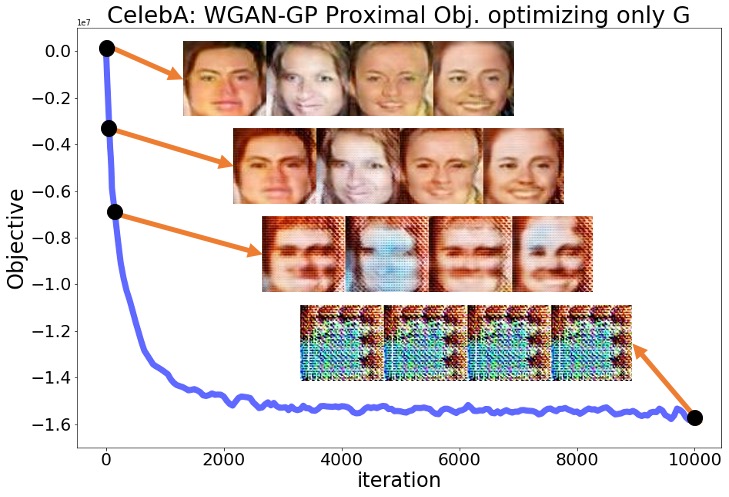}
\caption{Results on the CelebA data} 
\label{fig:celeba_eq_pt_gp}
\end{subfigure}
\caption{Optimizing the proximal objective for the trained generator in WGAN-GP with a fixed discriminator on data. The GAN's objective and samples' quality were preserved over the optimization.}
\end{figure}

\begin{figure}[h]
\centering
\begin{subfigure}{0.49\textwidth}
\centering
\includegraphics[height=9cm]{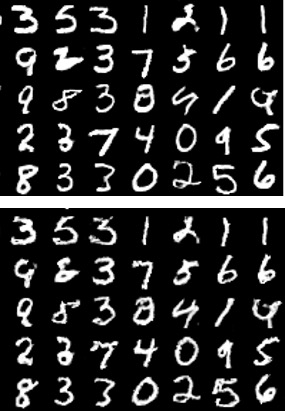}
\caption{Results on the MNIST data} 
\label{fig:mnist_pt_sn}
\end{subfigure}
\begin{subfigure}{.49\textwidth}
\centering
\includegraphics[height=9cm]{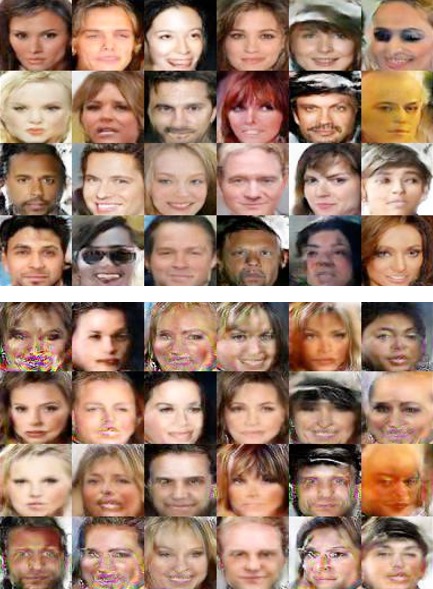}
\caption{Results on the CelebA data} 
\label{fig:celeba_pt_sn}
\end{subfigure}
\caption{SN-GAN's generated samples at the first and last iterations of Figures \ref{fig:mnist_eq_pt},\ref{fig:celeba_eq_pt}} 
\end{figure}

\begin{figure}[h]
\centering
\begin{subfigure}{0.49\textwidth}
\centering
\includegraphics[height=9cm]{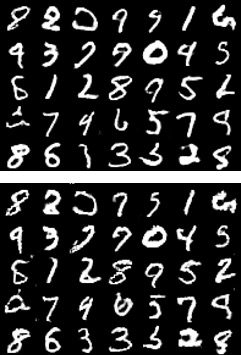}
\caption{Results on the MNIST data} 
\label{fig:mnist_pt_wc}
\end{subfigure}
\begin{subfigure}{.49\textwidth}
\centering
\includegraphics[height=9cm]{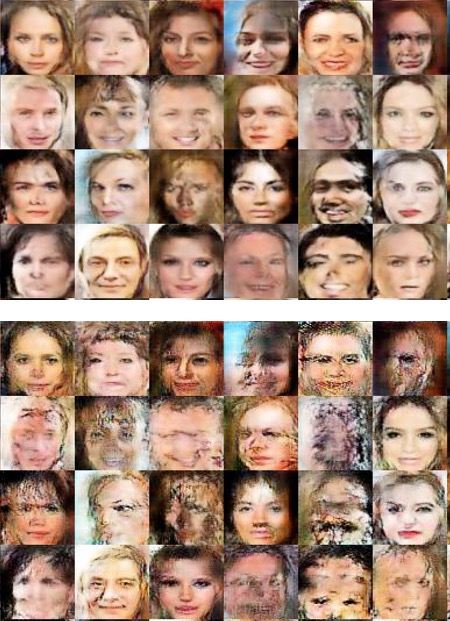}
\caption{Results on the CelebA data} 
\label{fig:celeba_pt_wc}
\end{subfigure}
\caption{WGAN-WC's generated samples at the first and last iterations of Figures \ref{fig:mnist_eq_pt_wc},\ref{fig:celeba_eq_pt_wc}} 
\end{figure}

\begin{figure}[h]
\centering
\begin{subfigure}{0.49\textwidth}
\centering
\includegraphics[height=9cm]{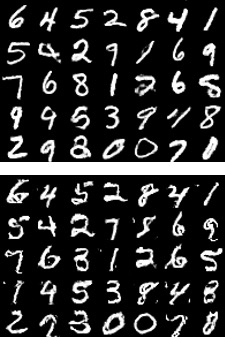}
\caption{Results on the MNIST data} 
\label{fig:mnist_pt_gp}
\end{subfigure}
\begin{subfigure}{.49\textwidth}
\centering
\includegraphics[height=9cm]{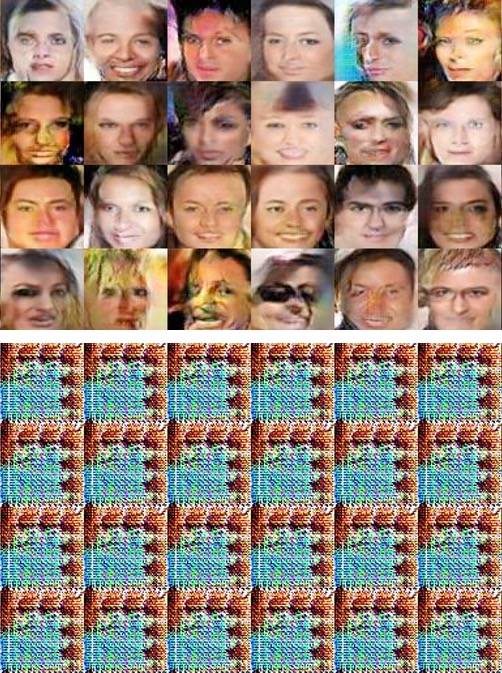}
\caption{Results on the CelebA data} 
\label{fig:celeba_pt_gp}
\end{subfigure}
\caption{WGAN-GP's generated samples at the first and last iterations of Figures \ref{fig:mnist_eq_pt_gp},\ref{fig:celeba_eq_pt_gp}} 
\end{figure}

\begin{figure}[h]
\centering
\includegraphics[width=0.5\textwidth]{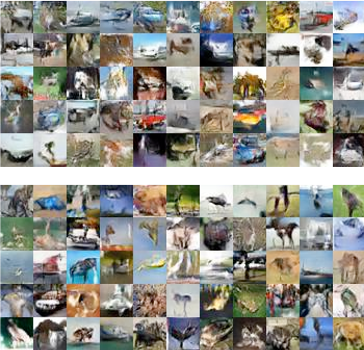}
\caption{The images generated by the SN-GAN (DIM=128) trained on CIFAR-10 data with (top) ordinary and (bottom) proximal training.} 
\label{fig:pt_cifar_sn}
\end{figure}

\begin{figure}[h]
\centering
\begin{subfigure}{0.49\textwidth}
\centering
\includegraphics[width=0.98\textwidth]{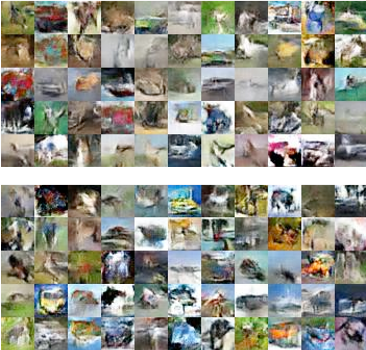}
\caption{WGAN-WC with DCGAN-DIM=64} 
\label{fig:pt_cifar_wc_64}
\end{subfigure}
\begin{subfigure}{.49\textwidth}
\centering
\includegraphics[width=0.98\textwidth]{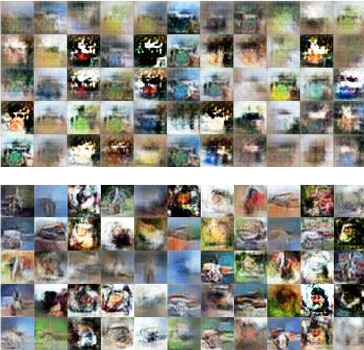}
\caption{WGAN-WC with DCGAN-DIM=128} 
\label{fig:pt_cifar_wc_128}
\end{subfigure}
\caption{The images generated by the WGAN-WC trained on CIFAR-10 data with (top) ordinary and (bottom) proximal training.} 
\end{figure}

\begin{figure}[h]
\centering
\begin{subfigure}{0.49\textwidth}
\centering
\includegraphics[width=0.98\textwidth]{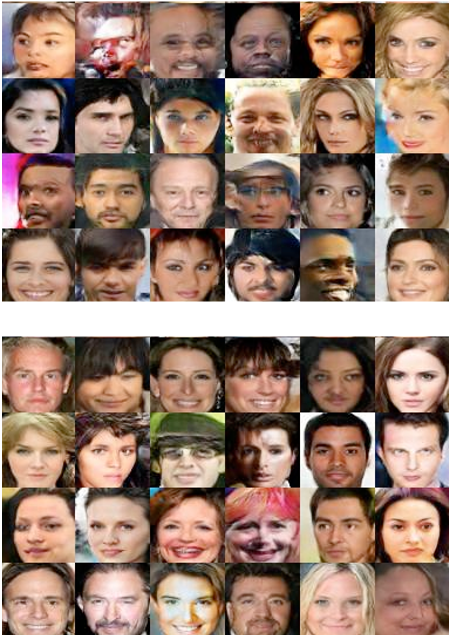}
\caption{The results for SN-GAN} 
\label{fig:pt_celeba_sn}
\end{subfigure}
\begin{subfigure}{.49\textwidth}
\centering
\includegraphics[width=0.99\textwidth]{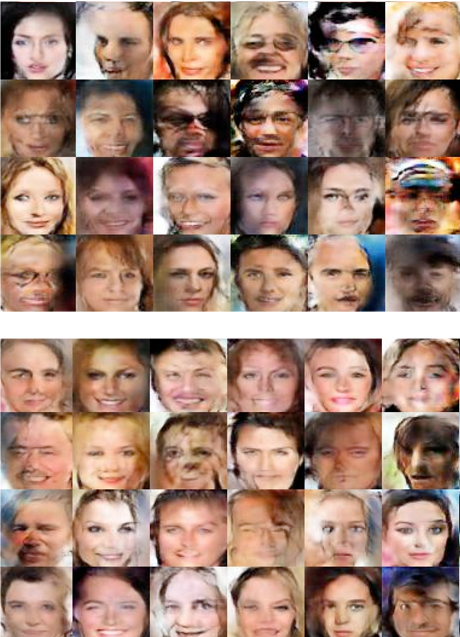}
\caption{The results for WGAN-WC} 
\label{fig:pt_celeba_wc}
\end{subfigure}
\caption{The images generated by the GAN trained on CelebA data with (top) ordinary and (bottom) proximal training.} 
\end{figure}

\section{Proofs}
\subsection{Proof of Proposition 1}

\textbf{Proof for $f$-GANs:} 

Consider the following $f$-GAN minimax problem corresponding to the convex function $f$:
\begin{equation}\label{Prop1 proof: f-GAN}
\min_{G\in \mathcal{G}}\: \max_{D}\: \mathbb{E}\bigl[D(\mathbf{X})\bigr] - \mathbb{E}\bigl[f^*(D(G(\mathbf{Z})))\bigr].   
\end{equation}
Due to the realizability assumption, given $G^*\in\mathcal{G}$ we assume that the data distribution and the generative model are identical, i.e., $P_{\mathbf{X}}=P_{G^*(\mathbf{Z})}$. Then, the minimax objective for $G^*$ reduces to
\begin{equation}
\mathbb{E}_{P_{\mathbf{X}}}\bigl[\, D(\mathbf{X}) -f^*(D(\mathbf{X})) \, \bigr].
\end{equation}
The above objective decouples across $\mathbf{X}$ outcomes. As a result, the maximizing discriminator $D^*(\mathbf{x})=f'(1)$ will be a constant function where the constant value $f'(1)$ follows from the optimization problem:
\begin{equation}
    f'(1) = \underset{u\in\mathbb{R}}{\arg\!\max}\: u -f^*(u). 
\end{equation}
Note that the objective $u -f^*(u)$ is a concave function of $u$ whose derivative is zero at ${{f^*}'}^{-1}(1)=f'(1)$, because the Fenchel-conjugate of a convex $f$ satisfies ${{f^*}'}^{-1} = f'$. 

So far we have proved that the constant function $D_{\operatorname{constant}}(\mathbf{x})=f'(1)$ provides the optimal discriminator for generator $G^*$. Therefore, for every discriminator $D$ we have
\begin{equation}\label{prop1: f-GAN part1}
    V(G^*,D)\le V(G^*,D_{\operatorname{constant}}),
\end{equation}
where $V(G,D)$ denotes the $f$-GAN's minimax objective. Moreover, note that for a constant $D$ the value of the minimax objective does not change with generator $G$. As a result, for every $G$
\begin{equation}\label{prop1: f-GAN part2}
    V(G,D_{\operatorname{constant}}) = V(G^*,D_{\operatorname{constant}}).
\end{equation}
Then, \eqref{prop1: f-GAN part1} and \eqref{prop1: f-GAN part2} collectively prove that for every $G$ and $D$ we have
\begin{equation*}
    V(G^*,D) \le  V(G^*,D_{\operatorname{constant}}) \le V(G,D_{\operatorname{constant}}),
\end{equation*}
which completes the proof for $f$-GANs.

\textbf{Proof for Wasserstein GANs: } 

Consider a general Wasserstein GAN problem with a cost function $c$ satisfying  $c(\mathbf{x},\mathbf{x}) = 0$ for every $\mathbf{x}$. Notice that this property holds for all Wasserstein distance measures corresponding to cost function $\Vert \mathbf{x}-\mathbf{x}' \Vert^q$ for  $q\ge 1$. The generalized Wasserstein GAN minimax problem is as follows:
\begin{equation}
    \min_{G\in\mathcal{G}}\: \max_{D \, \text{\rm $c$-concave}}\: \mathbb{E}[D(\mathbf{X})] - \mathbb{E}\bigl[ D^c(G(\mathbf{Z})) \bigr]. 
\end{equation}
Due to the realizability assumption, a generator function $G^*\in\mathcal{G}$ results in the data distribution such that $P_{G^*(\mathbf{Z})}=P_{\mathbf{X}}$. Then, the above minimax objective for $G^*$ reduces to
\begin{equation}\label{prop1: optimal transport gan, objective}
    \mathbb{E}_{P_{\mathbf{X}}}\bigl[\, D(\mathbf{X}) - D^c(\mathbf{X})\, \bigr]. 
\end{equation}
Since the cost is assumed to take a zero value given identical inputs, we have:
\begin{align*}
    D^c(\mathbf{x}) :&= \max_{\mathbf{x}'}\, D(\mathbf{x}') - c(\mathbf{x},\mathbf{x}') \\
    &\ge D(\mathbf{x}) - c(\mathbf{x},\mathbf{x}) \\
    &= D(\mathbf{x}).
\end{align*}
As a result, $D(\mathbf{x})-D^c(\mathbf{x})\le 0$ holds for every $\mathbf{x}$. Hence, the objective in \eqref{prop1: optimal transport gan, objective} will be non-positive and takes its maximum zero value for any constant function $D_{\operatorname{constant}}$, which by definition satisfies $c$-concavity. Therefore, letting $V(G,D)$ denote the GAN minimax objective, for every $D$ we have
\begin{equation}\label{prop1: OT-GAN part1}
    V(G^*,D) \le V(G^*,D_{\operatorname{constant}}).
\end{equation}
We also know that for a constant discriminator $D_{\operatorname{constant}}$ the value of the minimax objective is independent from the generator function. Therefore, for every $G$ we have
\begin{equation}\label{prop1: OT-GAN part2}
     V(G^*,D_{\operatorname{constant}}) = V(G,D_{\operatorname{constant}}).
\end{equation}
As a result, \eqref{prop1: OT-GAN part1} and \eqref{prop1: OT-GAN part2} together show that for every $G$ and $D$ 
\begin{equation}
      V(G^*,D) \le V(G^*,D_{\operatorname{constant}}) \le V(G,D_{\operatorname{constant}}),
\end{equation}
which makes the proof complete for Wasserstein GANs.

\subsection{Proof of Theorem 1 \& Remark 1}
\textbf{Proof for $f$-GANs:}

\begin{lemma}\label{Lemma f-div}
Consider two random vectors $\mathbf{X}, \widetilde{\mathbf{X}}$ with probability density functions $p,\,q$, respectively. Suppose that $p,q$ are non-zero everywhere. Then, considering the following variational representation of $d_f(P,Q)$,
\begin{equation}
    d_f(P,Q) = \max_{D}\: \mathbb{E}[D(\mathbf{X})] - \mathbb{E}[f^*(D(\widetilde{\mathbf{X}}))],
\end{equation}
the optimal solution $D^*$ will satisfy
\begin{equation}
D^*(\mathbf{x}) = f'\bigl(\frac{p(\mathbf{x})}{q(\mathbf{x})}\bigr). 
\end{equation}
\end{lemma}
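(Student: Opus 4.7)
The plan is to reduce the functional optimization over $D$ to a pointwise calculus exercise on $\mathbb{R}$, exploiting that $D$ is unconstrained and that the integrand depends on $D$ only through its pointwise value. First I would rewrite the variational objective as
\begin{equation*}
\mathbb{E}\bigl[D(\mathbf{X})\bigr] - \mathbb{E}\bigl[f^*(D(\widetilde{\mathbf{X}}))\bigr]
\,=\, \int \bigl( p(\mathbf{x}) D(\mathbf{x}) - q(\mathbf{x}) f^*(D(\mathbf{x})) \bigr) \diff \mathbf{x},
\end{equation*}
and observe that, since $D$ ranges over all functions, the supremum of the integral equals the integral of the pointwise suprema. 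Thus for each fixed $\mathbf{x}$ the problem reduces to maximizing $\varphi_{\mathbf{x}}(u) \coloneqq p(\mathbf{x})\, u - q(\mathbf{x})\, f^*(u)$ over $u\in\mathbb{R}$.

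Next, using that $q(\mathbf{x})>0$ by hypothesis, I would factor out $q(\mathbf{x})$ and recognize the remaining pointwise problem $\sup_u \{ (p(\mathbf{x})/q(\mathbf{x}))\, u - f^*(u) \}$ as the Fenchel biconjugate $f^{**}$ evaluated at the likelihood ratio $p(\mathbf{x})/q(\mathbf{x})$. For a convex lower semicontinuous $f$, biconjugation satisfies $f^{**}=f$, so the pointwise maximum value equals $q(\mathbf{x})\, f(p(\mathbf{x})/q(\mathbf{x}))$. The first-order optimality condition for the pointwise problem is $p(\mathbf{x})/q(\mathbf{x}) = (f^*)'(u)$, and invoking the standard Legendre duality identity $(f^*)' = (f')^{-1}$, I would invert to obtain
\begin{equation*}
D^*(\mathbf{x}) \,=\, f'\!\bigl(p(\mathbf{x})/q(\mathbf{x})\bigr),
\end{equation*}
which is exactly the stated form of the maximizer. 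Since the integrand is maximized pointwise by this choice, $D^*$ also maximizes the integral, and the resulting maximum equals the $f$-divergence (up to the argument-order convention of the definition).

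The main technical obstacle will be justifying the closed-form inversion $u^\star = f'(p/q)$, which presupposes that $f$ (equivalently $f^*$) is differentiable and strictly convex so that the pointwise maximizer is unique and the Legendre identity $(f^*)'=(f')^{-1}$ applies. These conditions hold under the mild regularity assumptions standard in the $f$-GAN literature of \cite{nowozin2016f}, which cover every example of interest here. In the general non-smooth case one would replace $f'$ by a selection from the subdifferential $\partial f$, and because $p,q$ are strictly positive by assumption no measure-zero ambiguity arises in recovering $D^*$ as an almost-everywhere well-defined function.
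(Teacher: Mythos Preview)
Your proposal is correct and follows essentially the same route as the paper: rewrite the objective as an integral, decouple the optimization pointwise in $\mathbf{x}$, apply the first-order condition $p(\mathbf{x})/q(\mathbf{x})=(f^*)'(u)$, and invert via the Legendre identity $(f^*)'=(f')^{-1}$ to obtain $D^*(\mathbf{x})=f'\bigl(p(\mathbf{x})/q(\mathbf{x})\bigr)$. Your additional remarks on the biconjugate $f^{**}=f$ and the smoothness/strict-convexity caveats are extra polish but not a different argument.
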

\begin{proof}
Let us rewrite the $f$-divergence's variational representation as
\begin{align*}
 d_f(P,Q) \,  &= \, \max_{D}\: \mathbb{E}[D(\mathbf{X})] - \mathbb{E}[f^*(D(\widetilde{\mathbf{X}}))]  \\
 &=\, \max_{D}\: \int \bigl[ p(\mathbf{x})D(\mathbf{x}) - q(\mathbf{x})f^*(D(\mathbf{x}))\bigr]\diff{\mathbf{x}} \\
 & = \int \max_{D(\mathbf{x})}\bigl[ p(\mathbf{x})D(\mathbf{x}) - q(\mathbf{x})f^*(D(\mathbf{x}))\bigr]\diff{\mathbf{x}}
\end{align*}
where the last equality holds, since the maximization objective decouples across $\mathbf{x}$ values. It can be seen that the inside optimization problem for each $D(\mathbf{x})$ is maximizing a concave objective in which by setting the derivative to zero we obtain
\begin{equation}
   {f^*}'(D^*(\mathbf{x})) = \frac{p(\mathbf{x})}{q(\mathbf{x})}.
\end{equation}
As a property of the Fenchel-conjugate of a convex $f$, we know ${{f^*}'}^{-1}=f'$ which combined with the above equation implies that
\begin{equation}
   D^*(\mathbf{x}) = f'\bigr(\frac{p(\mathbf{x})}{q(\mathbf{x})}\bigr).
\end{equation}
The above result completes Lemma \ref{Lemma f-div}'s proof.
\end{proof}
Consider the $f$-GAN problem with the generator function specified in the theorem:
\begin{equation}
    \min_{\mathbf{W},\mathbf{u}:\, \Vert \mathbf{W} \Vert_2\le 1}\: \max_{D}\: \mathbb{E}\bigl[D(\mathbf{X})\bigr] - \mathbb{E}\bigl[f^*\bigl(D(\mathbf{W}\mathbf{Z}+\mathbf{u})\bigr)\bigr].
\end{equation}
Note that $\mathbf{X}\sim \mathcal{N}(\mathbf{0},\sigma^2 I)$ and $\mathbf{W}\mathbf{Z}+\mathbf{u}\sim \mathcal{N}(\mathbf{u},\mathbf{W}\mathbf{W}^T)$. Notice that if $\mathbf{W}$ was not full-rank, the maximized discriminator objective would be $+\infty$ achieved by a $D$ assigning an infinity value to the points not included in the rank-constrained support set of generator $\mathbf{W}\mathbf{z}+\mathbf{u}$. This will not result in a solution to the $f$-GAN problem, because we assume that the dimensions of $\mathbf{X}$ and $\mathbf{Z}$ match each other and hence there exists a full-rank $\mathbf{W}$ with a finite maximized objective, i.e. $f$-divergence value. Therefore, in a Nash equilibrium of the $f$-GAN problem, the solution $\mathbf{W}$ must be full-rank and invertible.

Lemma \ref{Lemma f-div} results in the following equation for the optimal discriminator $D^*_{\mathbf{W},\mathbf{u}}$ given generator parameters $\mathbf{W},\mathbf{u}$:
\begin{align*}
    D^*_{\mathbf{W},\mathbf{u}}(\mathbf{x})\, &=\, f'\bigl( \frac{\frac{1}{\sqrt{(2\pi \sigma^2)^k}}\exp\big\{- \frac{1}{2\sigma^2}\big\Vert \mathbf{x} \big\Vert_2^2\big\} }{\frac{1}{\sqrt{(2\pi)^k \det(\mathbf{W}\mathbf{W}^T)}}\exp\big\{- \frac{1}{2}\big\Vert (\mathbf{W}\mathbf{W}^T)^{-1/2}\mathbf{(x-u)} \big\Vert_2^2\big\}}\bigl) \\
    &= \, f'\bigl(\sqrt{\frac{\det(\mathbf{W}\mathbf{W}^T)}{\sigma^{2k}}} \exp\biggl\{ \frac{1}{2}\mathbf{x}^T((\mathbf{W}\mathbf{W}^T)^{-1} - \sigma^{-2}I)\mathbf{x} - \mathbf{u}^T(\mathbf{W}\mathbf{W}^T)^{-1/2}\mathbf{x} +\mathbf{u}^T (\mathbf{W}\mathbf{W}^T)^{-1}\mathbf{u} \biggr\}\, \bigr).
\end{align*}
As a result, the function $f^*(D^*_{\mathbf{W},\mathbf{u}}(\cdot))$ appearing in the $f$-GAN's minimax objective will be
\begin{align*}
    {f^*}\bigl(D^*_{\mathbf{W},\mathbf{u}}(\mathbf{x})\bigr) \, =\, & {f^*}\biggl(f'\biggl( \sqrt{\frac{\det(\mathbf{W}\mathbf{W}^T)}{\sigma^{2k}}} \exp\biggl\{ \frac{1}{2}\mathbf{x}^T( (\mathbf{W}^T\mathbf{W})^{-1} \\
     - & \sigma^{-2}I)\mathbf{x} - \mathbf{u}^T(\mathbf{W}\mathbf{W}^T)^{-1/2}\mathbf{x} +\mathbf{u}^T (\mathbf{W}\mathbf{W}^T)^{-1}\mathbf{u} \biggr\}\biggr)\biggr).
\end{align*}
\textbf{Claim:} $f^*(D^*_{\mathbf{W},\mathbf{u}}(\mathbf{x}))$ is a strictly convex function of $\mathbf{x}$.

To show this claim, note that the following expression is a strongly-convex quadratic function of $\mathbf{x}$, since we have assumed that the spectral norm of $\mathbf{W}$ is bounded as ${\sigma}_{\max}( \mathbf{W})\le 1 < \sigma$:
\begin{align*}
\frac{1}{2}\mathbf{x}^T( (\mathbf{W}^T\mathbf{W})^{-1} - \sigma^{-2}I)\mathbf{x} - \mathbf{u}^T(\mathbf{W}^T\mathbf{W})^{-1/2}\mathbf{x} \mathbf{u}^T (\mathbf{W}^T\mathbf{W})^{-1}\mathbf{u}.
\end{align*}
For simplicity, we denote the above strongly-convex function with $g(\mathbf{x})$ and define the function $h:\mathbb{R}\rightarrow\mathbb{R}$ as 
\begin{align*}
    h(y) := {f^*}\bigl({{f}'}\bigl( \sqrt{\frac{\det(\mathbf{W}\mathbf{W}^T)}{\sigma^{2k}}}\times  e^y \bigr) \bigr).
\end{align*}
According to the above definitions, $f^*(D^*_{\mathbf{W},\mathbf{u}}(\mathbf{x})) =h(g(\mathbf{x}))$ is the composition of $h$ and strongly-convex $g$. Note that $h$ is a monotonically increasing function, since defining $c=\sqrt{\frac{\det(\mathbf{W}\mathbf{W}^T)}{\sigma^{2k}}} > 0$ we have
\begin{equation}
    h'(y) = (c\, e^{y})^2\, f''(c\,e^y)\ge 0, 
\end{equation}
which follows from the equality $${f^*}(f'(z)) :=\sup_u \{ uf'(z)-f(u) \} \, = \, zf'(z) - f(z)$$ that is a consequence of the definition of Fenchel-conjugate, implying that
$\frac{\diff{}{f^*}(f'(z))}{\diff{z}} = zf''(z)$ for the convex $f$. Note that $h'(y)>0$ holds everywhere, because $f$ is assumed to be strictly convex. This proves that $h$ is strictly increasing. 
Furthermore, $h$ is a convex function, because $h'(y)$ is non-decreasing due to the assumption that $t^2 f''(t)$ is non-decreasing over $t\in (0,+\infty)$. As a result, $h$ is an increasing convex function.

Therefore, $f^*(D^*_{\mathbf{W},\mathbf{u}}(\mathbf{x})) =h(g(\mathbf{x}))$ is a composition of a strongly-convex $g$ and an increasing convex $h:\mathbb{R}\rightarrow\mathbb{R}$. Therefore, as a well-known result in convex optimization \cite{boyd2004convex}, the claim is true and $f^*(D^*_{\mathbf{W},\mathbf{u}}(\mathbf{x}))$ is a strictly convex function of $\mathbf{x}$.

We showed that the claim is true for every feasible $\mathbf{W}, \mathbf{u}$. Now, we prove that the pair $(G_{\mathbf{W},\mathbf{u}},D^*_{\mathbf{W},\mathbf{u}})$ will not be a local Nash equilibrium for any feasible $\mathbf{W}, \mathbf{u}$. If the pair $(G_{\mathbf{W},\mathbf{u}},D^*_{\mathbf{W},\mathbf{u}})$ was a local Nash equilibrium, $\mathbf{W},\mathbf{u}$ would be a local minimum for the following minimax objective where $D^*$ is fixed to be $D^*_{\mathbf{W},\mathbf{u}}$:
\begin{equation}\label{thm: f-GAN no Nash}
    \mathbb{E}\bigl[D^*(\mathbf{X})\bigr] - \mathbb{E}\bigl[f^*\bigl(D^*(\mathbf{W}\mathbf{Z}+\mathbf{u})\bigr)\bigr].
\end{equation}
However, as shown earlier, for any feasible $\mathbf{W},\mathbf{u}$, $f^*\bigl(D^*(\mathbf{x}))$ is a strictly-convex function of $\mathbf{x}$, which in turn shows that \eqref{thm: f-GAN no Nash} is a strictly-concave function of variable $\mathbf{u}$. This consequence proves that the objective has no local minima for the unconstrained variable $\mathbf{u}$. Due to the shown contradiction, a pair with the form $(G_{\mathbf{W},\mathbf{u}},D^*_{\mathbf{W},\mathbf{u}})$ cannot be a local Nash equilibrium in parameters $\mathbf{W},\mathbf{u}$. Consequently, the minimax problem has no pure Nash equilibrium solutions, since in a pure Nash equilibrium the discriminator will be by definition optimal against the choice of generator.

\textbf{Proof for W2GANs:} 

Consider the W2GAN problem with the assumed generator function: 
\begin{equation}
    \min_{\mathbf{W},\mathbf{u}:\, \Vert \mathbf{W} \Vert_2\le 1}\: \max_{D\,\text{\rm $c$-concave} }\: \mathbb{E}\bigl[D(\mathbf{X})\bigr] - \mathbb{E}\bigl[D^c(\mathbf{W}\mathbf{Z}+\mathbf{u})\bigr],
\end{equation}
where the c-transform is defined for the quadratic cost function $c(\mathbf{x},\mathbf{x}')=\frac{1}{2}\Vert \mathbf{x}-\mathbf{x}'\Vert_2^2$. Similar to the $f$-GAN case, define $D^*_{\mathbf{W},\mathbf{u}}$ to be the optimal discriminator for the generator function parameterized by $\mathbf{W},\mathbf{u}$. Note that $\mathbf{X}\sim \mathcal{N}(\mathbf{0},\sigma^2 I)$ and $\mathbf{W}\mathbf{Z}+\mathbf{u}\sim \mathcal{N}(\mathbf{u},\mathbf{W}\mathbf{W}^T)$. 

According to the Brenier's theorem \cite{villani2008optimal}, the optimal transport from the Gaussian data distribution to the Gaussian generative model will be $$\psi^{\operatorname{opt}}(\mathbf{x}) = \mathbf{x} - \nabla_{\mathbf{x}} D^{*}_{\mathbf{W},\mathbf{u}} (\mathbf{x}).\footnote{Notice the change of variable $D(\mathbf{x})=\frac{1}{2}\Vert\mathbf{x} \Vert^2-\psi(\mathbf{x})$ compared to the formulation discussed at \cite{villani2008optimal,feizi2017understanding} which are based on the function $\psi$.} $$ 

As a well-known result regarding the second-order optimal transport map between two Gaussian distributions, the optimal transport will be a linear transformation as $\psi^{\operatorname{opt}}(\mathbf{x}) = \frac{1}{\sigma}(\mathbf{W}\mathbf{W}^T)^{1/2}\mathbf{x}+\mathbf{u}$. This result shows that
\begin{equation}
 \nabla_{\mathbf{x}} D^{*}_{\mathbf{W},\mathbf{u}} (\mathbf{x}) =  \bigl(I-\frac{1}{\sigma}(\mathbf{W}\mathbf{W}^T)^{1/2} \bigr)\mathbf{x}-\mathbf{u}.  
\end{equation}
Note that the c-transform for cost $c(\mathbf{x},\mathbf{x}')=\frac{1}{2}\Vert \mathbf{x} - \mathbf{x}'\Vert^2_2$ satisfies $D^{c}(\mathbf{x}) = (\frac{1}{2}\Vert\mathbf{x}\Vert^2-D(\mathbf{x}))^* - \frac{1}{2}\Vert\mathbf{x}\Vert^2$ where $g^*$ denotes $g$'s Fenchel-conjugate. For general convex quadratic function $g(\mathbf{x})=\frac{1}{2}\mathbf{x}^TA\mathbf{x} + \mathbf{b}^T\mathbf{x}$ we have $g^*(\mathbf{x})=\frac{1}{2}(\mathbf{x}-\mathbf{b})^TA^{\dagger}(\mathbf{x}-\mathbf{b})$ where $A^{\dagger}$ denotes $A$'s Moore Penrose pseudoinverse. Therefore, for the c-transform of the optimal discriminator we will have
\begin{align*}
 \nabla_{\mathbf{x}} D^{*^c}_{\mathbf{W},\mathbf{u}} (\mathbf{x}) = \,  \bigl(\sigma((\mathbf{W}\mathbf{W}^T)^{1/2})^{\dagger} - I\bigr)\mathbf{x} - \sigma((\mathbf{W}\mathbf{W}^T)^{1/2})^{\dagger}\mathbf{u}.  
\end{align*}
Since every feasible $\mathbf{W}$ satisfies the bounded spectral norm condition as ${\sigma}_{\max} (\mathbf{W})\le 1 < \sigma$, the optimal $D^{*^c}_{\mathbf{W},\mathbf{u}}$ will be a quadratic function whose Hessian has at least one strictly positive eigenvalue along the principal eigenvector of $\mathbf{W}\mathbf{W}^T$. The positive eigenvalue exists in general case where $\mathbf{Z}$'s dimension can be even smaller than $\mathbf{X}$'s dimension. If we had the stronger assumption that the two dimensions exactly match, similar to the f-GAN problem considered, then the pseudo-inverse $A^{\dagger}$ would be the same as the inverse $A^{-1}$ resulting in a strongly-convex quadratic $D^{*^c}_{\mathbf{W},\mathbf{u}}$. Nevertheless, as we prove here, the theorem's result on W2GAN holds in the general case and does not necessarily require the same dimension between $\mathbf{X}$ and $\mathbf{Z}$. 

Consider the W2GAN minimax objective for the pair $(G_{\mathbf{W,u}},D^*)$ where $D^*$ is fixed to be the optimal $D^*_{\mathbf{W,u}}$:
\begin{equation}
  \mathbb{E}\bigl[D(\mathbf{X})\bigr] - \mathbb{E}\bigl[{D^*}^c(\mathbf{W}\mathbf{Z}+\mathbf{u})\bigr],
\end{equation}
If $(G_{\mathbf{W,u}},D^*_{\mathbf{W,u}})$ was a local Nash equilibrium, the variables $\mathbf{W,u}$ would provide a local minimum to the above objective. However, since $D^{*^c}_{\mathbf{W},\mathbf{u}}$ is shown to be a quadratic function with a Hessian possessing positive eigenvalues, the above minimax objective will not have a local minimum in the unconstrained variable $\mathbf{u}$. Therefore, the minimax problem possesses no local Nash equilibrium solutions with the form $(G_{\mathbf{W,u}},D^*_{\mathbf{W,u}})$ and therefore no pure Nash equilibrium solutions.

For the parameterized case with a quadratic discriminator $D_{A,\mathbf{b}}(\mathbf{x})=\mathbf{x}^TA\mathbf{x}+\mathbf{b}^T\mathbf{x}$, first of all note that as shown in the proof the optimal discriminator $D^*_{\mathbf{W,u}}$ for any generator parameter $\mathbf{W,u}$ will be a $c$-concave quadratic function. Therefore, the optimal solution for the discriminator does not change because of the new quadratic constraint.  Furthermore, the discriminator optimization problem has a concave objective in parameters $A,\mathbf{b}$. This is because the discriminator $D_{A,\mathbf{b}}(\mathbf{x})$ is a linear function in terms of $A,\mathbf{b}$, and $D^c_{A,\mathbf{b}}(\mathbf{x}) = \sup_{\mathbf{x}'} D_{A,\mathbf{b}}(\mathbf{x}') - c(\mathbf{x}',\mathbf{x})$ is a convex function of $A,\mathbf{b}$ as the supremum of some affine functions is convex. 

As a result, the discriminator optimization reduces to maximizing a concave objective of $A,\mathbf{b}$ constrained to a convex set $\{A:\, I-A\succcurlyeq 0\}$ which is equivalent to the $c$-concave constraint on the quadratic $D_{A,\mathbf{b}}$. Hence, any local solution to this optimization problem will also be a global solution. This result implies that any local Nash equilibrium for the new parameterized minimax problem will have the form $(G_{\mathbf{W,u}},D^*_{\mathbf{W,u}})$, which as we have already shown does not exist under the theorem's assumptions.  

\textbf{Proof for the 1-dimensional WGAN:} 

Consider the 1-dimensional Wasserstein GAN problem for the assumed linear generator function:
\begin{equation}
    \min_{w,u:\, \vert w \vert\le 1}\: \max_{\Vert D\Vert_{\operatorname{Lip}}\le 1}\: \mathbb{E}[D(X)]-\mathbb{E}[D(wZ+u)].
\end{equation}
The inner maximization problem can be rewritten as
\begin{equation}\label{thm1: w1 proof}
    \max_{\Vert D\Vert_{\operatorname{Lip}}\le 1}\: \int \bigl( p_X(x) - p_{wZ+u}(x) \bigr)\, D(x)\diff{x}. 
\end{equation}
Here we have
\begin{align*}
p_X(x) - p_{wZ+u}(x)\, = \, \frac{1}{\sqrt{2\pi\sigma^2}}\exp\bigl\{\frac{-1}{2\sigma^2} x^2 \bigr\} - \frac{1}{\sqrt{2\pi w^2}}\exp\bigl\{\frac{-1}{2 w^2} (x-u)^2 \bigr\}
\end{align*}
Since $|w|\le 1<\sigma$, it can be seen that the above difference will be positive everywhere except over an interval $(a_1,a_2)$, where $a_1,a_2$ are the two solutions to the quadratic equation: 
\begin{equation}
   \bigl(\frac{1}{w^2}-\frac{1}{{\sigma}^2}\bigr)x^2 - 2\frac{u}{w^2}x + \bigl(\frac{u^2}{w^2}-\log(\frac{\sigma}{\vert w\vert})\bigr) = 0. 
\end{equation}
Note that the above quadratic equation has two distinct solutions $a_1 < a_2$, since $|w|<\sigma$ and $\log(\frac{\sigma}{\vert w\vert}) > 0$ leading to the positive discriminant:
\begin{equation}
   4\frac{u^2}{w^2{\sigma}^2} + 4\log(\frac{\sigma}{\vert w\vert}) \bigl(\frac{1}{w^2}-\frac{1}{{\sigma}^2}\bigr)\, >\, 0.
\end{equation}
As the function $D$ in the maximization problem \eqref{thm1: w1 proof} is only constrained to be 1-Lipschitz, the optimal $D^*_{w,u}$'s slope must be equal to $-1$ over $(-\infty,a_1]$ and equal to $+1$ over $[a_2,\infty)$, in order to allow the maximum increase in the maximization objective. Over the interval $(a_1,a_2)$, we claim that for the optimal $D$ is a convex function, because otherwise its double Fenchel-conjugate $D^{**}$, which is by definition convex, achieves a higher value. 

First of all, note that the double Fenchel-conjugate $D^{**}$ will not be different from $D$ outside the $(a_1,a_2)$ interval, because $D^{**}$ is defined to provide the largest convex function satisfying $D^{**}\le D$, and $D$ is supposed to be $1$-Lipschitz taking its minimum derivative on $(\infty,a_1]$ and its maximum derivative over $[a_2,\infty)$. Next, since $D^{**}$ lower-bounds $D$, it results in a non-smaller integral value over the interval $(a_1,a_2)$ as $p_X(x) - p_{wZ+u}(x)$ takes negative values over $(a_1,a_2)$. If $D$ is not convex, then $D^{**}$ provides a strict lower-bound for $D$ which matches $D$ over $(\infty,a_1]\cup [a_2,\infty)$. Therefore, the convex $1$-Lipschitz $D^{**}$ results in a greater objective that is a contradiction to $D$'s optimality. This contradiction proves that the optimal discriminator $D^*_{w,u}$ is a convex function.

Therefore, for every feasible $|w|\le 1$, there exists an optimal solution $D^*_{w,u}$ for \eqref{thm1: w1 proof} that is a non-constant convex function. This result proves that the WGAN problem has no local Nash equilibiria with the form $(G_{w,u},D^*_{w,u})$, because if $(G_{w,u},D^*_{w,u})$ was a local Nash equilibrium then $w,u$ would be a local minimum for the following objective where $D^*$ is fixed to be $D^*_{w,u}$:
\begin{equation}
    \mathbb{E}[D^*(X)] - \mathbb{E}[D^*(wZ+u)].
\end{equation}
However, the above objective is a non-constant concave function of the unconstrained variable $u$ and hence does not have a local minimum in $u$. This shows that the WGAN problem does not have a Nash equilibrium and completes the proof for the WGAN case. 

\begin{remark*}
Consider the same setting as in Theorem \ref{Thm: no Nash}. However, unlike Theorem \ref{Thm: no Nash} suppose that $\sigma < 1$ and $\sigma_{\min}( \mathbf{W} ) \ge 1$ where $\sigma_{\min}$ stands for the minimum singular value. Then, for the WGAN and W2GAN problems described in Theorem \ref{Thm: no Nash}, the Wasserstein distance-minimizing generator results in a Nash equilibrium.
\end{remark*}
\begin{proof}
\textbf{Proof for the W2GAN:}

For the W2GAN case, note that if we repeat the same steps as in the proof of Theorem 1, we can show 
\begin{align}\label{remark1, w2}
 \nabla_{\mathbf{x}} D^{*^c}_{\mathbf{W},\mathbf{u}} (\mathbf{x}) =\, (\sigma(\mathbf{W}\mathbf{W}^T)^{-1/2} - I\bigr)\mathbf{x} -\sigma(\mathbf{W}\mathbf{W}^T)^{-1/2}\mathbf{u}.  \nonumber
\end{align}
which is a concave quadratic function of $\mathbf{x}$, since the assumptions imply that $(\mathbf{W}\mathbf{W}^T)^{-1} \preccurlyeq \sigma^{-2} I$. Here $\mathbf{W}$ is supposed to be a full-rank square matrix as its minimum singular value is assumed to be positive and $\mathbf{Z}$ has the same dimension as $\mathbf{X}$. 

We claim that for the feasible choice $\mathbf{W}^*=I$ and $\mathbf{u}^*=\mathbf{0}$, the pair $(G_{\mathbf{W}^*,\mathbf{u}^*},D^*_{\mathbf{W}^*,\mathbf{u}^*})$ results in a Nash equilibrium of the minimax problem. Considering the definition of the optimal discriminator $D^*_{\mathbf{W}^*,\mathbf{u}^*}$, its optimlaity for $G_{\mathbf{W}^*,\mathbf{u}^*}$ directly follows. Moreover, \eqref{remark1, w2} implies that
\begin{equation}
    D^{*^c}_{\mathbf{W}^*,\mathbf{u}^*} (\mathbf{x}) =\frac{\sigma - 1}{2}\Vert\mathbf{x}\Vert_2^2.  
\end{equation}
As a result, fixing the above discriminator function the minimax objective will be
\begin{align*}
    &\mathbb{E}[D^*(\mathbf{X})] - \mathbb{E}\bigl[\, \frac{\sigma - 1}{2}\, \Vert\mathbf{W}\mathbf{Z}+\mathbf{u}\Vert_2^2\bigr] \\
    =\; & \mathbb{E}[D^*(\mathbf{X})] + \frac{1-\sigma}{2}\,\bigl( \Vert \mathbf{W}\Vert^2_F + \Vert\mathbf{u}\Vert^2_2\bigr)
\end{align*}
which is minimized at $\mathbf{W}=I$ and $\mathbf{u}=\mathbf{0}$ over the specified feasible set, as we know the Frobenius norm-squared, $\Vert \mathbf{W}\Vert^2_F$, is the sum of the squared of $\mathbf{W}$'s singular values. Therefore, the claim holds and the choice $\mathbf{W}^*=I$ and $\mathbf{u}^*=0$ results in the optimal solution and a Nash equilibrium.

\textbf{Proof for the 1-dimensional Wasserstein GAN:}

Here we select the parameters $w^*=1$, $u^*=0$. We claim that the optimal discriminator function for this choice is the negative absolute value function $D^*(x)=-|x|$. Note that the optimal 1-Lipschitz $D^*$ solves the following problem:
\begin{equation}
    \max_{\Vert D \Vert_{\operatorname{Lip}}\le 1}\: \int \bigl( \frac{1}{\sqrt{2\pi\sigma^2}}e^{-\frac{x^2}{2\sigma^2}} - \frac{1
    }{\sqrt{2\pi}}e^{-\frac{x^2}{2}}\bigr) D(x)\diff{x}. 
\end{equation}
In the above objective given $\eta = \sqrt{\frac{2\sigma^2\log(1/\sigma)}{1-\sigma^2}}$, the function $\frac{1}{\sqrt{2\pi\sigma^2}}e^{-\frac{x^2}{2\sigma^2}} - \frac{1
    }{\sqrt{2\pi}}e^{-\frac{x^2}{2}}$ is positive over $(-\eta,\eta)$ and negative elsewhere. Therefore, the optimal $D^*$ should get the maximum $+1$ derivative over $(-\infty,-\eta]$ and the minimum $-1$ derivative over $[+\eta,+\infty)$. Because of the even structure of $\frac{1}{\sqrt{2\pi\sigma^2}}e^{-\frac{x^2}{2\sigma^2}} - \frac{1
    }{\sqrt{2\pi}}e^{-\frac{x^2}{2}}$, there exists an even optimal $D^*$ because $\frac{D^*(x)+D^*(-x)}{2}$ remains $1$-Lipschitz and optimal for any optimal $1$-Lipschitz discriminator $D^*$. The optimal even $D^*$ should further be continuous as a $1$-Lipschitz function, implying that such a $D^*$ is decreasing over $(0,\eta]$ and increasing over $[-\eta,0)$. Enforcing the maximum derivative over the two interval results in the optimal $D^*(x)=-|x|$. 
    
    Therefore, $D^*(x)=-|x|$ provides an optimal  discriminator for $w^*=1,\,u^*=0$. Also, for this $D^*$ the minimax objective of the Wasserstein GAN will be
    \begin{align*}
        \mathbb{E}[-|X|] - \mathbb{E}[-|wZ+u|] \,
        \,= - \mathbb{E}[|X|] + \mathbb{E}[|wZ+u|]
    \end{align*}
    In the above equation, $wZ+u\sim \mathcal{N}(u,w^2)$, showing that the above objective is minimized at $w^*=1, u =0 $ considering the assumed feasible set where $|w|\ge 1$. As a result, the pair $(G_{w^*,u^*}, D^*)$ provides a Nash equilibrium to the WGAN minimax game.
\end{proof}

\subsection{Proof of Proposition 2}

\textbf{Proof of the $\Rightarrow$ direction:} 

Assume that $(G^*,D^*)$ is a $\lambda$-proximal equilibrium.
According to the definition of the proximal equilibrium, the following holds for every $G\in\mathcal{G}$ and $D\in\mathcal{D}$:
\begin{equation}\label{prop2: prox equilibirum holds}
V^{\operatorname{prox}}_{\lambda}(G^*,D)\, \le \, V^{\operatorname{prox}}_{\lambda}(G^*,D^*) \, \le \, V^{\operatorname{prox}}_{\lambda}(G,D^*).  
\end{equation}
\textbf{Claim:} $V^{\operatorname{prox}}_{\lambda}(G^*,D^*) = V(G^*,D^*)$.

To show this claim, note that
\begin{equation}
    V^{\operatorname{prox}}_{\lambda}(G^*,D^*) := \max_{\widetilde{D}\in\mathcal{D}}\: V(G^*,\widetilde{D}) - \frac{\lambda}{2}\bigl\Vert \widetilde{D} - D^* \bigr\Vert^2. 
\end{equation}
In this optimization, the optimal solution $\tilde{D}$ is $D^*$ itself. Otherwise, for the optimal $\tilde{D}\in\mathcal{D}$ we have $\Vert \widetilde{D} - D^* \bigr\Vert > 0$ and as a result
\begin{equation}
    V^{\operatorname{prox}}_{\lambda}(G^*,D^*) < V(G^*,\tilde{D}) \le V^{\operatorname{prox}}_{\lambda}(G^*,\tilde{D}),
\end{equation}
which is a contradiction given that $(G^*,D^*)$ is a $\lambda$-proximal equilibrium. Therefore, $D^*$ optimizes the proximal optimization, which shows the claim is valid and we have $V^{\operatorname{prox}}_{\lambda}(G^*,D^*) = VG^*,D^*)$. Knowing that $V(G,D)\le V^{\operatorname{prox}}_{\lambda}(G,D)$ holds for every $G\in\mathcal{G}, D\in\mathcal{D}$, 
we have
\begin{align*}
   V(G^*,D)&\le V^{\operatorname{prox}}_{\lambda}(G^*,D)\\
   & \le V^{\operatorname{prox}}_{\lambda}(G^*,D^*)\\ &= V(G^*,D^*).
\end{align*}
Furthermore,
\begin{align}\label{prop2: combined equations}
 V(G^*,D^*) &= \, V^{\operatorname{prox}}_{\lambda}(G^*,D^*) \\
&\le \, V^{\operatorname{prox}}_{\lambda}(G,D^*) \\
&= \, \max_{\widetilde{D}\in\mathcal{D}}\: V(G,\widetilde{D}) - \frac{\lambda}{2}\bigl\Vert \widetilde{D} - D^* \bigr\Vert^2. \nonumber
\end{align}
Therefore, the proof is complete.

\textbf{Proof of the $\Leftarrow$ direction:} 

Suppose that for $(G^*,D^*)$ the following holds for every $G\in\mathcal{G}$ and $D\in\mathcal{D}$:
\begin{align}\label{prop2:proof the reverse direction}
        V(G^*,D)\, \le \, V(G^*,D^*)  \,
         \le \, \max_{\widetilde{D}\in\mathcal{D}}\: V(G,\widetilde{D}) - \frac{\lambda}{2}\bigl\Vert \widetilde{D} - D^* \bigr\Vert^2.
\end{align}
We claim that $V(G^*,D^*) = V^{\operatorname{prox}}_{\lambda}(G^*,D^*)$. To show this claim, consider the definition of the $\lambda$-proximal equilibrium:
\begin{equation}
V^{\operatorname{prox}}_{\lambda}(G^*,D^*) := \max_{\tilde{D}\in\mathcal{D}}\, V(G^*,\tilde{D}) - \frac{\lambda}{2}\big\Vert D^* - \tilde{D} \big\Vert.
\end{equation}
Here $D^*$ maximizes the objective because we have assumed that $V(G^*,D) \le V(G^*,D^*)$ holds for every $D\in\mathcal{D}$. Therefore, the claim is valid and $V(G^*,D^*) = V^{\operatorname{prox}}_{\lambda}(G^*,D^*)$. 

Also, note that for every $D$ the solution $\tilde{D}$ in the proximal optimization satisfies $V^{\operatorname{prox}}_{\lambda}(G^*,D)\le V(G^*,\tilde{D})$. Combining these results with \eqref{prop2:proof the reverse direction}, we obtain the following inequalities which hold for every $G\in\mathcal{G}$ and $D\in\mathcal{D}$:
\begin{equation}
V^{\operatorname{prox}}_{\lambda}(G^*,D)\, \le \, V^{\operatorname{prox}}_{\lambda}(G^*,D^*) \, \le \, V^{\operatorname{prox}}_{\lambda}(G,D^*).     
\end{equation}
The above equation shows that the pair $(G^*,D^*)$ is a $\lambda$-proximal equilibrium.

\subsection{Proof of Proposition 3}
Consider a $\lambda_2$-proximal equilibrium $(G^*,D^*)$. As a result of Proposition 2, for every $G\in\mathcal{G}$ and $D\in\mathcal{D}$ we have
\begin{align}
        V(G^*,D)\, \le \, V(G^*,D^*)  
         \le \, \max_{\widetilde{D}\in\mathcal{D}}\: V(G,\widetilde{D}) - \frac{\lambda_2}{2}\bigl\Vert \widetilde{D} - D^* \bigr\Vert^2. \nonumber
    \end{align}
Since $\lambda_1\le \lambda_2$, the following holds
\begin{align}
         \max_{\widetilde{D}\in\mathcal{D}}\: V(G,\widetilde{D}) - \frac{\lambda_2}{2}\bigl\Vert \widetilde{D} - D^* \bigr\Vert^2 \,
         \le \, \max_{\widetilde{D}\in\mathcal{D}}\: V(G,\widetilde{D}) - \frac{\lambda_1}{2}\bigl\Vert \widetilde{D} - D^* \bigr\Vert^2, \nonumber
    \end{align}
which shows that
\begin{align}
        V(G^*,D)\, \le \, V(G^*,D^*) \, \le \, \max_{\widetilde{D}}\: V(G,\widetilde{D}) - \frac{\lambda_1}{2}\bigl\Vert \widetilde{D} - D^* \bigr\Vert^2. \nonumber
    \end{align}
Due to Proposition 2, $(G^*,D^*)$ will be a $\lambda_1$-proximal equilibrium as well. Hence, the proof is complete and we have
\begin{equation}
 \operatorname{PE}_{\lambda_2}(V) \subseteq \operatorname{PE}_{\lambda_1}(V).
\end{equation}

\subsection{Proof of Proposition 4}

Consider the definition of a $\lambda$-proximal equilibrium in the parameterized space:
\begin{equation}
    V_{\lambda}^{\operatorname{prox}}(G_{\boldsymbol{\theta}},D_{\mathbf{w}}) := \max_{\widetilde{\mathbf{w}}}\, V(G_{\boldsymbol{\theta}},D_{\widetilde{\mathbf{w}}}) - \frac{\lambda}{2}\bigl\Vert D_{\widetilde{\mathbf{w}}} - D_{\mathbf{w}} \bigr\Vert^2.
\end{equation}
In the above optimization problem, the first term $V(G_{\boldsymbol{\theta}},D_{\widetilde{\mathbf{w}}})$ is assumed to be $\eta_2$-smooth in $\widetilde{\mathbf{w}}$, while the second term $\frac{\lambda}{2}\Vert D_{\widetilde{\mathbf{w}}} - D_{\mathbf{w}} \Vert^2$ will be $\frac{\lambda}{2}\eta_1$-strongly convex in $\widetilde{\mathbf{w}}$. As a result, the sum of the two terms will be $(\frac{\lambda\eta_1}{2}-\eta_2)$-strongly concave if $\eta_2<\frac{\lambda\eta_1}{2}$ holds. Since the objective function is strongly-concave in $\widetilde{\mathbf{w}}$, it will be maximized by a unique solution $\mathbf{w}^*$. Moreover, applying the Danskin's theorem \cite{bertsekas1997nonlinear} implies that the following holds at the optimal $\mathbf{w}^*$:
\begin{equation}
    \nabla_{\theta} V_{\lambda}^{\operatorname{prox}}(G_{\boldsymbol{\theta}},D_{\mathbf{w}})  = \nabla_{\theta} V(G_{\boldsymbol{\theta}},D_{\mathbf{w}^*}).
\end{equation}

\subsection{Proof of Theorem 2}
\begin{lemma}\label{Lemma: Strong convex minimization}
Suppose that $f$ is a $\gamma$-strongly convex function according to norm $\Vert\cdot \Vert$, i.e. for any $x,y\in\operatorname{dom}(f)$ and $\lambda\in[0,1]$ we have
\begin{align}\label{Strong convexity lemma}
    f(\lambda x + (1-\lambda)y)\, \le\, \lambda f(x) +(1-\lambda)f(y) - \frac{\gamma\lambda(1-\lambda)}{2}\Vert x-y \Vert^2.
\end{align}
Consider the following optimization problem where the feasible set $\mathcal{X}$ is a convex set and $x^*$ is the optimal solution,
\begin{equation}
    \min_{x\in\mathcal{X}}\, f(x).
\end{equation}
Then, for every $x\in\mathcal{X}$ we have
\begin{equation}
    f(x) - f(x^*) \ge \frac{\gamma}{2}\Vert x- x^* \Vert^2.
\end{equation}
\end{lemma}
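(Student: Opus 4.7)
The plan is to use the strong convexity inequality evaluated at a convex combination of $x^*$ and an arbitrary $x \in \mathcal{X}$, then exploit the optimality of $x^*$ to obtain the desired quadratic lower bound. The key idea is that strong convexity of $f$ gives a quadratic gap between $f$ at an interpolated point and the linear interpolation of the values, and optimality lets us discard the interpolated point's value after a limiting argument.

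Concretely, fix an arbitrary $x \in \mathcal{X}$ and parameter $\lambda \in (0,1]$. Since $\mathcal{X}$ is convex, the point $z_\lambda \coloneqq \lambda x + (1-\lambda) x^*$ lies in $\mathcal{X}$. Applying the strong convexity inequality \eqref{Strong convexity lemma} at the pair $(x, x^*)$ and rearranging yields
\begin{equation*}
f(z_\lambda) - f(x^*) \; \le \; \lambda \bigl(f(x) - f(x^*)\bigr) - \frac{\gamma \lambda (1-\lambda)}{2}\,\Vert x - x^* \Vert^2.
\end{equation*}
Since $x^*$ is the minimizer over $\mathcal{X}$ and $z_\lambda \in \mathcal{X}$, the left-hand side is non-negative. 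Dividing both sides by $\lambda > 0$ produces
\begin{equation*}
0 \; \le \; f(x) - f(x^*) - \frac{\gamma (1-\lambda)}{2}\,\Vert x - x^* \Vert^2,
\end{equation*}
and sending $\lambda \downarrow 0$ gives the claimed bound $f(x) - f(x^*) \ge \frac{\gamma}{2}\Vert x - x^*\Vert^2$.

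There is essentially no obstacle here; the argument is a textbook consequence of strong convexity combined with feasibility under convexity of $\mathcal{X}$. The only subtlety worth noting is that the definition \eqref{Strong convexity lemma} does not presume differentiability of $f$, so the proof must avoid first-order optimality conditions and instead rely on the limiting argument above. This keeps the result applicable to the (possibly non-smooth) proximal objective used elsewhere in the paper.
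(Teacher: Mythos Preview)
Your proof is correct and follows essentially the same route as the paper's own argument: apply the strong convexity inequality at the pair $(x,x^*)$, use convexity of $\mathcal{X}$ together with optimality of $x^*$ to make the left-hand side non-negative, divide by $\lambda$, and let $\lambda\downarrow 0$. The paper's presentation differs only cosmetically.
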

\begin{proof}
If we apply the strong-convexity definition \eqref{Strong convexity lemma} to $x\in\mathcal{X},\, x^*$ we obtain
\begin{align}
 f(\lambda x + (1-\lambda)x^*)\, \le\, \lambda f(x) +(1-\lambda)f(x^*) - \frac{\gamma\lambda(1-\lambda)}{2}\Vert x-x^* \Vert^2.   
\end{align}
The above inequality results in
\begin{align}
    f(\lambda x + (1-\lambda)x^*) - f(x^*)\, \le\, \lambda (f(x) -f(x^*))  - \frac{\gamma\lambda(1-\lambda)}{2}\Vert x-x^* \Vert^2.
\end{align}
Note that $\mathcal{X}$ is assumed to be a convex set and therefore $\lambda x + (1-\lambda)x^*\in\mathcal{X}$ implying $f(x^*)\le f(\lambda x + (1-\lambda)x^*)$. As a result, for every $0\le\lambda\le 1$
\begin{equation}
    \frac{\gamma\lambda(1-\lambda)}{2}\Vert x-x^* \Vert^2\, \le\, \lambda (f(x) -f(x^*)).
\end{equation}
Thus for every $0<\lambda\le 1$ we have
\begin{equation}
   \frac{\gamma(1-\lambda)}{2}\Vert x-x^* \Vert^2\, \le\, f(x) -f(x^*),
\end{equation}
which proves that $\frac{\gamma}{2}\Vert x-x^* \Vert^2 \le f(x) -f(x^*)$ and completes Lemma \ref{Lemma: Strong convex minimization}'s proof.
 \end{proof}
Based on Proposition 2 and the definition of $D^{\boldsymbol{\theta}}$, we only need to show that for the W2GAN's objective, which we denote by $V(G,D)$, the following holds for every $\boldsymbol{\theta}$:
\begin{equation}
 V(G_{\boldsymbol{\theta}^*},D^{\boldsymbol{\theta}^*})  \le \max_{D\in \mathcal{D}}\: V(G_{\boldsymbol{\theta}},D) - \frac{1}{2\eta}\bigl\Vert D - D^{\boldsymbol{\theta}^*} \bigr\Vert^2_{\dot{H}^1}.
\end{equation}
To show the above inequality, it suffices to prove the following inequality
\begin{equation}
 V(G_{\boldsymbol{\theta}^*},D^{\boldsymbol{\theta}^*})  \le  V(G_{\boldsymbol{\theta}},D^{\boldsymbol{\theta}}) - \frac{1}{2\eta}\bigl\Vert D^{\boldsymbol{\theta}} - D^{\boldsymbol{\theta}^*} \bigr\Vert^2_{\dot{H}^1}.
\end{equation}
\textbf{Claim:} For the W2GAN problem, we have $$V(G_{\boldsymbol{\theta}},D^{\boldsymbol{\theta}}) = \frac{1}{2\eta}\mathbb{E}\bigl[ \Vert \nabla D^{\boldsymbol{\theta}}(\mathbf{X})\Vert^2_2\bigr].$$

To show this claim, note that according to the W2GAN's formulation we have $V(G_{\boldsymbol{\theta}},D^{\boldsymbol{\theta}}) := W_c(P_{\mathbf{X}},P_{G_{\boldsymbol{\theta}}(\mathbf{Z})})$ where $c(\mathbf{x},\mathbf{x}')=\frac{\eta}{2}\Vert \mathbf{x}- \mathbf{x}' \Vert_2^2$ is the second-order cost function specified in the theorem. We start by proving this result for $\eta =1$. In this case, the Brenier theorem \cite{ambrosio2013user} proves that the optimal transport map from the data variable $\mathbf{X}$ to the generative model $G_{\boldsymbol{\theta}}(\mathbf{Z})$ can be derived from the gradient of the optimal $D^{\boldsymbol{\theta}}$ as follows
\begin{equation}
    \psi^{\operatorname{opt}}(\mathbf{x}) = \mathbf{x} - \nabla D^{\boldsymbol{\theta}} (\mathbf{x}).
\end{equation}
which plugged into the optimal transport objective $W_c(P,Q):= \inf_{\Pi(P,Q)}\mathbb{E}[c(\mathbf{X},\mathbf{X}')]$ proves that
$$ V(G_{\boldsymbol{\theta}},D^{\boldsymbol{\theta}}) := W_c(P_{\mathbf{X}},P_{G_{\boldsymbol{\theta}}(\mathbf{Z})}) = \mathbb{E}\bigl[ \frac{1}{2}\Vert \nabla D^{\boldsymbol{\theta}}(\mathbf{X})\Vert^2_2\bigr]. $$
The above equation proves the result holds for $\eta = 1$. For a general $\eta > 0$, note that applying a simple change of variable in the Kantorovich duality representation and solving the dual problem for $\tilde{D}(\mathbf{x})=\eta D(\mathbf{x})$ shows that $\psi^{\operatorname{opt}}(\mathbf{x}) = \mathbf{x} - \frac{1}{\eta}\nabla \tilde{D}^{\boldsymbol{\theta}}(\mathbf{x})$ transport samples from the data domain to the generative model. This is due to the fact that after applying this change of variable the Kantorovich duality reduces to $\eta$ multiplied to the dual problem for $\eta =1$. As a result, applying the transport map to the definition of the optimal transport cost shows that
\begin{equation*}
 V(G_{\boldsymbol{\theta}},D^{\boldsymbol{\theta}}) := W_c(P_{\mathbf{X}},P_{G_{\boldsymbol{\theta}}(\mathbf{Z})}) = \frac{1}{2\eta}\mathbb{E}\bigl[ \Vert \nabla D^{\boldsymbol{\theta}}(\mathbf{X})\Vert^2_2\bigr],   
\end{equation*}
proving the claim holds for any $\eta>0$.

Substituting the discriminator maximization with the result in the above claim, the W2GAN problem reduces to the following problem:
\begin{equation}
    \min_{\boldsymbol{\theta}}\: \frac{1}{2\eta}\mathbb{E}\bigl[ \big\Vert \nabla D^{\boldsymbol{\theta}}(\mathbf{X})\big\Vert^2_2\bigr].
\end{equation}
Here we can equivalently optimize for $D^{\boldsymbol{\theta}}\in \{ D^{\boldsymbol{\theta}}:\, \theta\in \Theta \}$ instead of minimizing over the variable $\boldsymbol{\theta}$, obtaining
\begin{equation}\label{Proof: W2GAN substituted obj}
    \min_{D^{\boldsymbol{\theta}}}\: \frac{1}{2\eta}\mathbb{E}\bigl[ \big\Vert \nabla D^{\boldsymbol{\theta}}(\mathbf{X})\big\Vert^2_2\bigr].
\end{equation}
Note that the term $\frac{1}{2}\mathbb{E}\bigl[ \big\Vert \nabla D^{\boldsymbol{\theta}}(\mathbf{X})\big\Vert^2_2\bigr] = \frac{1}{2}\big\Vert D^{\boldsymbol{\theta}} \big\Vert^2_{\dot{H}^1}$ reduces to the squared of the defined Sobolev norm in a semi-Hilbert space, which results in a $1$-strongly convex function according to $\Vert \cdot \Vert_{\dot{H}^1} $ with strong convexity defined as in \eqref{Strong convexity lemma}. As a result, the objective in \eqref{Proof: W2GAN substituted obj} is $\frac{1}{\eta}$-strongly convex according to the Sobolev norm $\Vert \cdot \Vert_{\dot{H}^1}$. In addition, this objective  is minimized over a convex feasible set $\{ D^{\boldsymbol{\theta}}:\, \theta\in \Theta \}$, due to the theorem's assumption. Therefore, Lemma \ref{Lemma: Strong convex minimization} shows that the optimal $D^{\boldsymbol{\theta}^*}$ satisfies the following inequality for every $\boldsymbol{\theta}$:
\begin{align*}
  \frac{1}{2\eta}\mathbb{E}\bigl[ \big\Vert \nabla D^{\boldsymbol{\theta}}(\mathbf{X})\big\Vert^2_2\bigr] -  \frac{1}{2\eta}\mathbb{E}\bigl[ \big\Vert \nabla D^{\boldsymbol{\theta}^*}(\mathbf{X})\big\Vert^2_2\bigr]\:
  \ge \:\frac{1}{2\eta} \big\Vert D^{\boldsymbol{\theta}} - D^{\boldsymbol{\theta}^*} \big\Vert^2_{\dot{H}^1}.
\end{align*}
The above result implies that
\begin{equation}
 V(G_{\boldsymbol{\theta}^*},D^{\boldsymbol{\theta}^*})  \le  V(G_{\boldsymbol{\theta}},D^{\boldsymbol{\theta}}) - \frac{1}{2\eta}\bigl\Vert D^{\boldsymbol{\theta}} - D^{\boldsymbol{\theta}^*} \bigr\Vert^2_{\dot{H}^1},
\end{equation}
which completes the proof.

\subsection{Proof of Theorem 3}
\begin{lemma}\label{Lemma: thm 4}
Consider two vectors $\mathbf{x}, \mathbf{y}$ with equal Euclidean norms $\Vert \mathbf{x}\Vert_2 = \Vert \mathbf{y}\Vert_2$. Then for every $0\le a \le b$, we have
\begin{equation}
    a\Vert\mathbf{x}- \mathbf{y} \Vert_2 \le \Vert a\mathbf{x}- b\mathbf{y}\Vert_2.
\end{equation}
\end{lemma}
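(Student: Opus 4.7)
The plan is to reduce the inequality to a clean algebraic condition by squaring and using the equal-norm hypothesis, then conclude with Cauchy--Schwarz. Since both sides of $a\Vert\mathbf{x}-\mathbf{y}\Vert_2 \le \Vert a\mathbf{x}-b\mathbf{y}\Vert_2$ are non-negative, it suffices to prove the squared version
\[
a^2\Vert\mathbf{x}-\mathbf{y}\Vert_2^2 \;\le\; \Vert a\mathbf{x}-b\mathbf{y}\Vert_2^2.
\]
First I would expand both sides using inner products and substitute $r \coloneqq \Vert\mathbf{x}\Vert_2 = \Vert\mathbf{y}\Vert_2$. The left-hand side becomes $a^2(2r^2 - 2\mathbf{x}^T\mathbf{y})$ and the right-hand side becomes $a^2 r^2 - 2ab\,\mathbf{x}^T\mathbf{y} + b^2 r^2$.

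Next I would collect like terms. After cancellation, the desired inequality is equivalent to
\[
(a^2 - b^2)\, r^2 \;\le\; 2a(a-b)\,\mathbf{x}^T\mathbf{y},
\]
i.e.\ $(a-b)\bigl[(a+b)r^2 - 2a\,\mathbf{x}^T\mathbf{y}\bigr] \le 0$. If $a = b$ both sides of the original inequality coincide, so equality holds trivially. Otherwise $a - b < 0$, and it remains to verify that the bracketed quantity is non-negative, i.e.\ $2a\,\mathbf{x}^T\mathbf{y} \le (a+b)r^2$.

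For this final step I would invoke Cauchy--Schwarz to bound $\mathbf{x}^T\mathbf{y} \le \Vert\mathbf{x}\Vert_2\Vert\mathbf{y}\Vert_2 = r^2$, giving $2a\,\mathbf{x}^T\mathbf{y} \le 2a\,r^2$. Since $a \le b$ implies $2a \le a+b$, we conclude $2a\,r^2 \le (a+b)r^2$, which completes the chain. The proof is essentially a computation, and there is no genuine obstacle; the only thing to be slightly careful about is the sign flip when dividing by $a - b$, which I would handle by splitting into the cases $a = b$ and $a < b$ as above rather than dividing.
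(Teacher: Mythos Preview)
Your proof is correct and follows essentially the same approach as the paper: both square the inequality, expand, and reduce to showing $(b-a)\bigl[(a+b)r^2 - 2a\,\mathbf{x}^T\mathbf{y}\bigr] \ge 0$, then bound $\mathbf{x}^T\mathbf{y}$ by $r^2$ (you use Cauchy--Schwarz, the paper uses the equivalent bound $|\mathbf{x}^T\mathbf{y}| \le (\Vert\mathbf{x}\Vert_2^2 + \Vert\mathbf{y}\Vert_2^2)/2$). The only cosmetic difference is that you split into the cases $a=b$ and $a<b$, whereas the paper treats them together via $b-a \ge 0$.
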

\begin{proof}
Note that
\begin{align*}
  \Vert a\mathbf{x}- b\mathbf{y}\Vert^2_2  - a^2\Vert \mathbf{x}- \mathbf{y}\Vert^2_2 \:
  =\: & (b^2-a^2)\Vert\mathbf{y}\Vert^2_2 - 2a(b-a)\mathbf{x}^T\mathbf{y} \\
  =\: & (b-a)( (b+a)\Vert\mathbf{y}\Vert^2_2 - 2a\mathbf{x}^T\mathbf{y}) \\
  \ge\; &0.
\end{align*}
The above holds as we have assumed that $0 \le a\le b$ implying $0\le 2a\le b+a$ and since the two vectors $\mathbf{x},\mathbf{y}$ share the same Euclidean norm
\begin{equation*}
    |\mathbf{x}^T\mathbf{y}|\le \frac{\Vert\mathbf{x} \Vert_2^2+\Vert\mathbf{y} \Vert_2^2}{2} =\Vert\mathbf{y} \Vert_2^2.
\end{equation*}
Hence, Lemma \ref{Lemma: thm 4}'s proof is complete.
\end{proof}
As shown by the Kantorovich duality \cite{villani2008optimal}, for the optimal $D^{\boldsymbol{\theta}}$ and the optimal coupling $\pi_{\boldsymbol{\theta}}(P_{\mathbf{X}},P_{G_{\boldsymbol{\theta}}(\mathbf{Z})})$ the following holds with probability $1$ for every joint sample $(\mathbf{X}, \mathbf{X}')$ drawn from the optimal coupling $\pi_{\boldsymbol{\theta}}$,
\begin{equation}
    D^{\boldsymbol{\theta}}(\mathbf{X}) - D^{\boldsymbol{\theta}}(\mathbf{X}')  = \Vert \mathbf{X} - \mathbf{X}' \Vert_2.   
\end{equation}
Knowing that $D^{\boldsymbol{\theta}}$ is 1-Lipschitz, for every convex combination $\beta \mathbf{X} + (1-\beta)\mathbf{X}'$ we must have $$\nabla D^{\boldsymbol{\theta}}\bigl(\beta \mathbf{X} + (1-\beta)\mathbf{X}'\bigr) = \frac{\mathbf{X} - \mathbf{X}'}{\Vert \mathbf{X} - \mathbf{X}' \Vert}. $$ This will imply that there definitely exists $\alpha_{\boldsymbol{\theta}}$ such that the transport map described in the theorem maps the data distribution to the generative model. Plugging this transport map into the definition of the first-order Wasserstein distance, we obtain
\begin{align*}
    V(G_{\boldsymbol{\theta}},D^{\boldsymbol{\theta}}):&= W_1(P_{\mathbf{X}},P_{G_{\boldsymbol{\theta}}(\mathbf{Z})}) \\
    &= \mathbb{E}\bigl[ \big\Vert {\alpha}^2 _{\boldsymbol{\theta}}(\mathbf{X})\nabla D^{\boldsymbol{\theta}}(\mathbf{X}) \big\Vert_2\bigr] \\
    &= \mathbb{E}\bigl[ \big\Vert \alpha_{\boldsymbol{\theta}}(\mathbf{X}) \nabla D^{\boldsymbol{\theta}}(\mathbf{X}) \big\Vert_2^2\bigr]
\end{align*}
where the last equality holds since the Euclidean norm of $\nabla D^{\boldsymbol{\theta}}(\mathbf{X})$ has a unit Euclidean norm with probability $1$ over the data distribution $P_{\mathbf{X}}$ as we proved $\nabla D^{\boldsymbol{\theta}}(\beta \mathbf{X} + (1-\beta)\mathbf{X}') = \frac{\mathbf{X} - \mathbf{X}'}{\Vert \mathbf{X} - \mathbf{X}' \Vert} $ holds for every $0\le \beta\le 1$ including $\beta=1$.

As a result, the Wasserstein GAN problem reduces to the following optimization problem
\begin{equation}\label{proof WGAN thm 3}
    \min_{\boldsymbol{\theta}}\: \mathbb{E}\bigl[ \big\Vert \alpha_{\boldsymbol{\theta}}(\mathbf{X}) \nabla D^{\boldsymbol{\theta}}(\mathbf{X}) \big\Vert_2^2\bigr]
\end{equation}
Defining $h_{\boldsymbol{\theta}}(\mathbf{X}) := \alpha_{\boldsymbol{\theta}}(\mathbf{X}) \nabla D^{\boldsymbol{\theta}}(\mathbf{X}) $, $\frac{1}{2}\mathbb{E}\bigl[\Vert h_{\boldsymbol{\theta}}(\mathbf{X}) \Vert_2^2\bigr]$ is $1$-strongly convex with respect to the norm function $$\Vert h \Vert_{\dot{H}^0}=\sqrt{\mathbb{E}\bigl[\, h^2(\mathbf{X})\,\bigr]}$$ that is induced by the following inner product and results in a Hilbert space
$$\langle D_1,D_2 \rangle_{\dot{H}^0} := \mathbb{E}_{P_{\mathbf{X}}}[D_1(\mathbf{X}) D_2(\mathbf{X})].$$
Therefore, for the $\boldsymbol{\theta}^*$ minimizing the objective in \eqref{proof WGAN thm 3} over the assumed convex set $\{h_{\boldsymbol{\theta}}:\, \boldsymbol{\theta}\in\Theta \}$, Lemma \ref{Lemma: Strong convex minimization} implies that
\begin{align*}
    \mathbb{E}\bigl[ \big\Vert \alpha_{\boldsymbol{\theta}}(\mathbf{X}) \nabla D^{\boldsymbol{\theta}}(\mathbf{X}) \big\Vert_2^2\bigr] - \mathbb{E}\bigl[ \big\Vert \alpha_{\boldsymbol{\theta}^*}(\mathbf{X}) \nabla D^{\boldsymbol{\theta}^*}(\mathbf{X}) \big\Vert_2^2\bigr]\:
    =\: &\mathbb{E}\bigl[ \big\Vert h_{\boldsymbol{\theta}}(\mathbf{X}) \big\Vert_2^2\bigr] - \mathbb{E}\bigl[ \big\Vert h_{\boldsymbol{\theta}^*}(\mathbf{X}) \big\Vert_2^2\bigr]  \\
    \ge\: & \Vert h_{\boldsymbol{\theta}} - h_{\boldsymbol{\theta}^*} \Vert_{\dot{H}^0}^2 \\
    =\: & \mathbb{E}\bigl[ \Vert h_{\boldsymbol{\theta}}(\mathbf{X}) - h_{\boldsymbol{\theta}^*}(\mathbf{X}) \Vert_2^2\bigr] \\
    =\: & \mathbb{E}\bigl[ \Vert \alpha_{\boldsymbol{\theta}}(\mathbf{X}) \nabla D^{\boldsymbol{\theta}}(\mathbf{X}) - \alpha_{\boldsymbol{\theta}^*}(\mathbf{X}) \nabla D^{\boldsymbol{\theta}^*}(\mathbf{X}) \Vert_2^2\bigr] \\
    \ge\: & \frac{\eta}{2} \mathbb{E}\bigl[ \Vert  \nabla D^{\boldsymbol{\theta}}(\mathbf{X}) -  \nabla D^{\boldsymbol{\theta}^*}(\mathbf{X}) \Vert_2^2\bigr] \\
    =\: & \frac{\eta}{2} \Vert D_{\boldsymbol{\theta}}- D_{\boldsymbol{\theta}^*} \Vert^2_{\dot{H}^1}.
\end{align*}
Here the last inequality follows from Lemma \ref{Lemma: thm 4} since every $D^{\boldsymbol{\theta}}$ has a unit-norm gradient with probability $1$ according to the data distribution $P_{\mathbf{X}}$. Therefore, we have proved that
\begin{equation}
    V(G_{\boldsymbol{\theta}},D^{\boldsymbol{\theta}}) - V(G_{\boldsymbol{\theta}^*},D^{\boldsymbol{\theta}^*}) \ge \frac{\eta}{2} \Vert D_{\boldsymbol{\theta}}- D_{\boldsymbol{\theta}^*} \Vert^2_{\dot{H}^1}. 
\end{equation}
The above inequality results in the following for every feasible $\boldsymbol{\theta}$
\begin{equation}
  V(G_{\boldsymbol{\theta}^*},D^{\boldsymbol{\theta}^*}) \le \max_{D\in\mathcal{D}} V(G_{\boldsymbol{\theta}},D) - \frac{\eta}{2} \Vert D - D_{\boldsymbol{\theta}^*} \Vert^2_{\dot{H}^1}.
\end{equation}
Hence, according to Proposition 2, we have shown that the pair $(G_{\boldsymbol{\theta}^*},D^{\boldsymbol{\theta}^*})$ is an $\eta$-proximal equilibrium with respect to the Sobolev norm $\Vert\cdot \Vert_{\dot{H}^1}$. 

\end{appendices}

\end{document}